\def\MODE{1} 
\definecolor{ed}{RGB}{225,0,0}
\DeclarePairedDelimiterX{\inp}[2]{\langle}{\rangle}{#1, #2}
\newtheorem{theorem}{Theorem}
\newtheorem{lemma}[theorem]{Lemma}
\newtheorem{definition}{Definition}
\newcommand{\eat}[1]{}
\newcommand{\R}{\mathbb{R}}
\newcommand{\E}{\mathbb{E}}
\newcommand{\var}{\operatorname{Var}}
\DeclarePairedDelimiter{\ceil}{\lceil}{\rceil}
\numberwithin{equation}{section}
\newlength{\dhatheight}
\newcommand{\WH}[1]{}
\newcommand{\ST}[1]{}
\newcommand{\XP}[1]{}
\newcommand{\systemnameAPIShift}{\textsc{MASA}}
\newcommand{\SCM}{\pmb{C}}
\newcommand{\deltaCM}{\Delta\pmb{C}}
\newcommand{\hatdeltaCM}{\Delta\hat{\pmb{C}}}
\newcommand{\SN}{\pmb N}
\newcommand{\Smu}{\hat{\pmb \mu}}
\newcommand{\truemu}{{\pmb \mu}}
\newcommand{\usc}{\pmb \sigma}
\newcommand{\hatusc}{\hat{\pmb \sigma}}
\newcommand{\hash}{\pmb H}
\newcommand{\sample}{\pmb z}
\newcommand{\loss}{\mathcal L}
\newcommand{\Exp}{\mathbb{E}}
\title{Did the Model Change? Efficiently Assessing Machine Learning API Shifts}
\author{
Lingjiao Chen, Tracy Cai, Matei Zaharia, James Zou\\\\
Stanford University} 
\date{}
\begin{document}

\maketitle

\begin{abstract}
Machine learning (ML) prediction APIs are increasingly widely used. An ML API can change over time due to model updates or retraining. This presents a key challenge in the
 usage of the API because it's often not clear to the user if and how the ML model has changed. Model shifts can affect downstream application performance and  also create oversight issues (e.g. if consistency is desired). In this paper, we initiate a systematic investigation of ML API shifts. We first quantify the performance shifts from 2020 to 2021 of popular ML APIs from Google, Microsoft, Amazon, and others on a variety of datasets. We identified significant model shifts in 12 out of 36 cases we investigated. Interestingly, we found several datasets where the API's predictions became significantly worse over time. This motivated us to formulate the API shift assessment problem at a more fine-grained level as estimating how the API model's confusion matrix changes over time when the data distribution is constant. Monitoring confusion matrix shifts using standard random sampling can require a large number of samples, which is expensive as each API call costs a fee. We propose a principled adaptive sampling algorithm, \systemnameAPIShift, to efficiently estimate confusion matrix shifts. MASA can accurately estimate the confusion matrix shifts in commercial ML APIs using up to $90\%$ fewer samples compared to random sampling. This work establishes ML API shifts as an important problem to study and provides a cost-effective approach to monitor such shifts.         

\end{abstract}

\section{Introduction}\label{Sec:FAMEShift:Intro}
Machine learning (ML) prediction APIs have made it dramatically easier to deploy ML applications.
For example, one can use Microsoft text API \cite{MicrosoftAPI} to determine the polarity of a text review written by a customer, or Google speech API \cite{GoogleSpeechAPI} to recognize  users' spoken commands received by a smart home device.
These APIs have been gaining popularity \cite{MLasS_MarketInfo, FrugalML2020}, as they avoid the need to collect data and train one's own models.

Monitoring and assessing the performance of those third-party ML APIs over time, however, are under-explored.
ML API providers continuously collect new data or change their model architectures \cite{IBMAPIUpdatePaper2020} to update their services, which could silently  help or hurt downstream applications' performance.
For example, as shown in Figure \ref{fig:FAMEShift:Example} (a) and (b), we observe a 7\% overall accuracy drop of IBM speech API on the AUDIOMNST dataset in March 2021 compared to its evaluation in March 2020. In our systematic study of 36 API + dataset combinations, there are 12 cases where the API's performance changed by more than $1\%$ on the same dataset from 2020 to 2021 (sometimes for the worse).
Such performance shifts are of serious concern  not only because of potential disruptions to downstream tasks but also because consistency is often required for audits and oversight. Therefore it is important to precisely assess shifts in an API model's predictions over time. In this assessment, it is often much more informative to quantify how the entire confusion matrix of the API has changed rather than just the overall accuracy. In the IBM case in Figure \ref{fig:FAMEShift:Example}, it is interesting that a major culprit of the drop in performance is the 2021 model mistaking ``four'' for ``five''. In other settings, changes in the confusion matrix could still cause issues even if the overall accuracy stays the same. 

\begin{figure*}[t]
	\centering
	\vspace{-0mm}
	\includegraphics[width=1.0\linewidth]{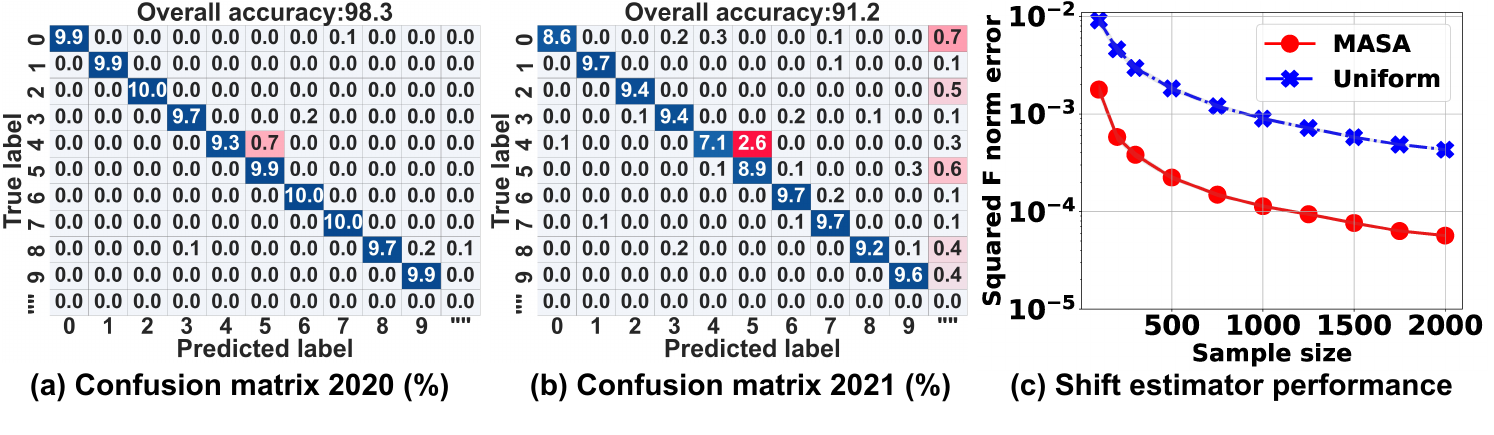}
	\vspace{-0mm}
	\caption{ML API shift for IBM speech recognition API on AMNIST, a spoken digit dataset. (a) and (b) give its (normalized) confusion matrix in April 2020 and 2021, respectively. 
	There is an overall 7\% accuracy drop. One factor is the 2021 model incorrectly predicting more ``four'' as ``five''. 
	(c) Given a sample budget, the proposed \systemnameAPIShift{} can assess the API shift with much smaller error in Frobenius norm compared to standard uniform sampling.}
	\label{fig:FAMEShift:Example}
\end{figure*}

In this paper, we formalize the problem of assessing API shifts as estimating changes in the confusion matrix on the same data set. The straightforward approach is to compare the API's prediction on randomly sampled data. However, this can require a large number of API calls to estimate the confusion matrix, which is too expensive since each API call costs a fee. 
To help address this challenge, we propose \systemnameAPIShift{}, a principled algorithm for \underline{M}L \underline{A}PI \underline{s}hift \underline{a}ssessments. 
\systemnameAPIShift{} efficiently estimates shifts in the API's confusion matrix by clustering the dataset and adaptively sampling data from different clusters to query the API. \systemnameAPIShift{} automates its sampling rate from different data clusters based on the uncertainty in the confusion matrix estimation.   
For example, it may query the ML API  on more samples with the true label ''four`` than  "one", if it is less sure about the estimated performance change on the former.  
Employing an upper-confidence-bound approach to estimate the uncertainties,   \systemnameAPIShift{} enjoys a low computation and space cost as well as a fast estimation error rate guarantee.

\systemnameAPIShift{}'s adaptive sampling substantially improves the quality of  estimation for API shifts. 
In extensive experiments on real world ML APIs, 
 \systemnameAPIShift{}'s assessment error is often an order of magnitude smaller than that of standard uniform sampling with same sample size (e.g., Figure \ref{fig:FAMEShift:Example} (c)). 
To reach the same tolerable estimation error, \systemnameAPIShift{} can reduce the required sample size by more than 50\%, sometimes up to 90\%.


\textbf{Contributions.} In short, our main contributions include:
\begin{enumerate}
    \item We demonstrate that commercial Ml APIs can experience significant performance shifts over time, and formulate ML API shift assessments via confusion matrix difference estimation as an important practical problem. 
    
    \item We propose \systemnameAPIShift{}, an algorithm to assess the ML API performance shifts efficiently. \systemnameAPIShift{} adaptively determines querying the ML API on which data points to minimize the shift estimation error under a sample size constraint.
    We show that \systemnameAPIShift{} enjoys a low computation cost and performance guarantee.
    
     \item We evaluate MASA on real world APIs from Google, Microsoft, Amazon and other providers for tasks including speech recognition, sentiment analysis, and facial emotion recognition. 
     \systemnameAPIShift{} leads to estimation errors an order of magnitude smaller than standard uniform sampling using the same sample size, or more than 90\% fewer samples to reach the same tolerable estimation error. 

\end{enumerate}

\paragraph{Related Work.}

\textbf{Distribution shifts in ML deployments:} Performance shifts in ML systems have been observed in  applications like disease diagnosis \cite{labelshift2018},  facial recognition \cite{MaskFaceDataset2020}, and molecular inference \cite{DistributionShiftBenchmark2021}. 
Most of them are attributed to distribution shifts, i.e., the distribution of the test and training datasets are different.
Distribution shifts are usually modeled as covariate shifts \cite{covariateshift2000,covshift2007,covshiftkernel2007}, referring to the feature distribution change, and label shifts \cite{labelshift2018,priorshift2002,Learningunderlabelshift2019,activelearninglabelshift2021}, referring to the label distribution change. 
API shifts are orthogonal to distribution shifts: instead of attributing the performance shifts to data distribution changes, API shifts concern with ML APIs changes which changes its predictions on the same dataset. The methods for detecting distribution drifts typically rely on changes in data feature statistics and can not detect changes in the API on the same data. To the best of our knowledge, this is the first work to systematically investigate ML API shifts.

\textbf{Deploying and monitoring ML APIs:} 
Several issues in deployed ML APIs have been studied.
For example, \cite{pmlr-v81-buolamwini18a} shows that strong biases toward minority may exist  in commercial APIs and \cite{ACLtest2020} reveals that several bugs in commercial APIs can be detected using checklists. \cite{Modelassertion2020} adopts program assertions to monitor and improve deployed ML models. 
\cite{FrugalML2020} considers the trade-offs between accuracy performance and cost via exploiting multiple APIs.
On the other hand, the proposed \systemnameAPIShift{}  focuses on estimating (silent) API performance changes cheaply and accurately, which has not been studied before.

\textbf{Stratified sampling and multi-arm bandit:} Stratified sampling has proved to be useful in various domains, such as approximate query processing \cite{AQPsampling2007}, population mean estimation \cite{SamplingMAB2012,SamplingMAB2015}, and complex integration problems \cite{samplingintegration2011}. 
A common approach is to model stratified sampling as a multi-arm bandit (MAB) problem: view each data partition as an arm, and set the regret as the variance of the obtained estimator.
While estimating confusion matrix shifts  incurs a unique regret compared to the standard literature on adaptive estimation.
Therefore, a new algorithm based on upper-confidence bound is developed in  \systemnameAPIShift{}.

\section{The API Shift Problem}\label{Sec:APIShift:Preli}
\eat{
\paragraph{Notation.}
Throughout this paper, tensors, matrices and vectors are denoted in bold, while scalars, sets and functions are in  standard script. 
Given a tensor $\mathbf{A} \in \mathbb{R}^{n_1\times n_2\times \cdots \times n_{m}}$, we let $\mathbf{A}_{i_1,i_2,\cdots, i_m}$ denote its entry at location $(i_1,i_2,\cdots, i_m)$. 
$[n]$ denotes $\{1,2,\cdots,n\}$.
}

\eat{
\begin{table}[t]
  \centering
  \small
  \caption{ML API and Datasets.}
    \begin{tabular}{|c|c||c|c|c|c|}
    \hline
    \multirow{2}[4]{*}{Sentiment Analysis} & ML API & Amazon & Google & Baidu &  \bigstrut\\
\cline{2-6}          & Dataset & YELP  & IMDB  & WAIMAI & SHOP \bigstrut\\
    \hline
    \multirow{2}[4]{*}{Facial Emotion} & ML API & Microsoft & Google & Face++ &  \bigstrut\\
\cline{2-6}          & Dataset & FER+  & EXPW  & RAFDB & AFNET \bigstrut\\
    \hline
    \multirow{2}[4]{*}{Speech Recognition} & ML API & IBM   & Google & Microsoft &  \bigstrut\\
\cline{2-6}          & Dataset & DIGIT & AMNIST & CMD   & FLUENT \bigstrut\\
    \hline
    \end{tabular}%
  \label{tab:addlabel}%
\end{table}%
}

\begin{figure}[t]\label{fig:FAMEShift:summary}
	\centering
	\includegraphics[width=1\linewidth]{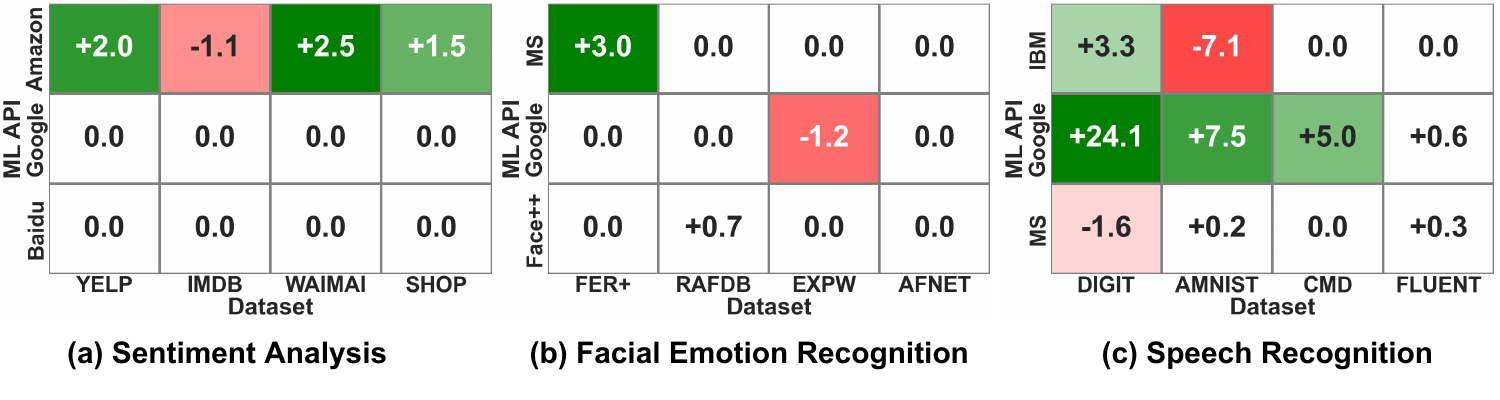}
\caption{Observed overall accuracy changes. Each row corresponds to an ML API, and each column represents a dataset. The entry is the overall accuracy difference between evaluation in spring 2020 and spring 2021. 
In 12 out of 36 cases, the API's overall accuracy changed by more than 1\%; this includes several cases of substantial drops in performance.
}
\end{figure}

\paragraph{Empirical assessment of ML API shifts.}
We start by making an interesting observation:  \textit{Commercial ML APIs' performance can change substantial over time on the same datasets}. We investigated twelve standard datasets across three different tasks, namely,  YELP \cite{Dataset_SEntiment_YELP}, IMDB \cite{Dataset_SEntiment_IMDB_ACL_HLT2011}, WAIMAI \cite{Dataset_SENTIMENT_WAIMAI}, SHOP \cite{Dataset_SENTIMENT_SHOP} for sentiment analysis, FER+ \cite{Dataset_FER2013}, RAFDB \cite{Dataset_FAFDB_li2017reliable}, EXPW \cite{Dataset_EXPW_SOCIALRELATION_ICCV2015}, AFNET \cite{Dataset_AFFECTNET_MollahosseiniHM19} for facial emotion recognition, and DIGIT 
\cite{Dataset_Speech_DIGIT}, AMNIST \cite{Dataset_Speech_AudioMNIST_becker2018interpreting}, CMD \cite{Dataset_Speech_GoogleCommand}, FLUENT \cite{Dataset_Speech_Fluent_LugoschRITB19}, for speech recognition. 
For each dataset, we evaluated three commercial ML APIs' accuracy in April 2020 and April 2021. Figure \ref{fig:FAMEShift:summary} summarizes the overall accuracy changes. 

There are several interesting empirical findings. First, API performance changes are quite common.  
In fact, as shown in Figure \ref{fig:FAMEShift:summary}, API performance changes exceeding 1\% occurred in about 33\% of all (36) considered ML API-dataset combinations. 
Since the data distribution remains fixed, such a change is due to ML APIs' updates. 
Second, the API updates can either help or hurt the accuracy performance depending on the datasets. 
For example, as shown in Figure \ref{fig:FAMEShift:summary} (a), the Amazon sentiment analysis API's accuracy increases on YELP, WAIMAI, and SHOP, but decreases on IMDB. 
In addition, the update of Microsoft facial emotion recognition API only affects performance on the FER+ dataset, as shown in Figure \ref{fig:FAMEShift:summary} (b). 
Another interesting finding is that the magnitude of the performance change can be quite different. 
In fact, most of the accuracy differences are between 1--3\%, but on DIGIT dataset, Google's accuracy change is more than 20\%.

\paragraph{Fine-grained assessment of API shift as changes in the confusion matrix.} 

Based on feedback from practitioners, accuracy change alone is insufficient, and attribution to per class change is often much more informative \cite{Imagenetbenckmark20,RebalancingClassifier2019,ConfusionMatrixclassimbalance2019}. %
Thus, a natural idea is to quantify an ML API's performance by its confusion matrix. We assess the change of the confusion matrix over time as a measure of API shift.

Formally,  consider an ML service for a classification task with $L$ labels.
For a data point $x$ from some domain $\mathcal{X}$, let   $\hat{y}(x) \in [L]$ denote its predicted label on $x$, and $y(x)$ be the true label.
For example, for sentiment analysis, $x$ is a text paragraph, and the task is to predict if the polarity of $x$ is positive or negative. Here $L=2$, and $\hat{y}(x)=1$ implies positive predicted label while ${y}(x)=2$ indicates negative true label. 
The confusion matrix is denoted by $\SCM\in\R^{L\times L}$ where $\SCM_{i,j} \triangleq \Pr[y(x) = i, \hat{y}(x)=j]$.
Given a confusion matrix of the ML API measured previously (say, a few months ago), $\SCM^o$, the ML API shift is defined as $\deltaCM \triangleq \SCM-\SCM^O$.

Using confusion matrix difference to quantify the ML API shift is informative. 
E.g,  the overall accuracy change is simply the trace of $\deltaCM$.
It also explains which label gets harder or easier for the updated API. 
Still consider, e.g., sentiment analysis. 
Given a 2\% overall accuracy change,
$\deltaCM_{1,2}=1\%$ and $\deltaCM_{2,1}=-3\%$ implies that the change is due to predicting less (-3\%) negative texts as positive, by sacrificing the accuracy on positive texts slightly (1\%).
This suggests that the API could have been updated with more negative training texts. 

\section{\systemnameAPIShift{}: ML API Shift Assessment}\label{Sec:FAMEShift:theory}

Now we present \systemnameAPIShift{}, an algorithmic framework efficiently to assess ML API shifts. 
Suppose the old confusion matrix $\SCM^o$ and a large labeled dataset $D$ are available. 
Given a query budget $N$, our goal is to generate $\hatdeltaCM$, an estimation of  the API shifts as accurately as possible by querying the ML API $\hat{y}(\cdot)$ on $N$ samples drawn from $D$. 

\begin{figure}[t]
	\centering
\includegraphics[width=1.0\linewidth]{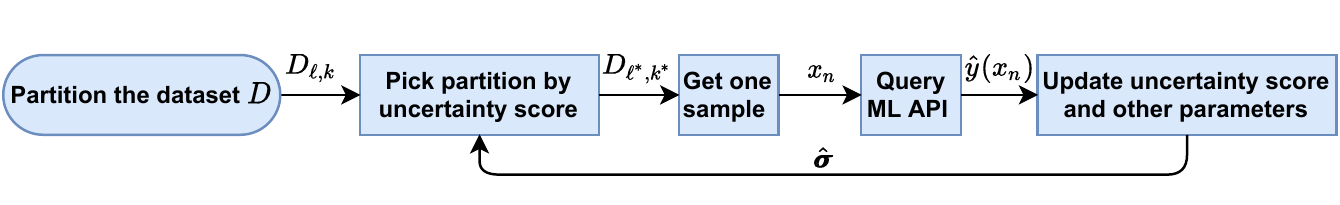}
     \caption{How \systemnameAPIShift{} works. \systemnameAPIShift{} first partitions the dataset. Then it picks which partition to sample based on some uncertainty measurement, queries the ML API on the drawn sample, and uses the API' prediction to update uncertainty and estimated shifts on this partition. This is repeated until the ML API has been queried $N$ times. 
     Finally, the estimated shifts on different partitions are aptly fused to obtain the desired API shifts. 
     }\label{fig:FAMEShift:workflow}
\end{figure}
 
 \systemnameAPIShift{} achieves its goal via an adaptive  sampling approach (Figure \ref{fig:FAMEShift:workflow}). 
It first divides the given dataset $D$ into several partitions (clusters).
Then it adaptively decides which sample to query the ML API in an iterative manner: at each iteration, it selects one data partition based on some uncertainty measure (defined below), and queries the ML API on one sample randomly drawn from this partition.
The API's prediction is obtained to update the uncertainty measure as well as the estimated shift $\hatdeltaCM$.
This process is repeated until the ML API has been queried $N$ times or if a stopping rule is reached. We explain each step in detail next.

\subsection{Data Partitioning}
A key intuition in \systemnameAPIShift{} is that not all samples are  equally informative for estimating API shifts. Consider, for example, a vision  API makes perfect predictions on ``dog'' images, and guesses randomly on ''cat`` pictures. 
The ``dog'' images are less informative, as even a small sample of  ``dog'' queries would tell that there is essentially no confusion for this class.
Intuitively, within a sample budget, an estimator with more samples from ``cat''  pictures should be more accurate overall than that from ``dog''.  
Generally, harder images tend to be more informative.

Thus, it is natural to partition all data points based on factors that may correlate with their informativeness, and sample from those partitions separately. 
In MASA, we use partitions $D_{i,k}$ that each contain the points with true label $i$ and difficulty level $k$. 
The difficulty level is an integer indicating how hard it is to predict the data point's label.
It needs not be perfect, and can be simply the discretized prediction confidence generated by some simple  ML models. A total of $L$ labels and $K$ distinct difficulty labels lead to a total of $LK$ partitions. 
If the uncertainty or variability of the ML API's prediction on each partition is different, then drawing a different number of samples from each partition may improve the shift assessment performance compared to standard uniform sampling. 
We verify this empirically in our evaluation (Section \ref{Sec:FAMEShift:Experiment}).

\subsection{Budget Allocation Problem}
Given the data partition, two questions arise: (i) how many samples should be drawn from each partition, and (ii) how to estimate the ML API shifts given available samples. 
The second question is relatively straightforward. Note that
the API shifts satisfy 
\begin{equation*}
\begin{split}
    \deltaCM_{i,j} = &\Pr[y(x)=i, \hat{y}(x)=j] - \SCM^o_{i,j}= \sum_{i=1}^{L} \sum_{k=1}^{K} \Pr[y(x)=i, \hat{y}(x)=j, x \in D_{i,k}]- \SCM^o_{i,j}\\
    = &\sum_{k=1}^{K} \Pr[\hat{y}(x)=j, x \in D_{i,k}] - \SCM^o_{i,j} = \sum_{k=1}^{K} \Pr[x\in D_{i,k}] \Pr[\hat{y}(x)=j| x \in D_{i,k}] - \SCM^o_{i,j}\\
\end{split}
\end{equation*}
where the first equation is by definition, the second is due to total probability rule, the third uses the fact that $x\in D_{i,k}$ implies $y(x)=i$, and the last equation applies conditional probability. 
Here, $\Pr[x\in D_{i,k}]$ is simply ratio of size of partition $D_{i,k}$ and entire dataset $D$, known a prior. 
To assess $\deltaCM_{i,j}$,  we only need to estimate $\Pr[\hat{y}(x)=j| x \in D_{i,k}]$, the predicted label distribution on partition $D_{i,k}$.
It can be estimated simply via the frequency of predicting label $j$ among all available samples drawn from $D_{i,k}$.

Now we consider the sample allocation problem. 
For ease of notation, we denote $\Pr[x_i\in D_{i,j}]$ by $\pmb p_{i,k}$,  $\Pr[\hat{y}(x)=j| x \in D_{i,k}]$ and its estimation by $\truemu_{i,k,j}$  and $\Smu_{i,k,j}$, respectively.
Then for deterministic sample allocations, 
the squared Frobenius norm error can be written as 
\begin{equation*}
\begin{split}
\Exp\left[\| \deltaCM - \hatdeltaCM \|_F^2\right] = &\sum_{i,j} \Exp\left(\deltaCM_{i,j} - \hatdeltaCM_{i,j} \right)^2 = \sum_{i,j} \Exp\left(\sum_{k}^{} \pmb p_{i,k}[\truemu_{i,k,j}-\Smu_{i,k,j}] \right)^2  \\= &  \sum_{i,j,k} \pmb p_{i,k}^2 \Exp\left([\truemu_{i,k,j}-\Smu_{i,k,j}] \right)^2    
\end{split}
\end{equation*}
Thus we use the loss $\loss(\mathcal{A},N)\triangleq\sum_{i,j,k} \pmb p_{i,k}^2 \Exp\left([\truemu_{i,k,j}-\Smu_{i,k,j}] \right)^2$ to measure the performance of any sample budget allocation algorithm $\mathcal{A}$ using $N$ samples.
For any fixed $N$, our goal is to find a sample budget allocation algorithm $A$ to minimize the loss $\loss(A,N)$. 
Notably, we can generalize it for other scenarios by replacing $(\deltaCM - \hatdeltaCM)$ with $\pmb W \odot (\deltaCM - \hatdeltaCM)$, where $\odot$ is element-wise multiplication and $\pmb W$ is an  $L\times L$ weight matrix.
Different choices of $\pmb W$ can penalize the error of each entry in $\hatdeltaCM$ differently.
For instance, if $\pmb W$ is identical matrix, then the focus is simply the overall accuracy. across all labels.
If $\pmb W_{1,2}=1$ and $\pmb W_{i,j}=0, \forall  (i,j) \not=(1,2)$, then we are only interested in incorrectly predicting label 1 as label 2. 
The methodology and analysis for the standard Frobneuis norm can be easily adopted for the general formulation, and we focus on standard Frobneuis norm for exposition purposes.

\subsection{Uncertainty Score and Optimal Allocation }
The optimal sample allocation is directly connected to how informative each data partition is.
To see this, let us first introduce the notation of \textit{uncertainty score} for each data partition.  

\begin{definition}
$\usc^2_{i,k} \triangleq (1-\sum_{j=1}^{{L}} \Pr^2[\hat{y}(x)=j| x \in D_{i,k} ] )$ denotes the uncertainty score of $D_{i,k}$. \end{definition}

The uncertainty score quantifies how informative each $D_{i,k}$ is by subtracting from 1 the sum of the square of each label's probability mass.
The uncertainty score is related to collision entropy (discussed in Appendix \ref{sec:FAMEShift:techdetails}), and determines the optimal allocation as follows.

\begin{lemma}\label{lemma:FAMEShift:optimalsampling}
Let $A^*$ be the sample allocation algorithm that achieves  the smallest expected squared Frobenius norm error.
Then the number of samples drawn from $D_{i,k}$  by $A^*$ is  %
\begin{equation*}
    \SN_{i,k}^* = \frac{\pmb p_{i,k} \usc_{i,k}}{\sum_{i,k}\pmb p_{i,k} \usc_{i,k}} N
\end{equation*}
\end{lemma}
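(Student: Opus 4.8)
The plan is to start from the decoupled loss expression already derived in the excerpt, reduce each per-partition term to a variance of a multinomial frequency estimator, recognize the uncertainty score, and finally solve the resulting budget-allocation program by Cauchy--Schwarz. Concretely, fix any deterministic allocation drawing $N_{i,k}$ samples from $D_{i,k}$ with $\sum_{i,k} N_{i,k}=N$. By the excerpt, $\loss(\mathcal{A},N)=\sum_{i,j,k}\pmb p_{i,k}^2\,\Exp[(\truemu_{i,k,j}-\Smu_{i,k,j})^2]$, where the decoupling across $k$ uses that samples from distinct partitions are independent and each $\Smu_{i,k,j}$ is unbiased, so the cross terms vanish in expectation.

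First I would evaluate the inner term. Since $\Smu_{i,k,j}$ is the empirical frequency of the event $\hat y(x)=j$ over $N_{i,k}$ i.i.d.\ draws from $D_{i,k}$, the indicator of $\hat y(x)=j$ is Bernoulli$(\truemu_{i,k,j})$, so $\Smu_{i,k,j}$ is unbiased and $\Exp[(\truemu_{i,k,j}-\Smu_{i,k,j})^2]=\var(\Smu_{i,k,j})=\truemu_{i,k,j}(1-\truemu_{i,k,j})/N_{i,k}$. Summing over $j$ and using $\sum_{j=1}^{L}\truemu_{i,k,j}=1$ collapses the linear term, giving $\sum_j\Exp[(\truemu_{i,k,j}-\Smu_{i,k,j})^2]=(1-\sum_j\truemu_{i,k,j}^2)/N_{i,k}=\usc_{i,k}^2/N_{i,k}$, which is exactly where the uncertainty score enters. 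Substituting back yields the clean form $\loss(\mathcal{A},N)=\sum_{i,k}\pmb p_{i,k}^2\usc_{i,k}^2/N_{i,k}$, reducing the problem to minimizing $\sum_{i,k} a_{i,k}^2/N_{i,k}$ with $a_{i,k}\triangleq\pmb p_{i,k}\usc_{i,k}$ subject to $\sum_{i,k}N_{i,k}=N$.

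I would close this by Cauchy--Schwarz: $\big(\sum_{i,k} a_{i,k}^2/N_{i,k}\big)\big(\sum_{i,k} N_{i,k}\big)\ge\big(\sum_{i,k} a_{i,k}\big)^2$, with equality iff $N_{i,k}\propto a_{i,k}$. Combined with the budget constraint this forces $\SN_{i,k}^*=\pmb p_{i,k}\usc_{i,k}\,N/\sum_{i,k}\pmb p_{i,k}\usc_{i,k}$, matching the claim; a Lagrange-multiplier computation identifies the same stationary point, and convexity of $N_{i,k}\mapsto 1/N_{i,k}$ certifies it as the global minimizer. The main subtlety I would flag is integrality: $\SN_{i,k}^*$ need not be an integer, so the lemma should be read as the continuous relaxation (rounding absorbs an $O(1/N)$ correction). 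A secondary caveat is that the unbiased-frequency/variance step assumes i.i.d.\ sampling (with replacement, or partitions large relative to their per-partition budget); without-replacement sampling introduces a finite-population factor that is negligible in this regime.
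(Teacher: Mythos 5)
Your proposal is correct and follows essentially the same route as the paper's proof: decoupling the loss into per-partition Bernoulli variances, summing over $j$ to collapse the linear term and recognize $\usc_{i,k}^2$, and closing with the identical Cauchy--Schwarz argument whose equality case forces $\SN_{i,k}^*\propto \pmb p_{i,k}\usc_{i,k}$. Your added caveats (integrality of the allocation, i.i.d.\ versus without-replacement sampling) are sensible refinements that the paper's proof silently glosses over, but they do not change the argument.
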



Lemma \ref{lemma:FAMEShift:optimalsampling} shows that the optimal budget allocation depends on the uncertainty score, but in practice,  we do not know the uncertainty score before drawing samples and querying the ML API.
Thus, a natural question is how to estimate the uncertainty score $\usc_{i,k}^2$.
Suppose $n$ samples, $x_1,x_2,\cdots, x_n$, are drawn from  partition $D_{i,k}$.
Then we can estimate $\usc_{i,k}^2$ by %
\begin{equation}\label{equ:FAMEAPIShift:uncertainscoreestimator}
\hatusc_{i,k}^2 \triangleq  1 - \frac{1}{ n (n-1)} \sum_{s=1}^{n} \sum_{t:t=1, t\not=s}^{n} \mathbbm{1}_{\hat{y}(x_s)=\hat{y}(x_t)}
\end{equation}

Note that naively computing the estimated uncertainty score as above can incur a computational cost quadratic in the number of samples $n$, which is prohibitive for large $n$. Later we will show how this can be updated in an online fashion with total cost linear in $n$.

\subsection{An Uncertainty-aware Adaptive Sampling Algorithm}
\begin{algorithm}[t]
\caption{\systemnameAPIShift{}'s ML API shift assessment algorithm.}
	\label{Alg:FAMEShift:MainAlg}
	\SetKwInOut{Input}{Input}
	\SetKwInOut{Output}{Output}
	\Input{ML API $\hat{y}(\cdot)$, query budget $N$, partitions $D_{i,k}$, $\pmb p\in\R^{L\times K}$, $\SCM^o\in\R^{L\times L}$, and $a> 0$}
	\Output{Estimated ML API Shift $\hatdeltaCM \in \R^{L \times \hat{L}}$}


  Set $ \SN=\pmb 0_{L\times K}, \Smu= \pmb 0_{L\times K\times L}, \hatusc= \pmb 0_{L\times K}, \hash = \pmb 0_{L\times K
  \times L}$  \hfill{$\triangleright$ Initialization}
  
  \For{$n\gets 1$ \KwTo $N$}
  {
 $ (i^*,k^*) \gets \begin{cases}
  (i,k), & \textit{if  $\SN_{i,k}<2$}\\
  \arg \max_{i,k} \frac{\pmb p_{i, k}}{\SN_{i, k}} \left(\hatusc_{i,k} +  \sqrt[4]{\frac{a}{\SN_{i,k}}}\right), & \textit{o/w}
  \end{cases}$ \hfill{$\triangleright$ Determine data partition}

  Sample $x_n$ from $D_{i^*,k^*}$ and query the ML API to obtain $\hat{y}(x_n)$
  
  $ \SN_{i^*,k^*}\gets \SN_{i^*,k^*}+1$ \hfill{$\triangleright $ Update sample size}
  
$\Smu_{i^*,k^*,j} \gets \Smu_{i^*,k^*,j} + \frac{\mathbbm{1}_{\hat{y}(x_n)=j}-\Smu_{i^*,k^*,j}}{\SN_{i^*,k^*}}, \forall j \in [L] $
   \hfill{$\triangleright$ Update predicted label distribution}
   
   $\hatusc_{i^*,k^*}^2 \gets\begin{cases}
   \frac{1}{2} \hash_{i^*,k^*,\hat{y}(x_n)}, & \textit{if $\SN_{i^*,k^*}<2$}\\
   \hatusc_{i^*,k^*}^2 + \frac{1 -  \frac{\hash_{i^*,k^*,\hat{y}(x_n)}}{\SN_{i^*,k^*}-1} - \hatusc^2_{i^*,k^*}}{\SN_{i^*,k^*}} , & \textit{o/w}\end{cases}$  \hfill{$\triangleright$ Update uncertainty score}
  
    $\hash_{i^*,k^*,\hat{y}(x_n)} \gets \hash_{i^*,k^*,\hat{y}(x_n)} +1$ \hfill{$\triangleright$ Update label frequency }
     
  }

  Return $\hatdeltaCM \in \R^{L\times L}$ 
  where $\hatdeltaCM_{i,j} = \sum_{k=1}^{K} \pmb p_{i,k} \Smu_{i,k,j}-\SCM^o_{i,j}, \forall i,j$  \hfill{$\triangleright$ Confusion estimation }
\end{algorithm}

Now we have a chicken-and-egg problem: estimating the uncertainty scores is needed to find the optimal sample allocation, but sampling from all partitions is needed to estimate their uncertainty scores.
To overcome this issue, we adopt an iterative sampling approach, as shown in Algorithm \ref{Alg:FAMEShift:MainAlg}.
At each iteration, it alternates between (i) uncertainty score-based new sample selection (line 3 - 4) and (ii) uncertainty score and predicted label distribution update using the new sample (line 5 - 8).
After querying the ML API $N$ times, the API shifts are obtained by (iii) fusing the estimated predicted label distribution on each  partition (line 10). 
We give the details as follows. 

\paragraph{Uncertainty score and predicted label distribution update.}
After obtaining the predicted label for a sample from partition $D_{i^*,k^*}$, we need to update (i) the number of samples already drawn from this partition, denoted by $\SN_{i^*,k^*}$, (ii) the estimated predicted label distribution, denoted by
 $\Smu_{i^*,k^*,j}, \forall j$, and  (iii)  the estimated uncertainty score,  $\hatusc^2_{i^*,k^*}$.
 For $\SN_{i^*,k^*}$ and $\Smu_{i^*,k^*,j}$ (line 5-6), we use standard incremental update approach  \cite{onlinesample}, which requires constant space and computational cost per iteration. 
For $\hatusc^2_{i^*,k^*}$, naive incremental update for $\hatusc^2_{i^*,k^*}$  using equation  \ref{equ:FAMEAPIShift:uncertainscoreestimator} requires comparing computational cost linear in number of drawn samples at each iteration.
This leads to an overall computational complexity quadratic in the number of samples.
To overcome this, we additionally maintain the number of label $j$ being predicted among all samples drawn from $D_{i^*,k^*}$, denoted by $\hash_{i^*,k^*,j}$ (line 8).
The key insight is that  the number of label $j$ being predicted is sufficient statistics for uncertainty score estimation.
Furthermore,  incrementally updating the number of predicted labels $\hash_{i^*,k^*,j}$ and the uncertainty score given $\hash_{i^*,k^*,j}$ is both fast. 
This enables a fast incremental update of  $\hatusc^2_{i^*,k^*}$ (line 7).

 \paragraph{Uncertainty score-based new sample selection.}
To determine on which partition to select a new sample, we use an upper-confidence-bound approach on the weighted uncertainty score (second case in line 3), after ensuring two samples have been drawn from each partition (first case in line 3).
Two samples are needed for an initial estimation of each partition's uncertainty score.
Here, we use a parameter $a>0$ to balance between exploiting knowledge of uncertainty score ($\hatusc^2_{i,k}$) and exploring more partitions ($\sqrt[4]{\frac{1}{\SN_{i,k}}}$).
To see this, consider the extreme.
If $a$ is infinite, $\sqrt[4]{\frac{1}{\SN_{i,k}}}$ dominates, and the algorithm always selects the partition with least number of observed samples, which forces the same number of samples from each partition finally. 
If $a=0$, $N$ is infinite and thus $\hatusc_{i,k} = \usc_{i,k}$, the algorithm forces $\frac{\pmb p_{i, k}}{\SN_{i, k}} \usc_{i,k}$ to be identical $\forall i,k$.
Thus, sample number of partition $D_{i,k}$ is proportional to $\pmb p_{i,k} \usc_{i,k}$, which is the optimal allocation. 

We quantify the performance of  \systemnameAPIShift{} v.s. the optimal allocation algorithm $A^*$ as follows. %

\begin{theorem}\label{thm:FAMEAPIShift:mainbound}
If $a>2 \log L + \log K + \frac{9}{4}\log N$ and $N>4LK$, 
then we have
\begin{equation*}
    \loss(\systemnameAPIShift{},N) - \loss(\mathcal{A}^*, N) \leq O(N^{-\frac{5}{4}}\log^{\frac{1}{4}} N)
\end{equation*}
\end{theorem}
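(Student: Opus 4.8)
The plan is to reduce the Frobenius regret to a statement about how closely \systemnameAPIShift{} matches the oracle sample counts, and then to control those counts with an upper-confidence / water-filling argument. First I would rewrite the loss per partition. For $D_{i,k}$ the estimator $\Smu_{i,k,\cdot}$ is an empirical label frequency, so conditioned on its count $\SN_{i,k}=n$ the categorical variance identity gives $\sum_j \Exp[(\truemu_{i,k,j}-\Smu_{i,k,j})^2 \mid n]=(1-\sum_j \truemu_{i,k,j}^2)/n=\usc_{i,k}^2/n$. Hence $\loss(\mathcal{A},N)=\sum_{i,k}\pmb p_{i,k}^2\usc_{i,k}^2\,\Exp[1/\SN_{i,k}]$, while Lemma~\ref{lemma:FAMEShift:optimalsampling} gives $\SN^*_{i,k}=\pmb p_{i,k}\usc_{i,k}N/S$ with $S\triangleq\sum_{i,k}\pmb p_{i,k}\usc_{i,k}$, so $\loss(\mathcal{A}^*,N)=S^2/N$ and $1/\SN^*_{i,k}=S/(\pmb p_{i,k}\usc_{i,k}N)$. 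Writing $r_{i,k}\triangleq \pmb p_{i,k}\usc_{i,k}/\SN_{i,k}$, the regret telescopes as $\loss(\systemnameAPIShift{},N)-\loss(\mathcal{A}^*,N)=\sum_{i,k}\pmb p_{i,k}\usc_{i,k}\big(\Exp[r_{i,k}]-S/N\big)\le S\cdot\max_{i,k}\Exp|r_{i,k}-S/N|$, so it suffices to show every normalized index $r_{i,k}$ concentrates around the common value $S/N$ within $\epsilon=O(N^{-5/4}\log^{1/4}N)$.

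Next I would establish the concentration that justifies the fourth-root bonus. The quantity $1-\hatusc_{i,k}^2$ in \eqref{equ:FAMEAPIShift:uncertainscoreestimator} is a degree-two U-statistic with a $\{0,1\}$ kernel, so Hoeffding's inequality for U-statistics yields $\Pr[|\hatusc_{i,k}^2-\usc_{i,k}^2|>\sqrt{a/n}]\le 2\exp(-\Theta(a))$ after $n$ draws. Using $|\sqrt{x}-\sqrt{y}|\le\sqrt{|x-y|}$, this converts to $|\hatusc_{i,k}-\usc_{i,k}|\le (a/\SN_{i,k})^{1/4}$, which is exactly the bonus in line~3 and the correct confidence width for the standard-deviation-scale quantity $\usc_{i,k}$ even when it is near zero. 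A union bound over the $LK$ partitions and the $\le N$ rounds, using $a>2\log L+\log K+\tfrac94\log N$, makes the complementary (bad) event have probability $O(N^{-5/4})$; on it $1/\SN_{i,k}\le 1/2$, so its contribution to the expected regret is already $O(N^{-5/4})$.

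On the good event I would run the water-filling argument. The played index obeys $r_{i,k}\le B_{i,k}(\SN_{i,k})\le r_{i,k}+2\pmb p_{i,k}a^{1/4}\SN_{i,k}^{-5/4}$, and since \systemnameAPIShift{} always pulls the largest index while $B_{i,k}(\cdot)$ is decreasing in the count, the final indices are equalized to a common level $\lambda^*\approx S/N$ up to a one-pull slack. In the non-degenerate regime where each $\SN_{i,k}=\Theta(N)$, this forces $|r_{i,k}-S/N|\le 2\pmb p_{i,k}a^{1/4}\SN_{i,k}^{-5/4}+\text{(equalization slack)}=O(N^{-5/4}\log^{1/4}N)$, i.e.\ $\epsilon=O(N^{-5/4}\log^{1/4}N)$; combined with the telescoped bound of the first paragraph this gives the claim.

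The \emph{main obstacle} is this last index-equalization step: I must turn the ``max index is pulled'' invariant into a uniform, two-sided bound on all $r_{i,k}$ simultaneously, while the bonus itself depends on the running count, and I must handle near-degenerate partitions ($\usc_{i,k}\approx0$) whose exploration samples would otherwise inflate the wasted budget --- these are precisely where the hidden constants and the $N>4LK$, $\SN_{i,k}=\Theta(N)$ assumptions enter. A secondary but real subtlety is the first step: because the counts $\SN_{i,k}$ are adaptive, the exact identity $\Exp[\|\cdot\|^2]=\usc_{i,k}^2\,\Exp[1/\SN_{i,k}]$ fails for random stopping, so I would instead bound the per-partition error by a Doob maximal inequality over the high-probability range of $\SN_{i,k}$, which costs only logarithmic factors consistent with the $\log^{1/4}N$ already in the bound.
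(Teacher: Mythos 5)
Your outline reproduces the paper's architecture almost exactly: a fourth-root concentration bound for $\hatusc_{i,k}$ obtained from a bounded-difference/U-statistic inequality (the paper's Lemma~\ref{lemma:FAMEShift:highprob}, proved via McDiarmid), a good/bad-event split in which the choice $a>2\log L+\log K+\tfrac{9}{4}\log N$ makes the bad event contribute $O(N^{-\frac{5}{4}})$, and a last-pull equalization argument turning the UCB rule into the count bound $1/\SN_{i,k}\le (1/\SN^{*}_{i,k})\bigl[1+O(N^{-\frac{1}{4}}\log^{\frac{1}{4}}N)\bigr]$ on the good event (the paper's Lemmas~\ref{lemma:FAMEShift:algproperty} and~\ref{lemma:FAMEShift:samplenumberbound}). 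The step you flag as the ``main obstacle'' is resolved in the paper exactly as you sketch it, by comparing indices at the last iteration each partition is pulled, so that part of your plan is sound. The genuine gap is the issue you demote to a ``secondary subtlety'': how to compute the expected squared error when the counts $\SN_{i,k}$ are random and adaptive.

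Here is why your proposed fix does not reach the stated rate. Write $M_n=\sum_{t=1}^{n}(\mathbbm{1}_{\sample_{i,k,t}=j}-\truemu_{i,k,j})$; the quantity to control is $\Exp\bigl[\SN_{i,k}^{-2}M_{\SN_{i,k}}^{2}\mathbbm{1}_{A}\bigr]$, and because $\loss(\mathcal{A}^{*},N)=N^{-1}(\sum_{i,k}\pmb p_{i,k}\usc_{i,k})^{2}=\Theta(N^{-1})$, the theorem requires matching the leading $\Theta(N^{-1})$ term with constant exactly $1$, leaving only an $O(N^{-\frac{5}{4}}\log^{\frac{1}{4}}N)$ remainder. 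A direct Doob $L^{2}$ maximal bound $\Exp[\max_{n\le n_{+}}M_{n}^{2}]\le 4n_{+}\var$ inflates the leading constant by $4$, which is fatal. The sharper version --- split $M_{\SN}=M_{n_{-}}+(M_{\SN}-M_{n_{-}})$ over the high-probability range $[n_{-},n_{+}]$ with $n_{+}-n_{-}=O(N^{\frac{3}{4}}\log^{\frac{1}{4}}N)$ --- gives $\Exp[M_{n_{-}}^{2}]=n_{-}\var$ and a Doob bound $4(n_{+}-n_{-})\var$ on the increment, but the Cauchy--Schwarz cross term is $2\sqrt{n_{-}\cdot 4(n_{+}-n_{-})}\,\var=O(N^{\frac{7}{8}}\log^{\frac{1}{8}}N)$, which after dividing by $\SN_{i,k}^{2}=\Theta(N^{2})$ contributes $O(N^{-\frac{9}{8}}\log^{\frac{1}{8}}N)$ to the regret --- polynomially worse than $N^{-\frac{5}{4}}\log^{\frac{1}{4}}N$, not ``only logarithmic factors.'' Removing that cross term requires optional stopping of the martingale at $\SN_{i,k}$, which is precisely re-deriving the tool the paper actually uses: Wald's second identity (Lemma~\ref{lemma:FAMEShift:wald-inequality}). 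Since $\SN_{i,k}$ is a stopping time, $\Exp[M_{\SN_{i,k}}^{2}]=\Exp[\SN_{i,k}]\var$ holds exactly; the paper pulls the deterministic good-event bound on $\SN_{i,k}^{-2}$ out of the expectation and then uses $\sum_{i,k}\Exp[\SN_{i,k}]=N$, so the leading term collapses to exactly $\loss(\mathcal{A}^{*},N)$ with no constant loss. Replace your Doob step with this stopping-time identity and your proof goes through; as written, the adaptive-count step fails to deliver the claimed $N^{-\frac{5}{4}}$ rate.
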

Roughly speaking, Theorem \ref{thm:FAMEAPIShift:mainbound} shows that the loss gap between the API shift estimated by \systemnameAPIShift{} and the (unreachable) optimal allocation algorithm ceases in the rate of $N^{-5/4}$. 
Note that the loss of the optimal allocation decays  in the rate of $N^{-1}$. Thus, as $N$ gets larger and larger, the relative gap becomes more and more negligible. 
Another advantage of \systemnameAPIShift{} is that its computation cost is only linear in $N$, making it suitable for large $N$.
$N>4 LK$ is needed to obtain a few samples from each data partition for initial uncertainty estimation. 
There are in total $LK$ many partitions. 
Thus, the initial sample size linear in $LK$ is necessary.

\begin{figure}[t]\label{fig:FAMEShift:casestudy}
	\centering
	\begin{subfigure}[CM 2020]{\includegraphics[width=0.24\linewidth]{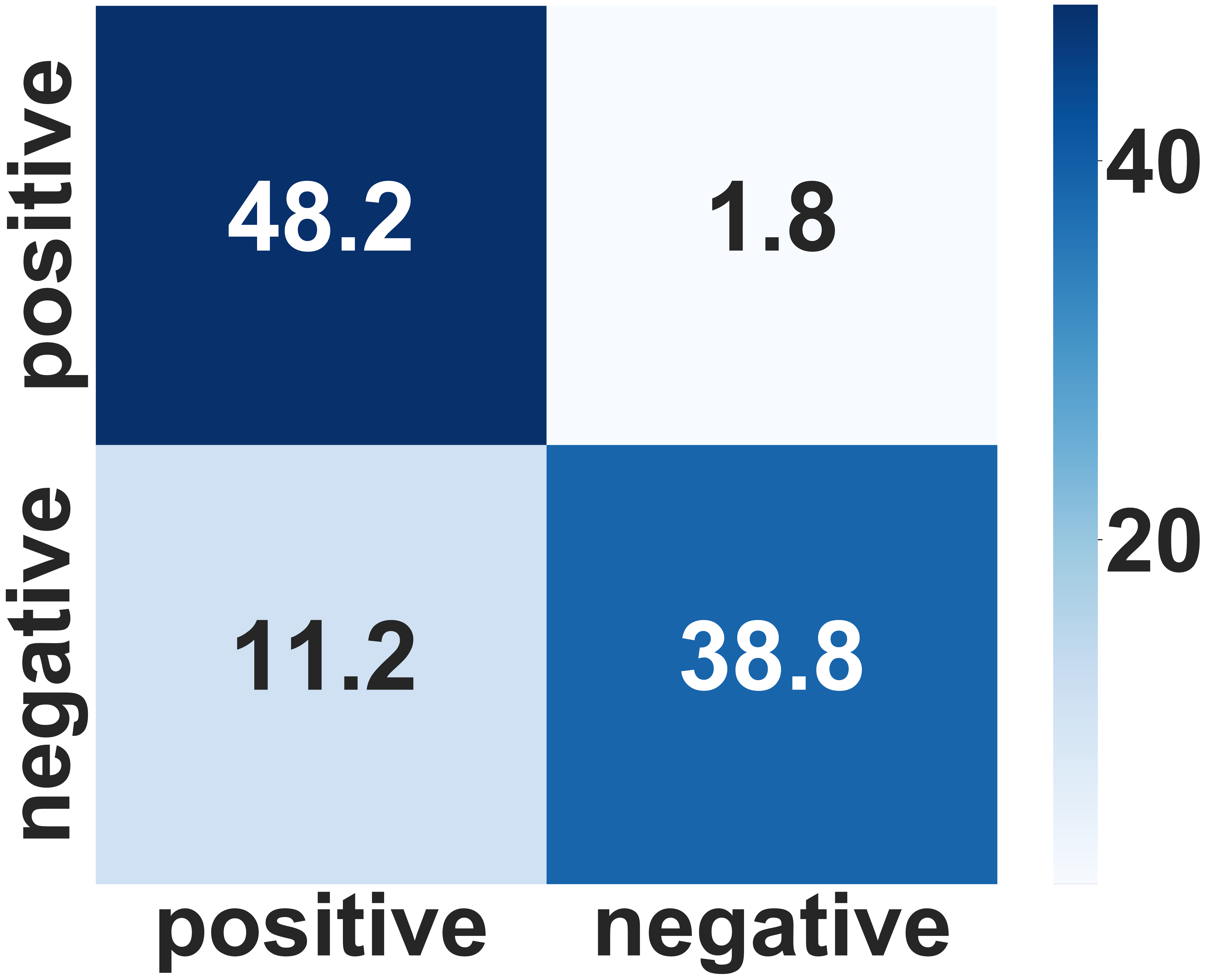}}
    \end{subfigure}
    \begin{subfigure}[CM 2021]{\includegraphics[width=0.24\linewidth]{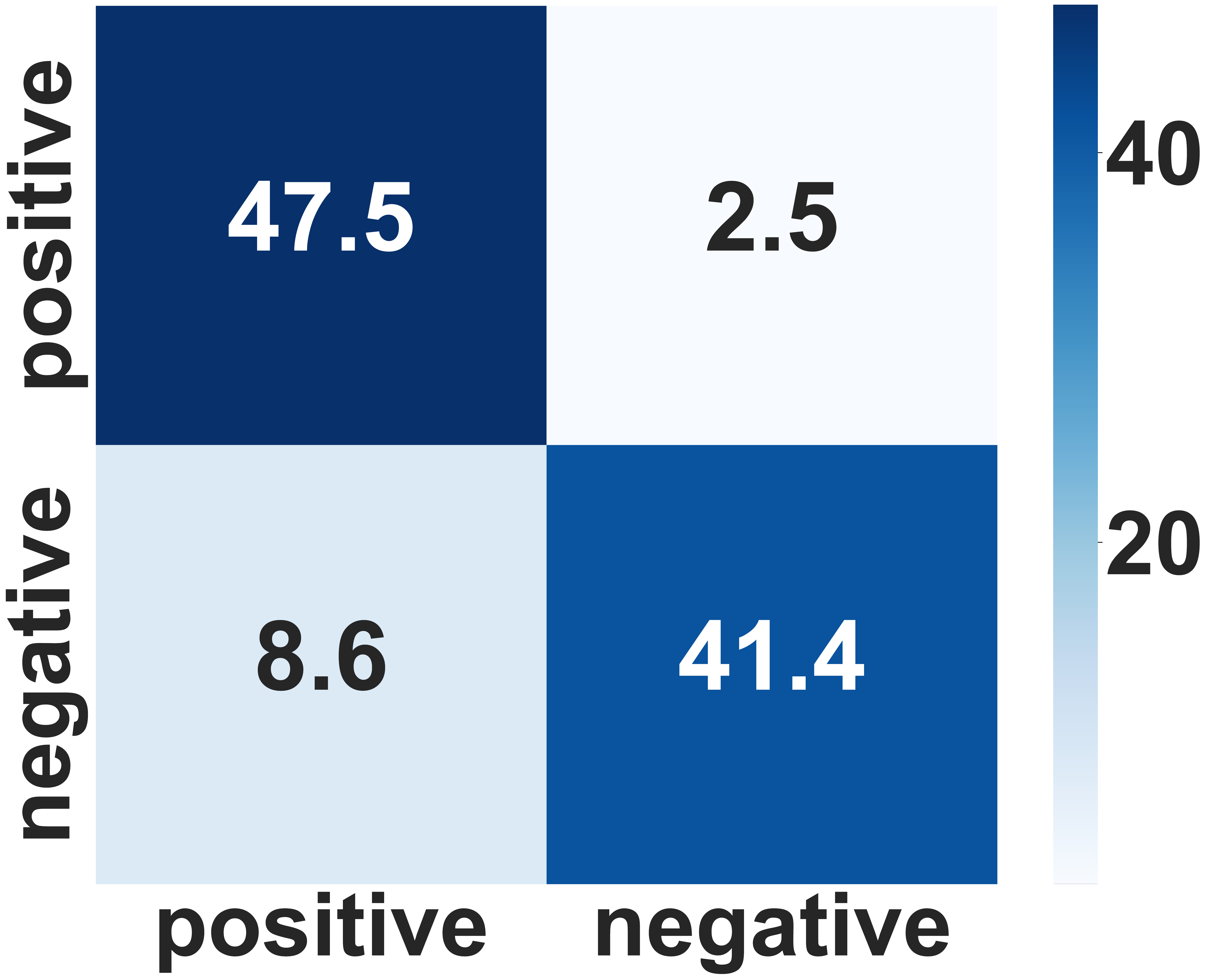}}
    \end{subfigure}
    \begin{subfigure}[True API shift ]{\includegraphics[width=0.24\linewidth]{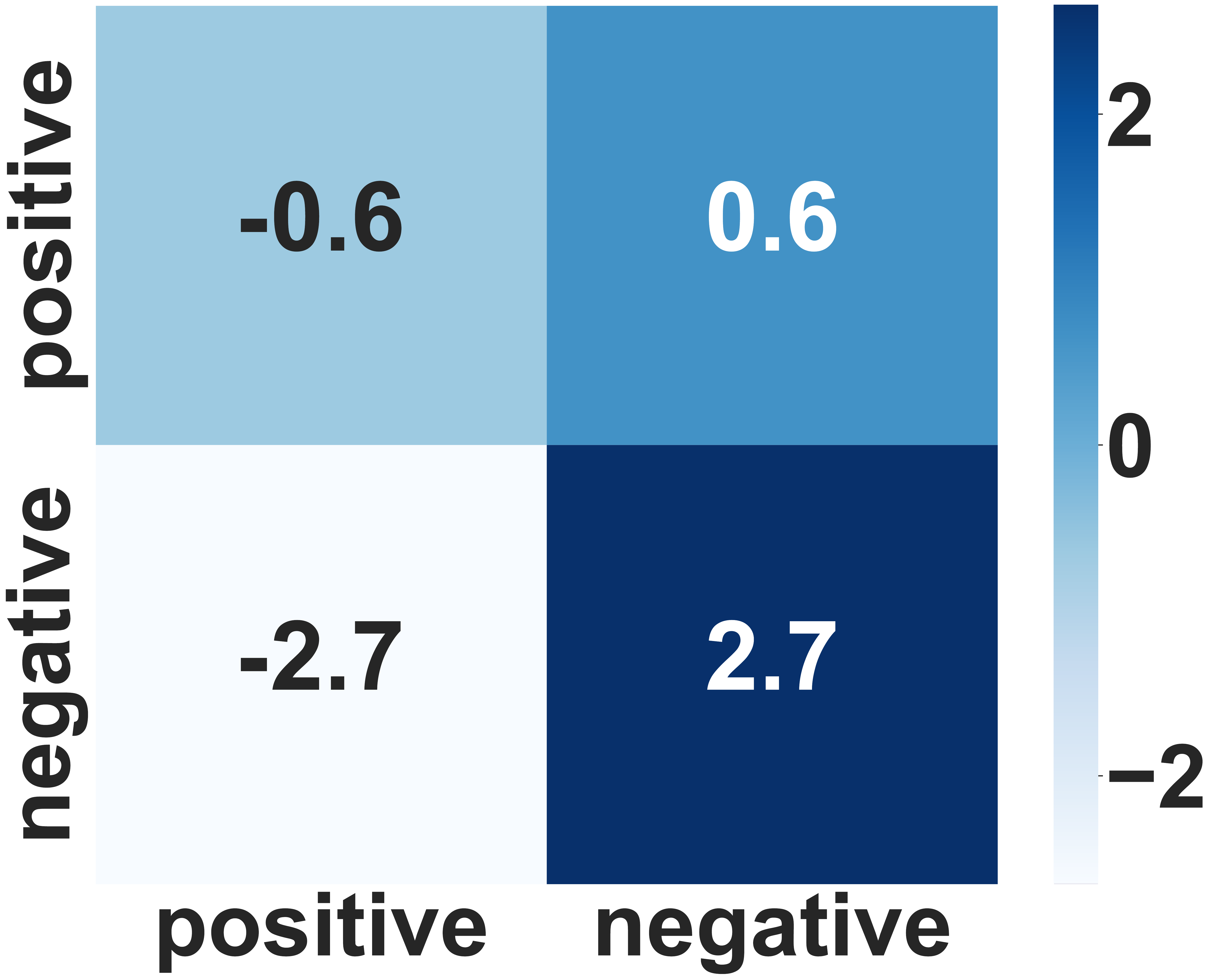}}
    \end{subfigure}
    \begin{subfigure}[Estimated Shift]{\includegraphics[width=0.24\linewidth]{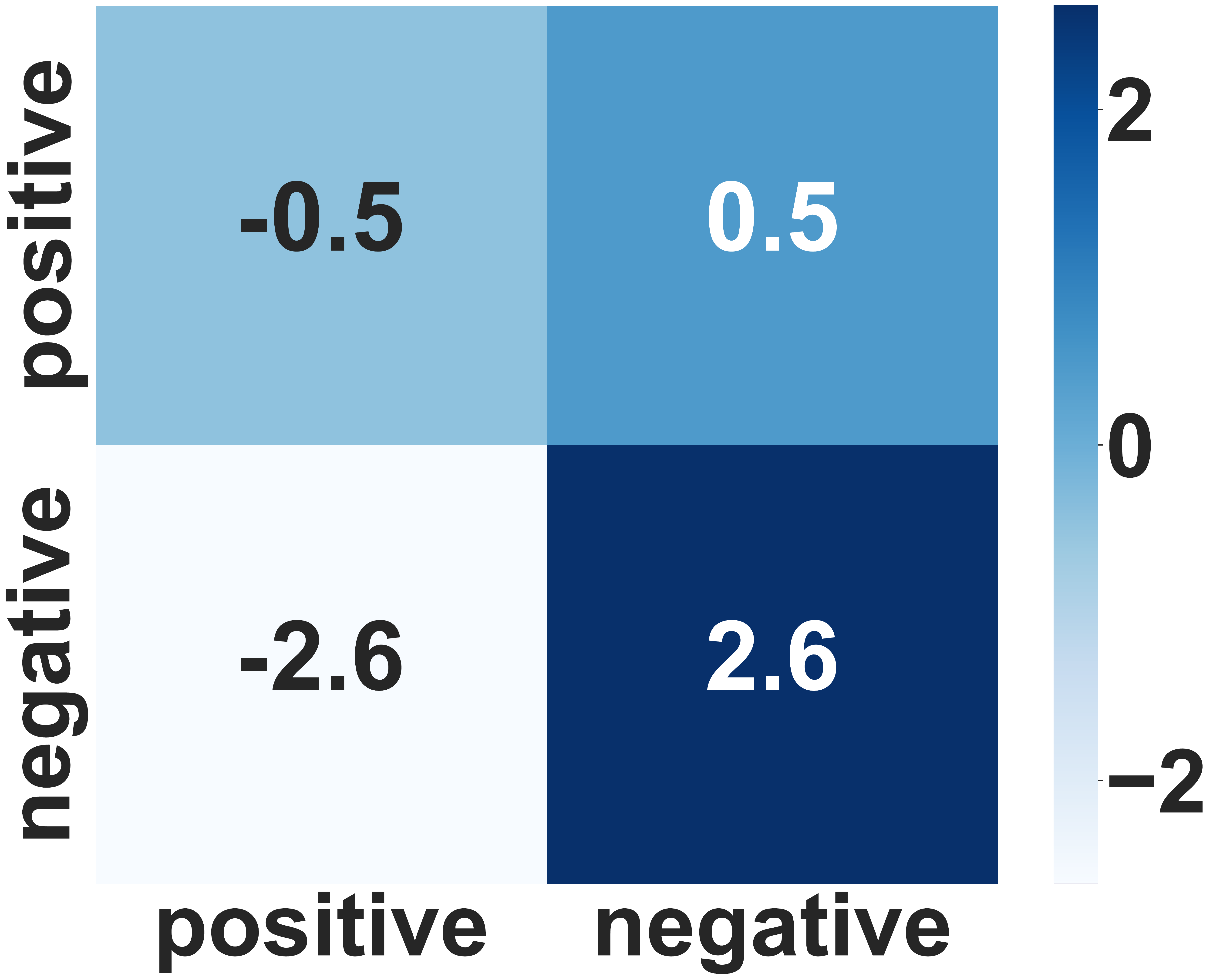}}
    \end{subfigure}
    
    \begin{subfigure}[Partition Size]{\includegraphics[width=0.24\linewidth]{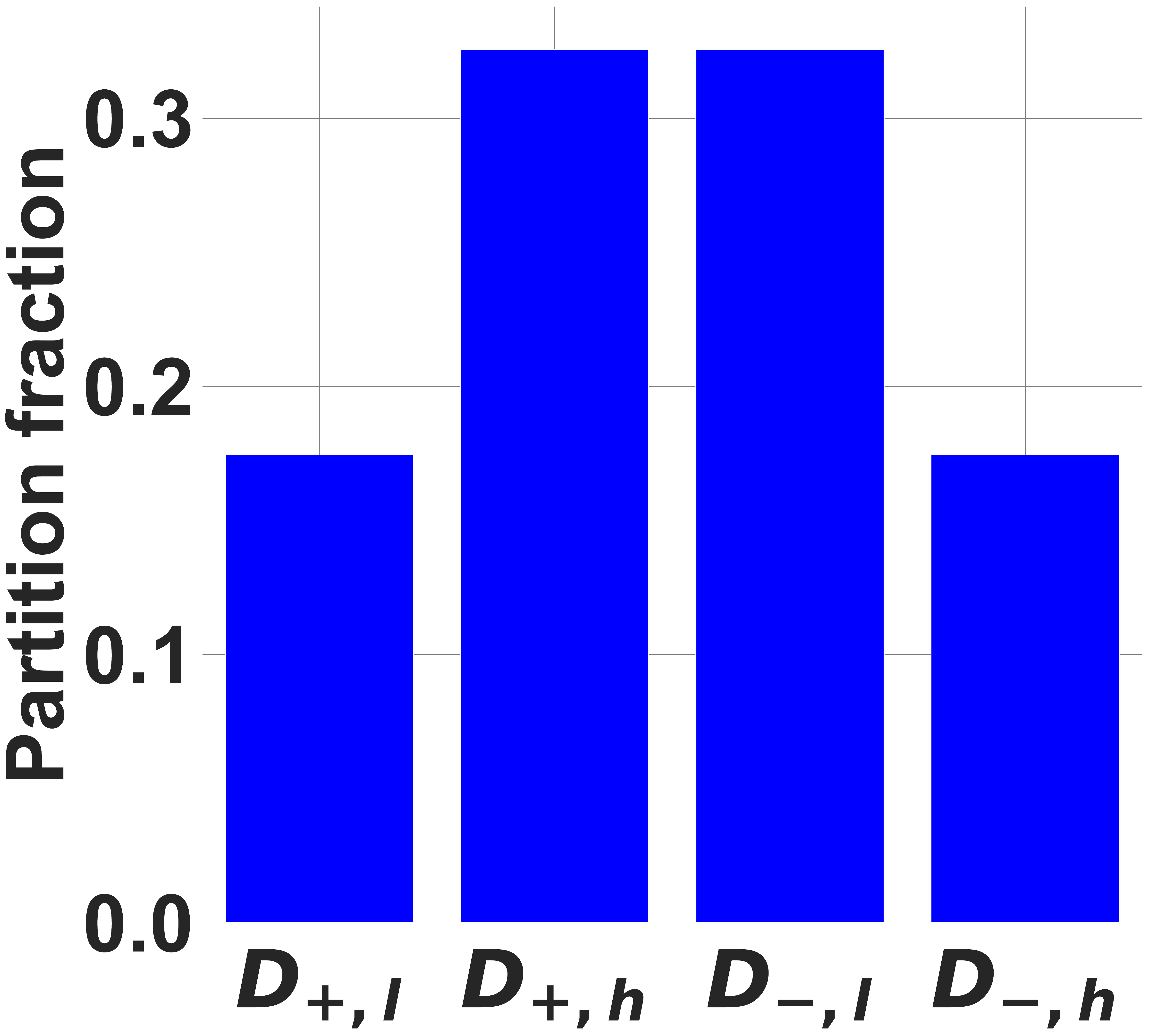}}
    \end{subfigure} 
    \begin{subfigure}[Uncertainty score]{\includegraphics[width=0.24\linewidth]{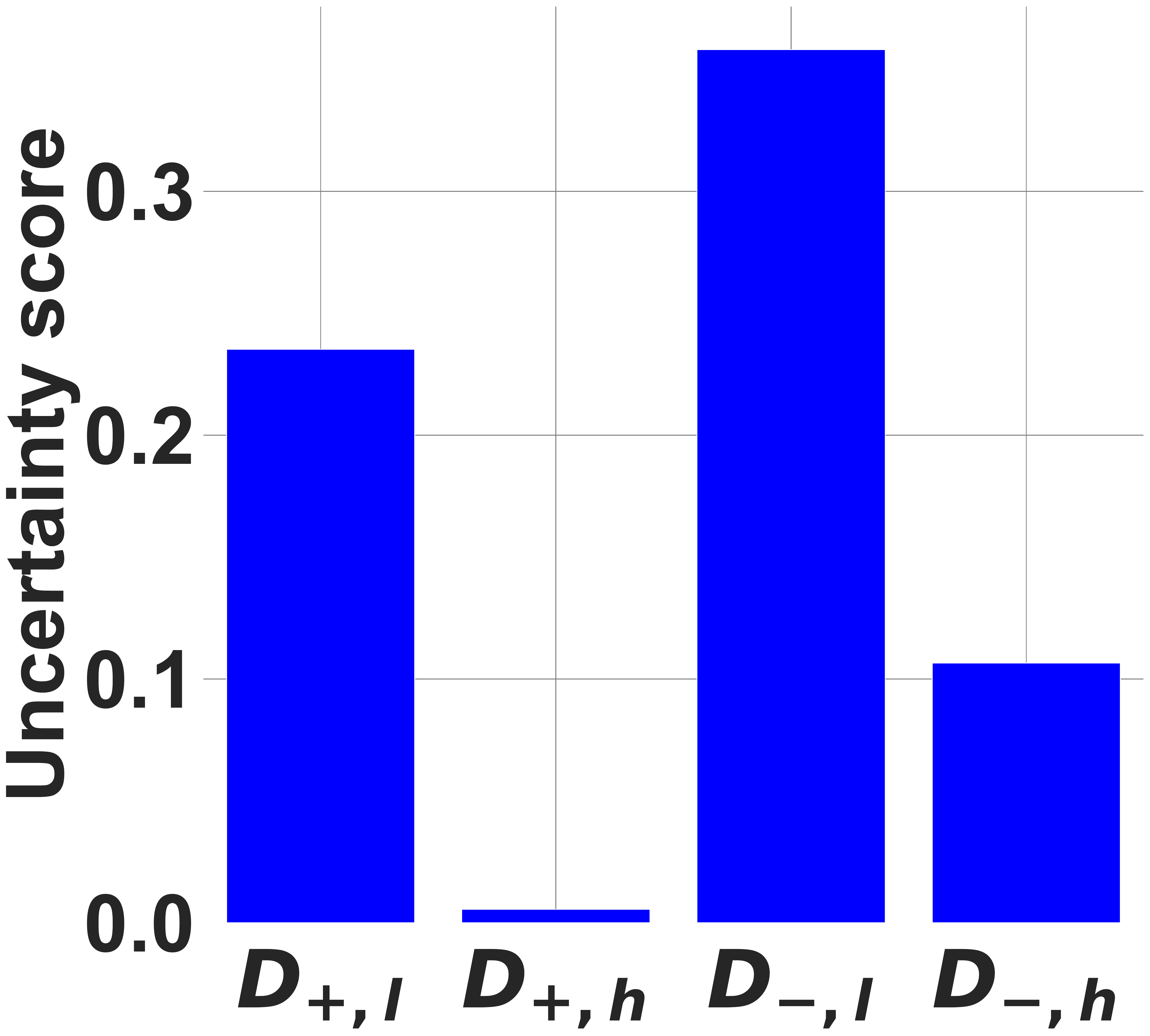}}
    \end{subfigure} 
    \begin{subfigure}[Sample allocation]{\includegraphics[width=0.24\linewidth]{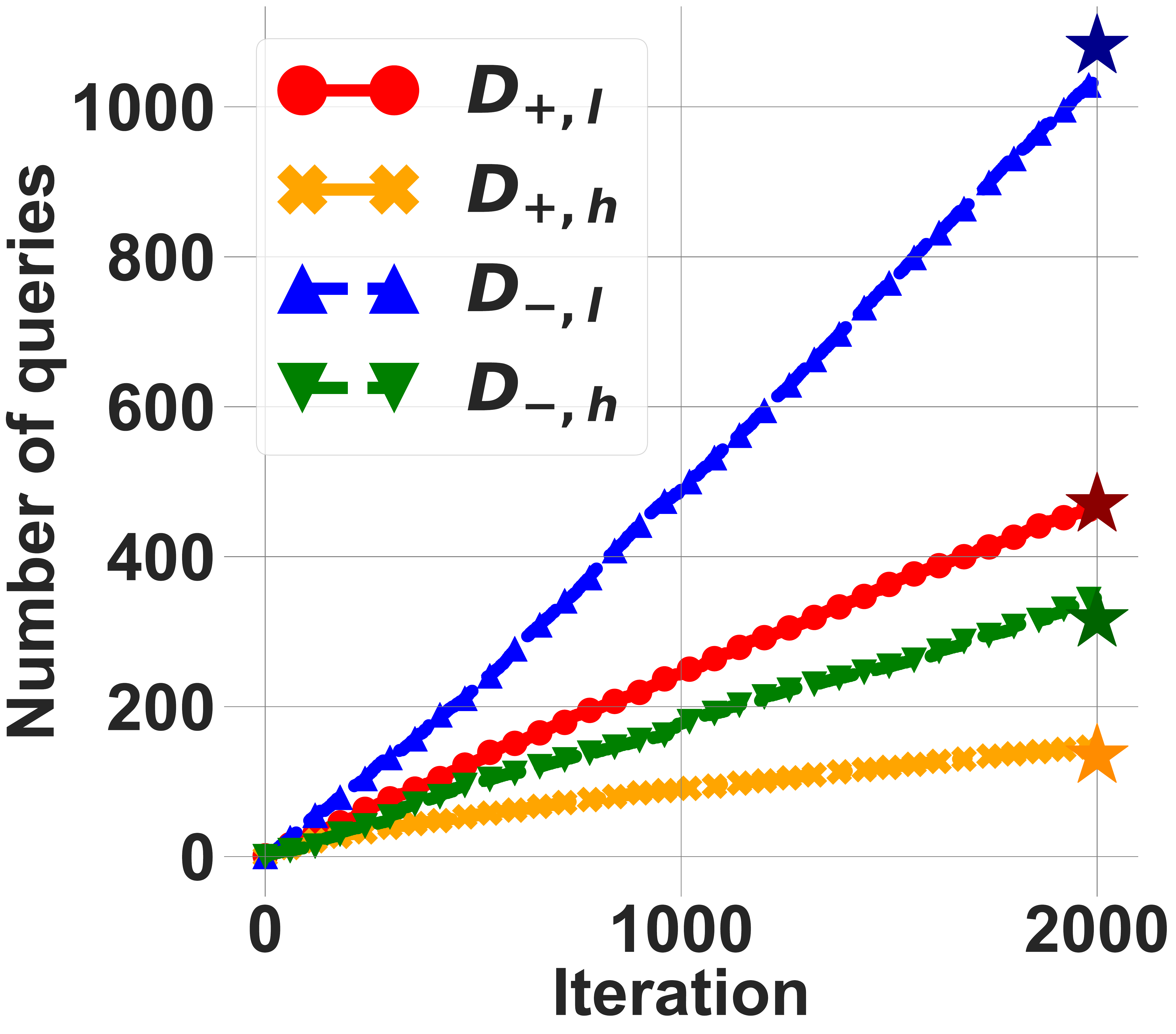}}
    \end{subfigure} 
    \begin{subfigure}[Performance]{\includegraphics[width=0.24\linewidth]{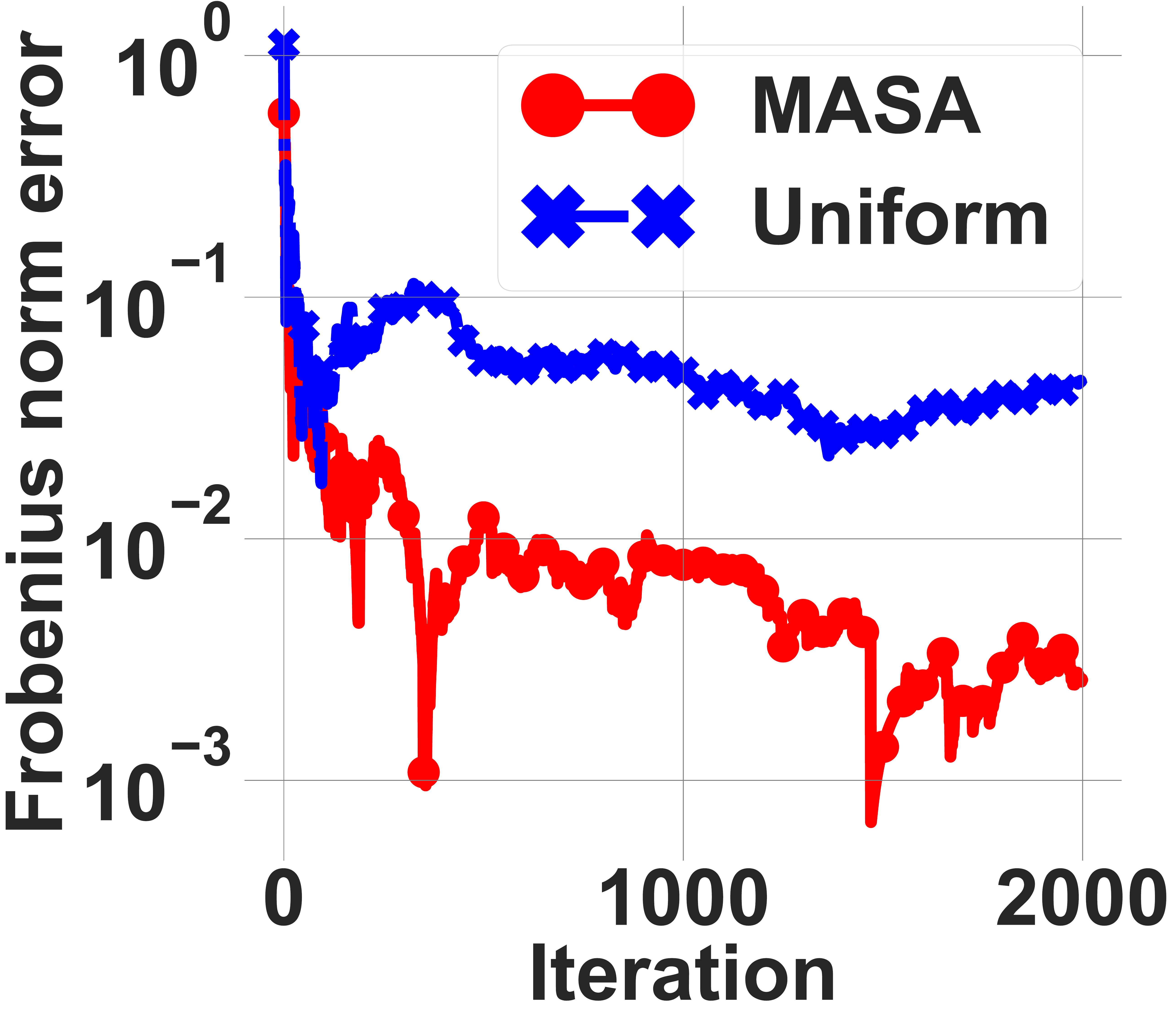}}
    \end{subfigure}    
    \caption{Case study for Amazon API's performance shift on dataset YELP. (a) and (b) give its confusion matrix in spring 2020 and spring 2021, respectively. (c) is their differences, i.e., the API shift. \systemnameAPIShift{}'s estimated shift using 2000 samples is in (d). 
    The dataset is divided into 4 partitions based on (i) positive ($+$) or negative ($-$) true labels, and (ii) low ($l$) or high ($h$)  quality score. (e) and (f) give the size and uncertainty score of each partitions.
    (g) shows \systemnameAPIShift{}'s sampling decision per iteration, where the dark dot points represent the (unreachable) optimal sample allocation.  (h) reveals its performance. 
    }
\end{figure}

\section{Experiments}\label{Sec:FAMEShift:Experiment}

We apply \systemnameAPIShift{}
to estimate the shifts of several real world ML services for various tasks. 
Our goal is three-fold:  (i) understand if and why \systemnameAPIShift{} assess the API shifts efficiently, (ii) examine how much sample cost \systemnameAPIShift{} can reduce compared to standard sampling,   and (iii) exploit the trade-offs between estimation accuracy and query cost achieved by \systemnameAPIShift{}. 
We also study how the hyperparameters affect \systemnameAPIShift{}'s performance, left to Appendix \ref{sec:FAMEShift:experimentdetails}. 

\begin{table}[t]
  \centering
  
  \caption{Required sample size to reach 1\%  Frobnius norm error. Here we compare \systemnameAPIShift{} with uniform (U) sampling and stratified (S) sampling. U and S required similar sample sizes and are reported in the same column.  The sample size is obtained when a 1\% Frobenius norm error is achieved with probability 95\%.}
    \begin{tabular}{|c||c|c|c|c||c|c|c|}
    \hline
    \multirow{2}[4]{*}{API;Dataset} & \multicolumn{2}{c|}{Sample size} & \multirow{2}[4]{*}{Save} & \multirow{2}[4]{*}{API;Dataset} & \multicolumn{2}{c|}{Sample size} & \multirow{2}[4]{*}{Save} \bigstrut\\
\cline{2-3}\cline{6-7}          & MASA  & U/S   &       &       & MASA  & U/S   &  \bigstrut\\
    \hline
    \hline
    Amazon;YELP & 4.5K  & 19.7K & 77\%  & IBM;DIGIT & 3.6K  & 17.0K & 79\% \bigstrut\\
    \hline
    Amazon;IMDB & 10.3K & 20.8K & 51\%  & IBM;AMNIST & 2.4K  & 18.5K & 87\% \bigstrut\\
    \hline
    Amazon;WAIMAI & 7.8K  & 18.0K & 57\%  & Google;DIGIT & 4.2K  & 17.0K & 75\% \bigstrut\\
    \hline
    Amazon;SHOP & 4.8K  & 20.8K & 77\%  & Google;AMNIST & 1.1K  & 18.5K & 94\% \bigstrut\\
    \hline
    MS; FER+ & 2.6K  & 19.9K & 87\%  & Google;CMD & 1.6K  & 15.2K & 89\% \bigstrut\\
    \hline
    Google; EXPW & 4.2K  & 17.9K & 77\%  & MS;DIGIT & 3.3K  & 17.0K & 81\% \bigstrut\\
    \hline
    \end{tabular}%
  \label{tab:FAMEShift:costsaving}%
\end{table}%
\begin{figure}[t]
	\centering
    \begin{subfigure}[Amazon YELP]{\includegraphics[width=0.24\linewidth]{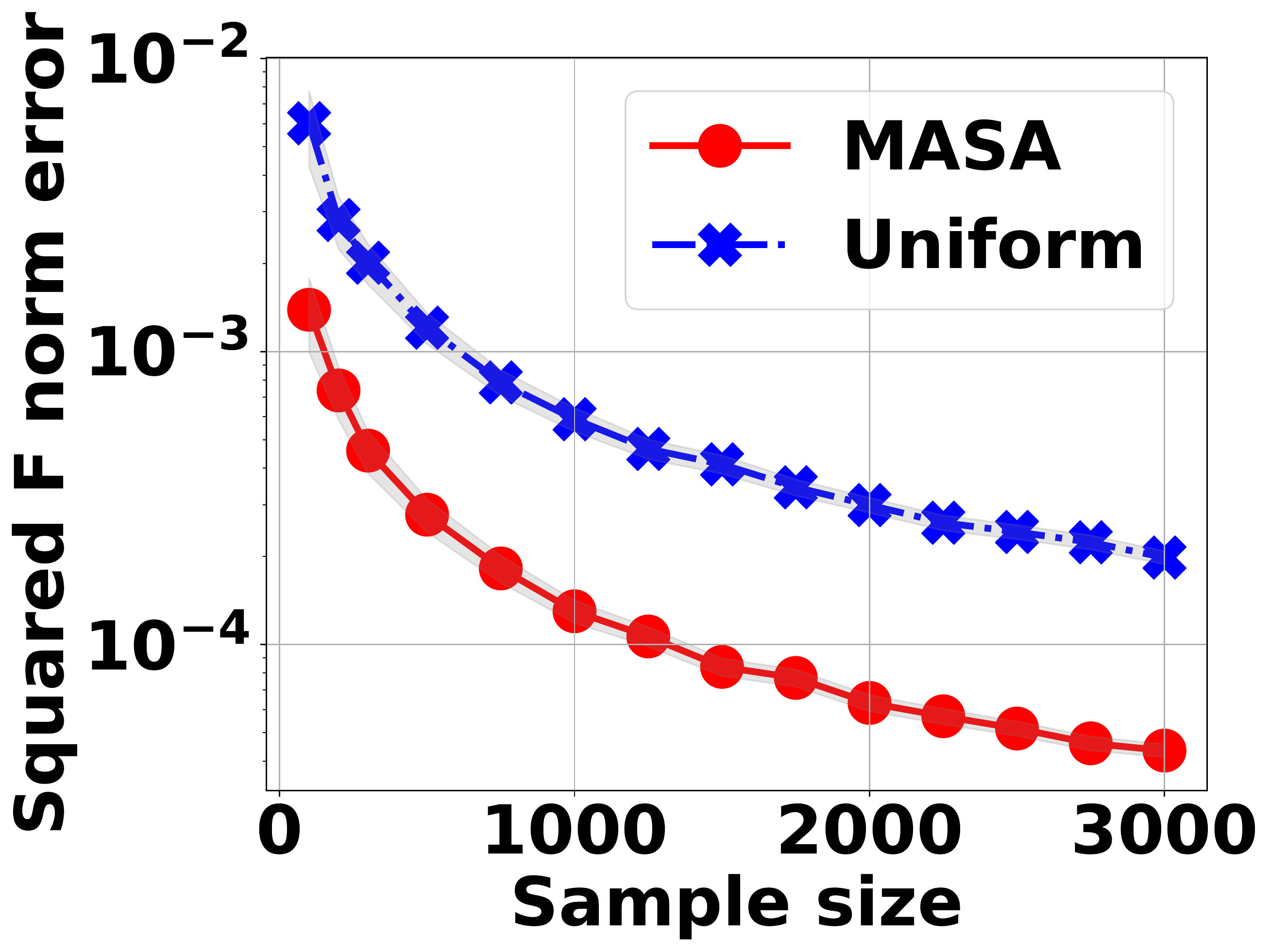}}
    \end{subfigure}
    \begin{subfigure}[Amazon IMDB]{\includegraphics[width=0.23\linewidth]{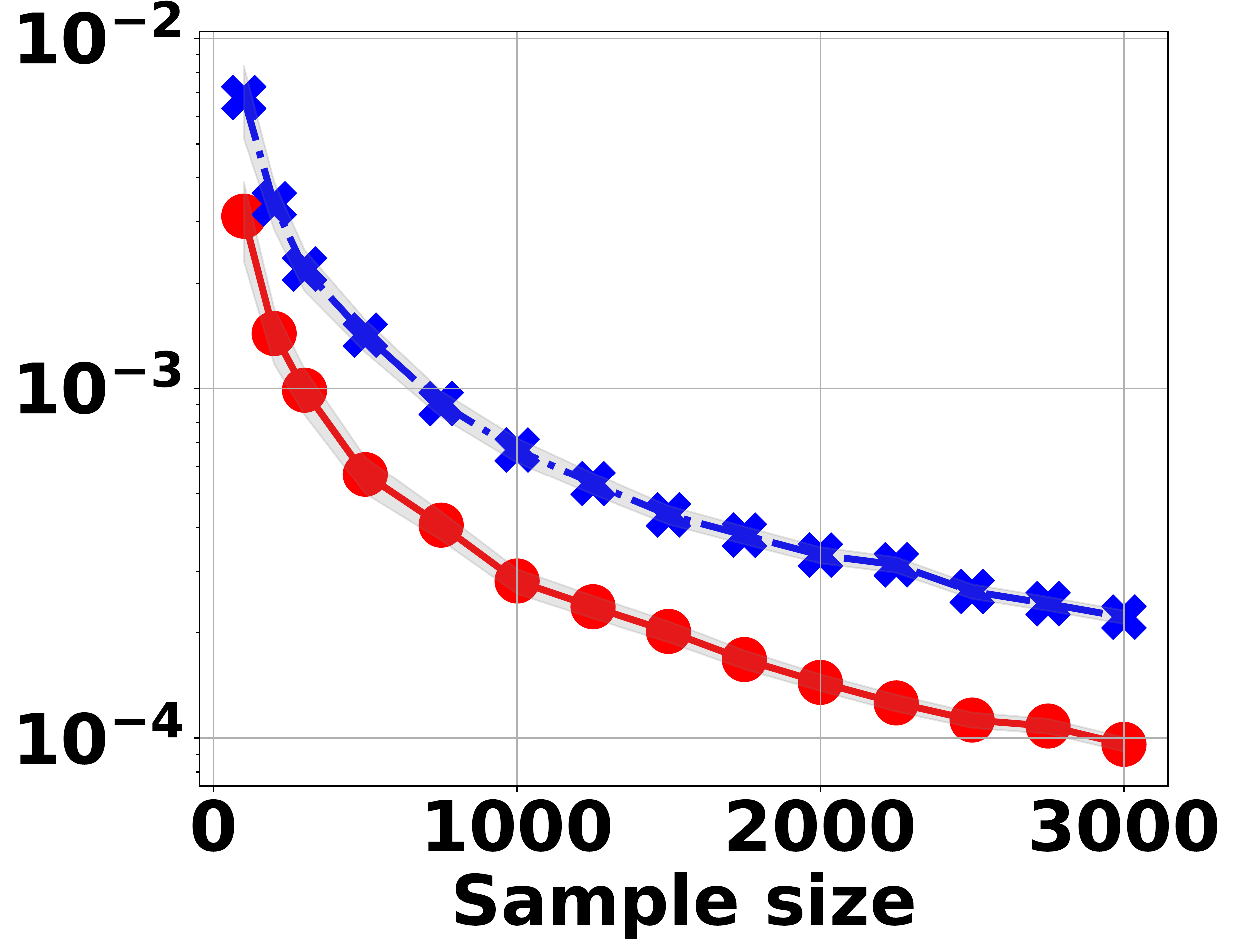}}
    \end{subfigure}
    \begin{subfigure}[Amazon WAIMAI]{\includegraphics[width=0.23\linewidth]{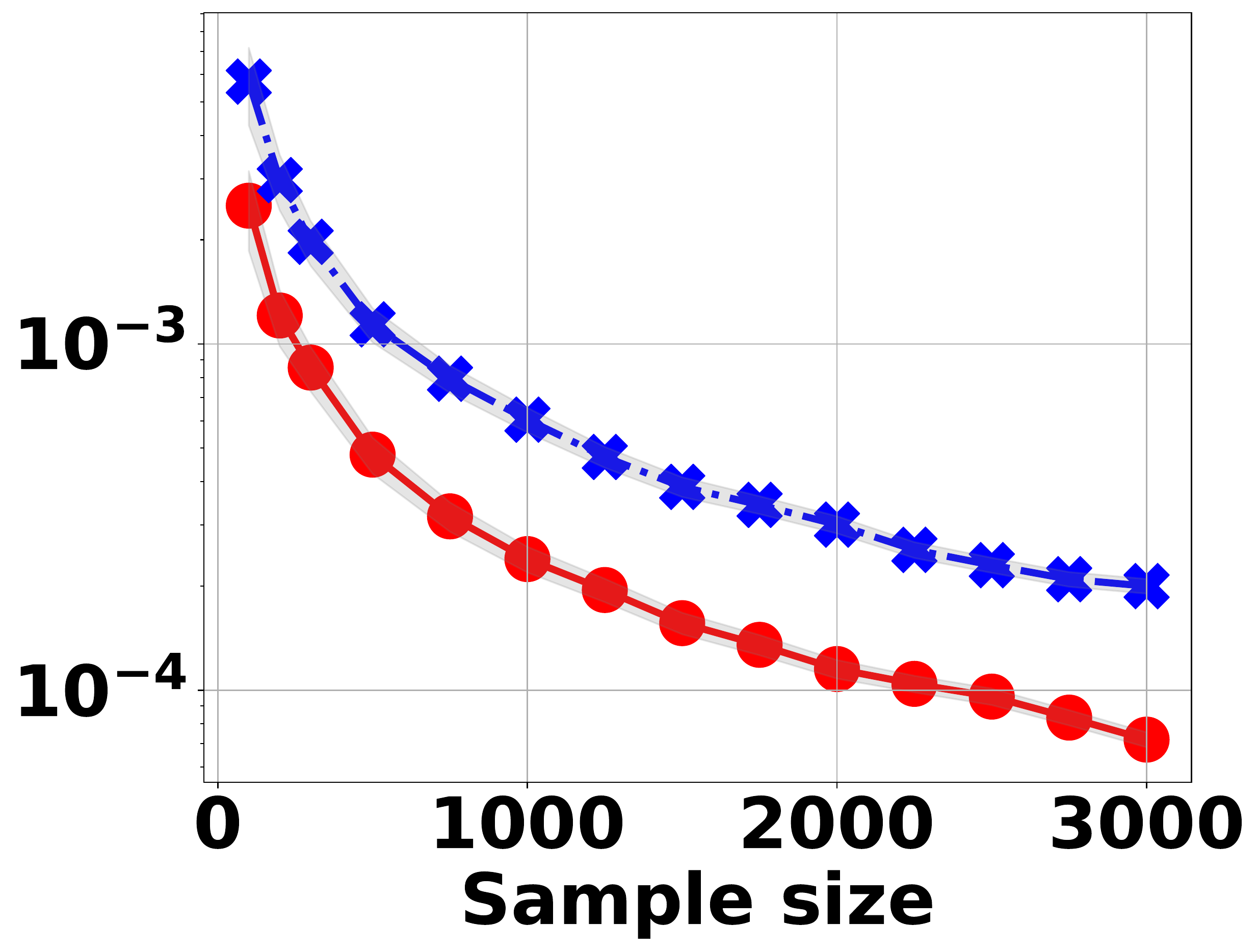}}
    \end{subfigure}
    \begin{subfigure}[Amazon SHOP]{\includegraphics[width=0.23\linewidth]{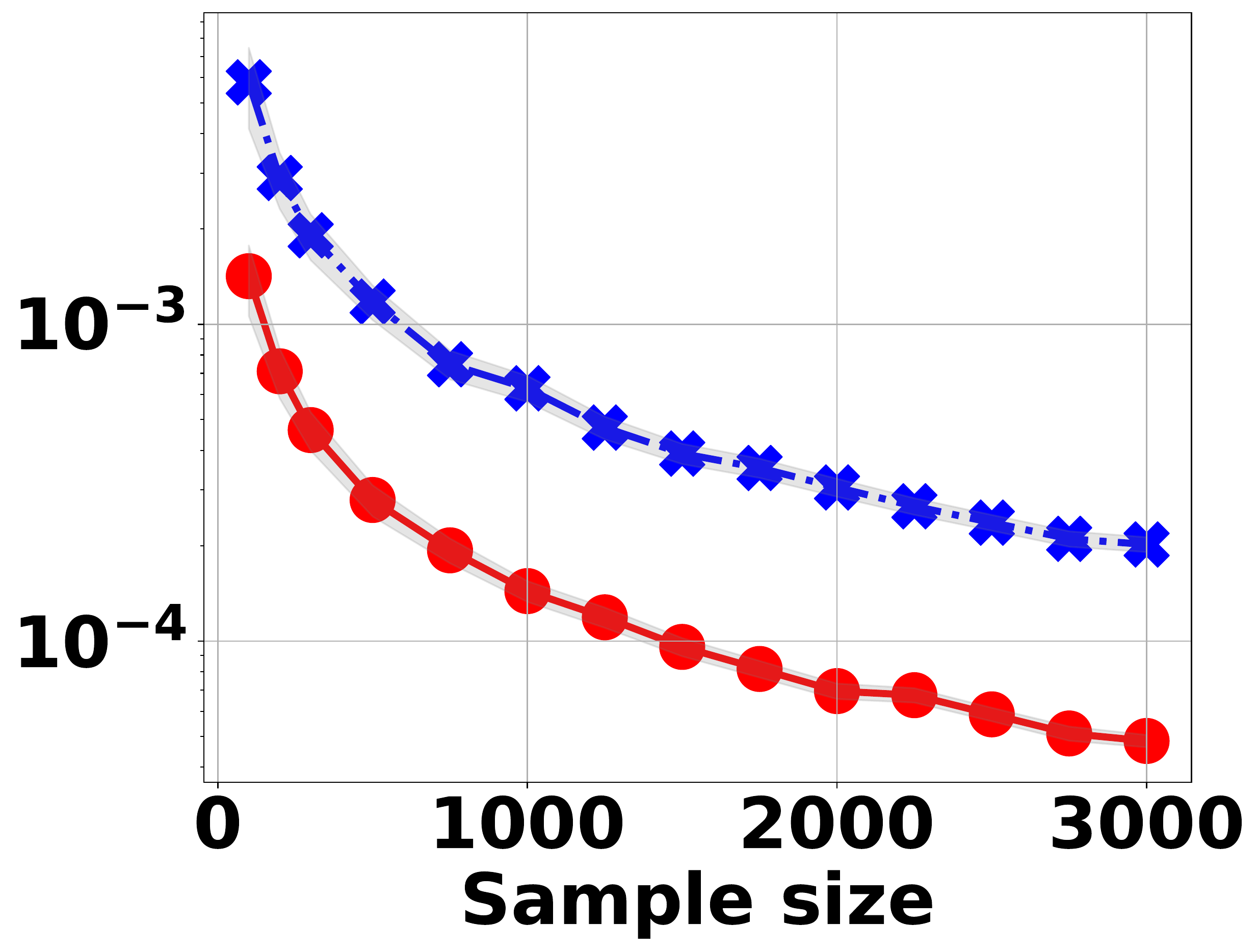}}
    \end{subfigure}
    
	\begin{subfigure}[Microsoft FER+]{\includegraphics[width=0.23\linewidth]{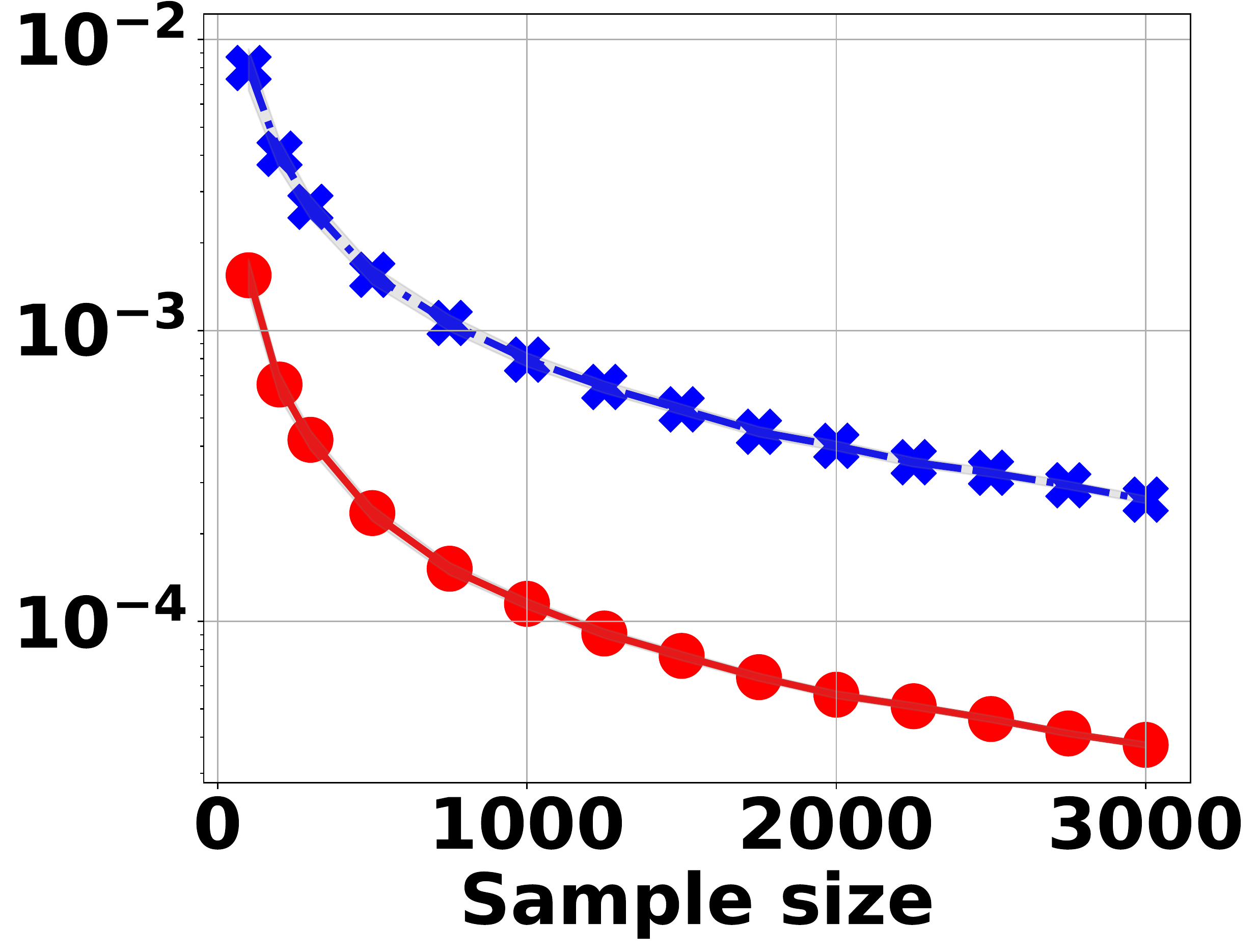}}
    \end{subfigure}
    \begin{subfigure}[Google EXPW]{\includegraphics[width=0.23\linewidth]{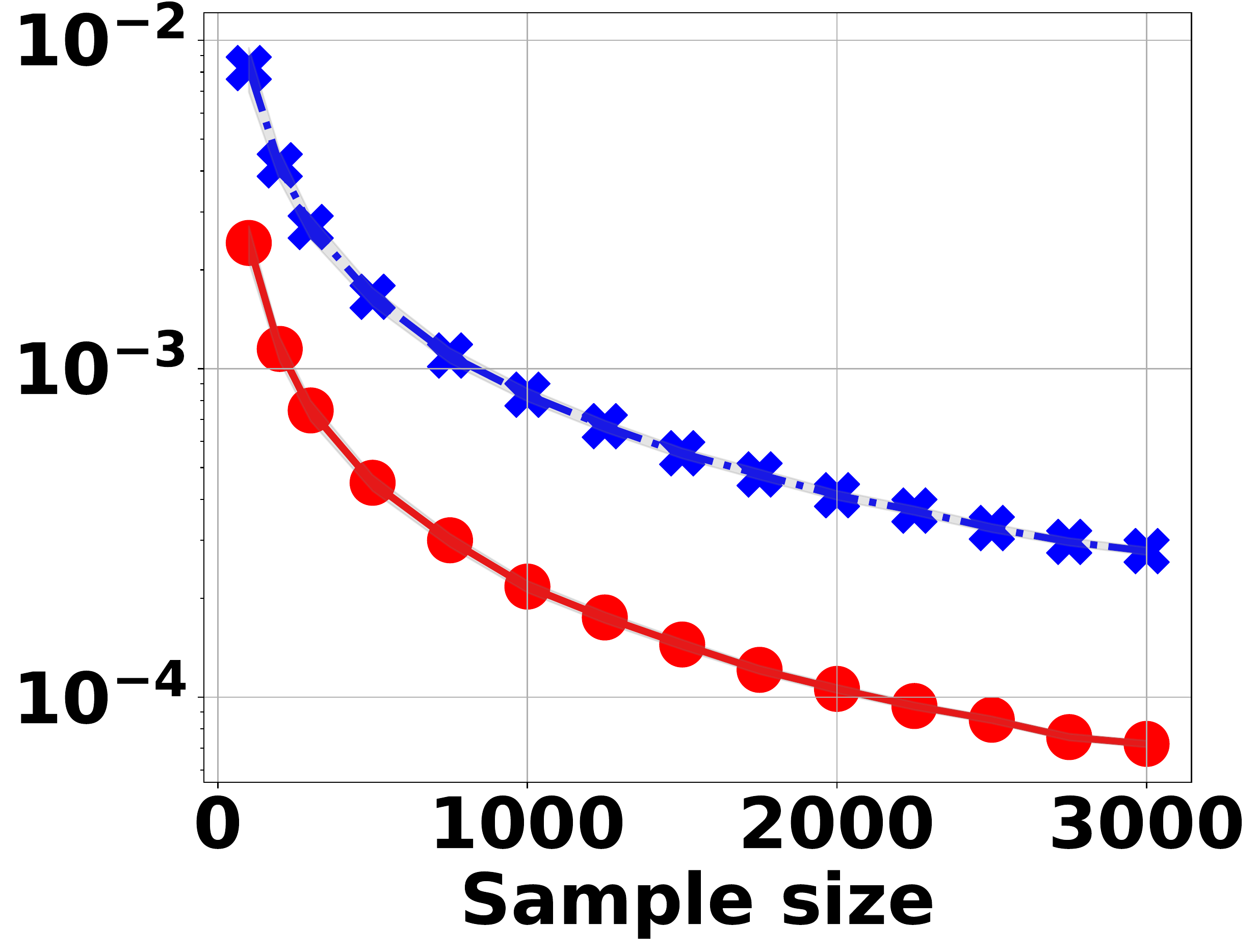}}
    \end{subfigure}
    \begin{subfigure}[Google DIGIT ]{\includegraphics[width=0.23\linewidth]{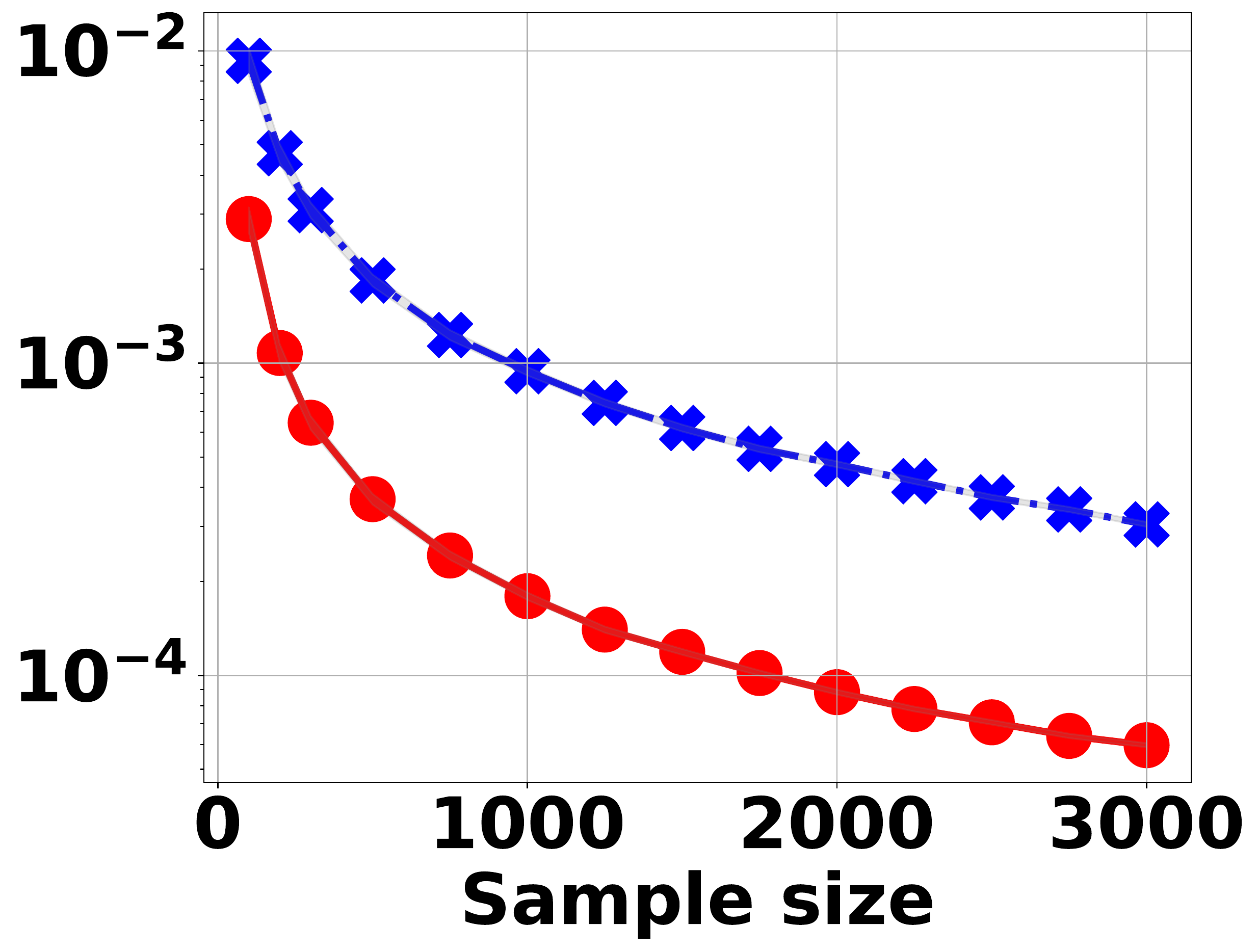}}
    \end{subfigure}
    \begin{subfigure}[Google AMNIST]{\includegraphics[width=0.23\linewidth]{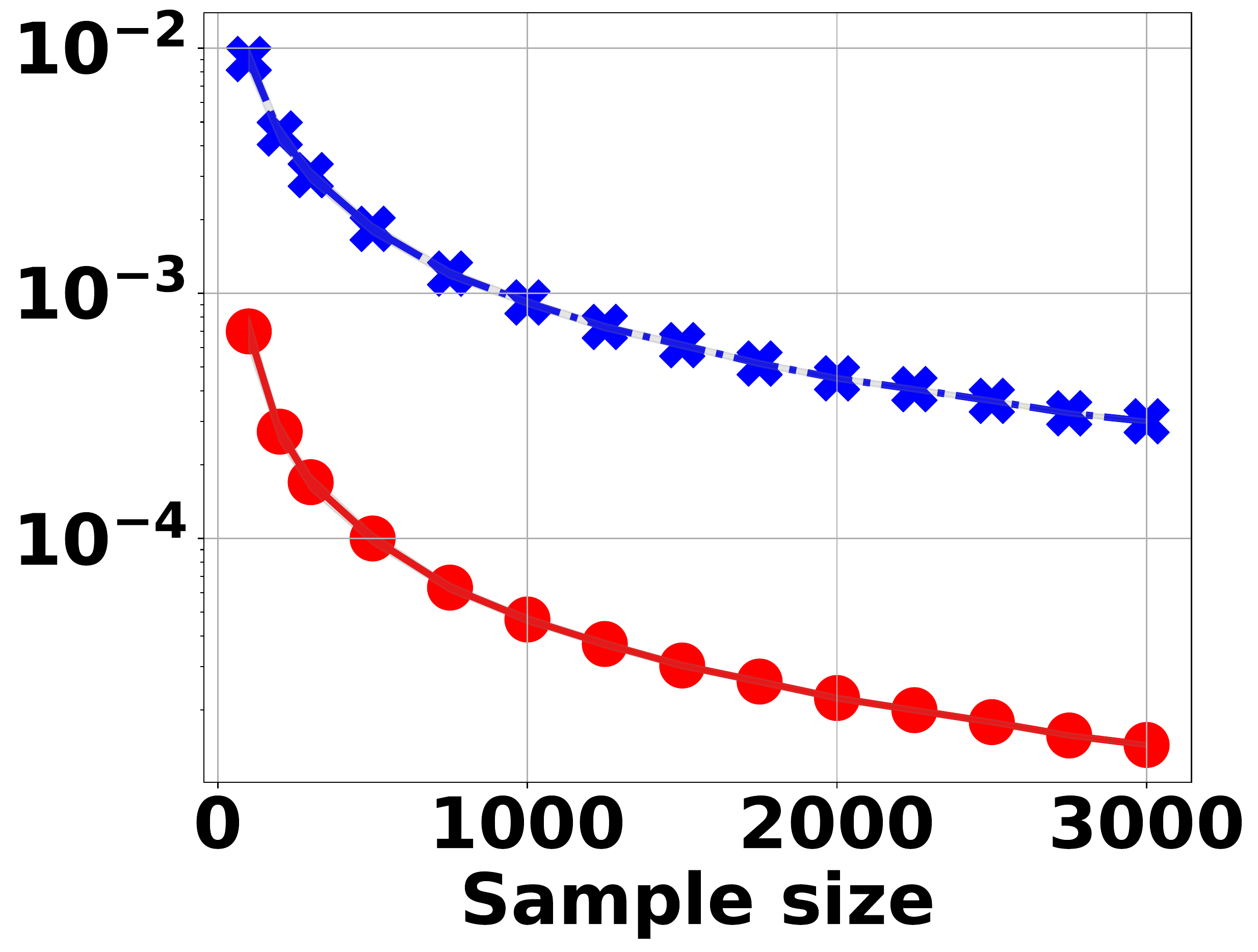}}
    \end{subfigure}
    
    \begin{subfigure}[ Google CMD]{\includegraphics[width=0.23\linewidth]{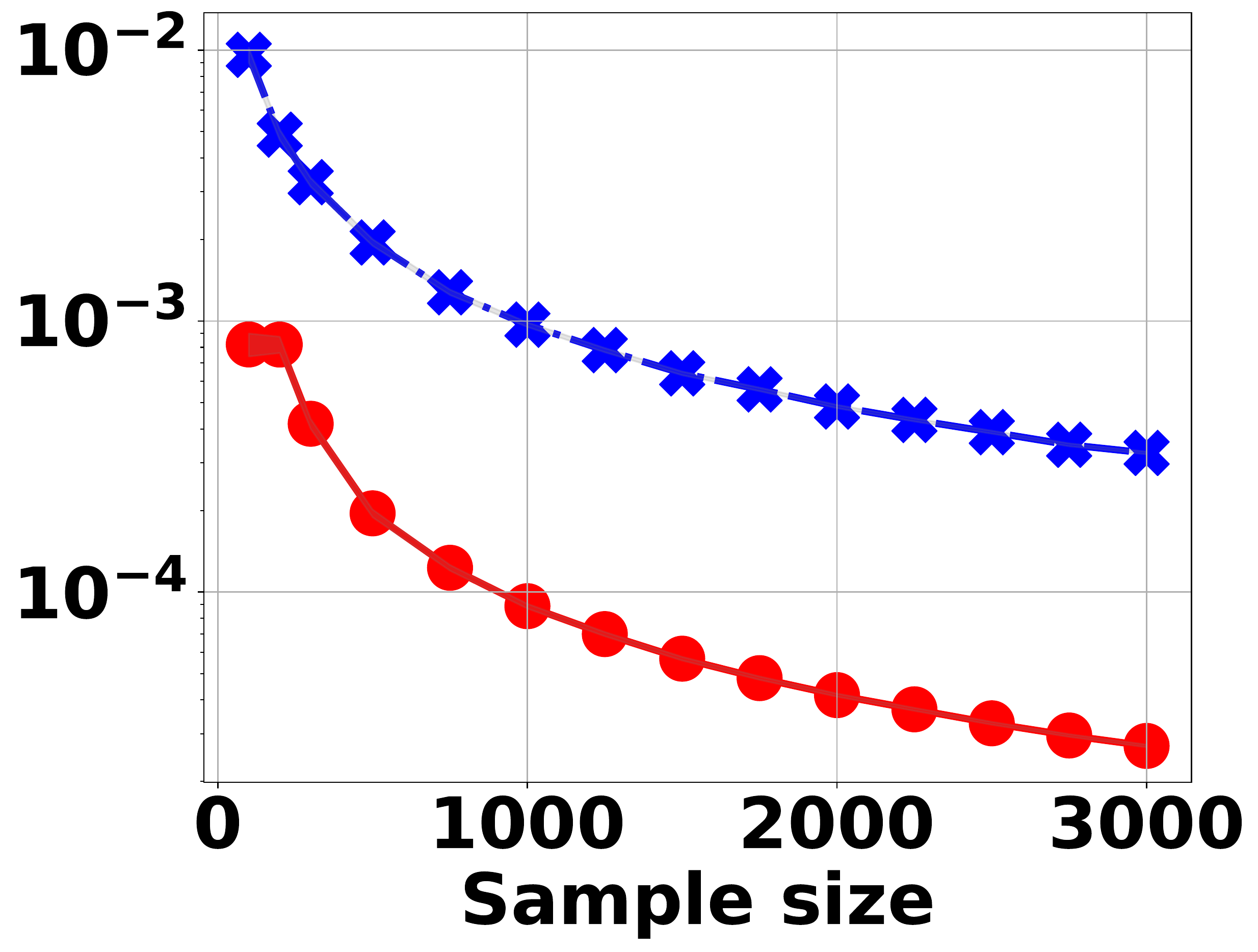}}
    \end{subfigure}
    \begin{subfigure}[IBM DIGIT ]{\includegraphics[width=0.23\linewidth]{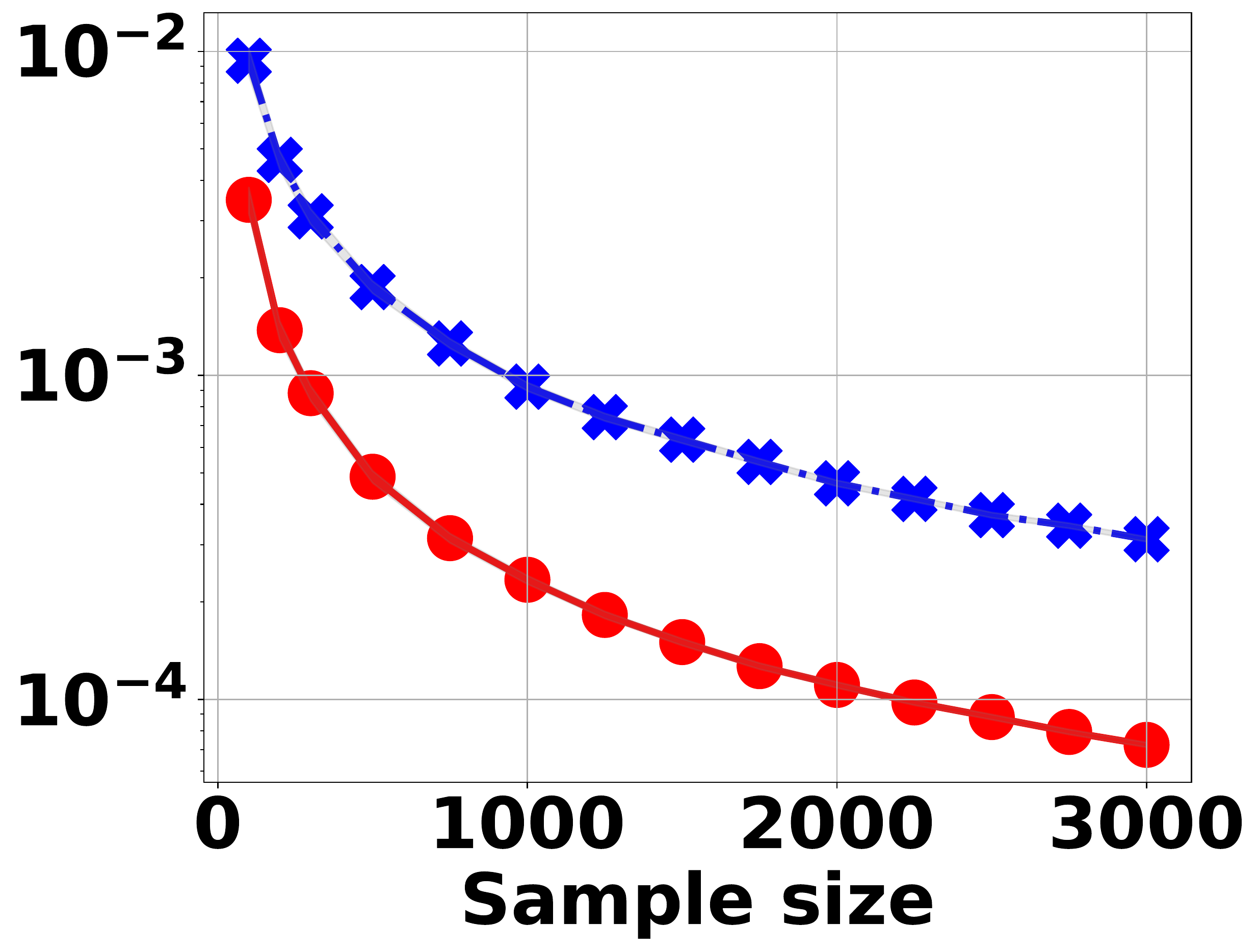}}
    \end{subfigure}
    \begin{subfigure}[IBM AMNIST]{\includegraphics[width=0.23\linewidth]{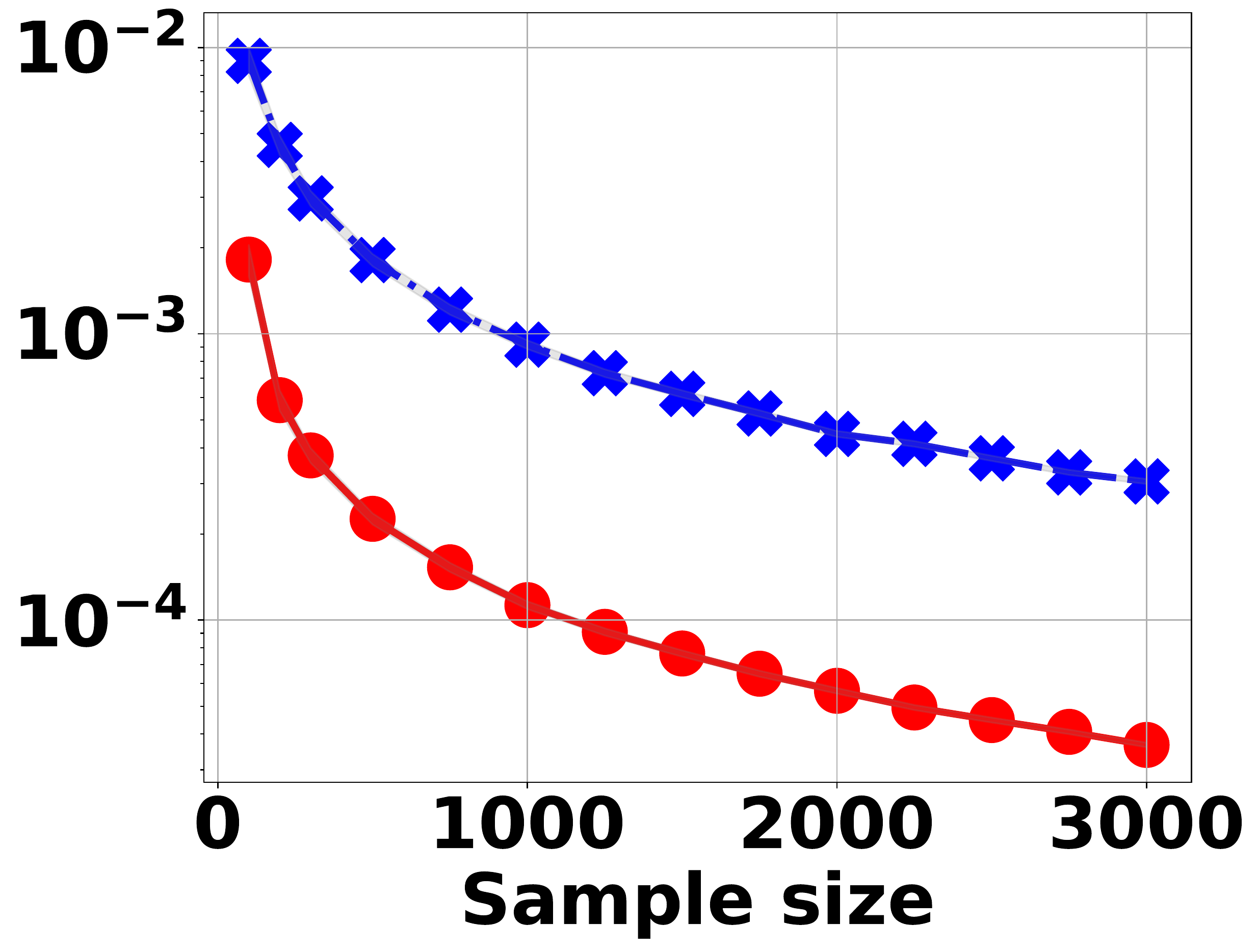}}
    \end{subfigure}
    \begin{subfigure}[Microsoft DIGIT]{\includegraphics[width=0.23\linewidth]{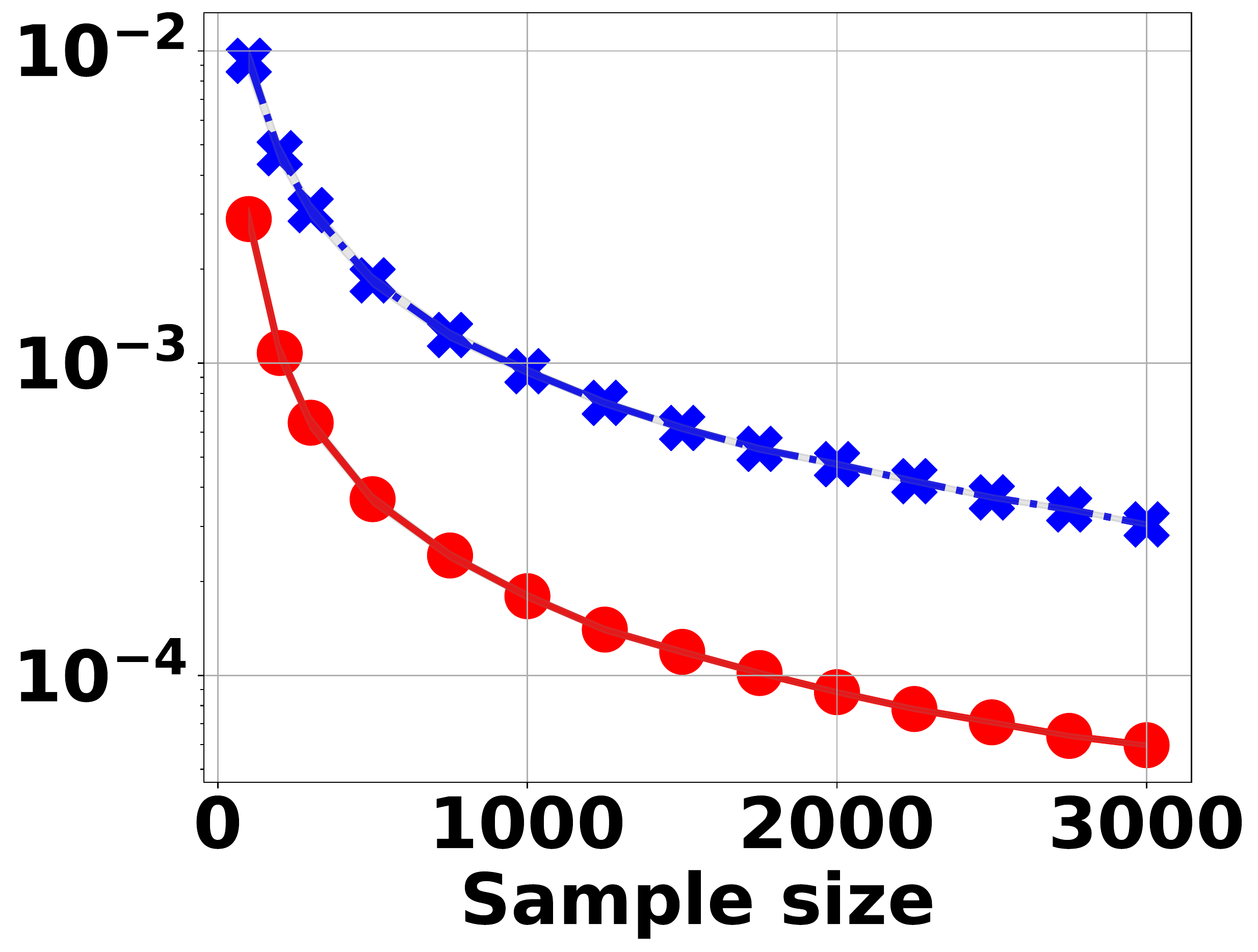}}
    \end{subfigure}
    \caption{API shift estimation performance and sample size trade-offs. We compare the expected squared Frobenius norm error of   \systemnameAPIShift{} with $K=3$ partitions versus standard uniform sampling. 
    For any sample size, \systemnameAPIShift{} consistently leads to an estimation error  much smaller than uniform sampling across different API and dataset combinations. }\label{fig:FAMEShift:tradeoffs}
\end{figure}

\paragraph{Tasks,  ML APIs, and datasets.}
As shown in Section \ref{Sec:APIShift:Preli}, we have observed 12 of 36 cases where there is a >1\% overall accuracy performance change of an ML API.
Thus, we focus on \systemnameAPIShift{}'s performance on those 12 cases. Except for case study, all experiments were averaged over 1500 runs. 
In all tasks, we created partitions using difficulty levels induced by a cheap open source model from GitHub.
More details  are in Appendix \ref{sec:FAMEShift:experimentdetails}.

\paragraph{Sentiment analysis: a case study on Amazon API.}
We start by a case study on Amazon API on a sentiment analysis dataset, YELP to understand \systemnameAPIShift{}'s performance. 
We adopt \systemnameAPIShift{} with sample budget 2000. 
The dataset is divided into 4 partitions $D_{+,l}, D_{+,h}, D_{-,l}, D_{-,h}$, depending on whether the true label is positive (+) or negative (-), and quality score produced by the 2020 version is lower (l) or higher (h) than the median.

We first note that the API shift gives an interesting explanation to Amazon API's accuracy change. 
In fact, as shown in Figure \ref{fig:FAMEShift:casestudy} (a-c), the accuracy increase is mostly because more texts (2.7\%) with negative altitudes are correctly classified. 
On the other hand, there is a small drop (0.6\%) in the prediction accuracy for positive texts.  
One possible explanation is that the API has been retrained on a dataset with more negative texts.
Next, we observe that \systemnameAPIShift{} produces accurate estimation of the API shift by comparing Figure \ref{fig:FAMEShift:casestudy} (c) and (d).
As shown in Figure \ref{fig:FAMEShift:casestudy} (d), there are only negligible differences (0.1\%) between the true API shift and that estimated by \systemnameAPIShift{} using 2000 samples. 
This is primarily due to (i) that the data partitioning separates  more uncertain data from less uncertain ones, and (ii) that adaptive sampling learns the uncertainty level effectively.
Figure \ref{fig:FAMEShift:casestudy} (e-f) shows that  while the partitions' size is similar, their uncertainty scores are diffident. For example, higher quality score implies a much smaller uncertainty for positive texts ($D_{+,h}$ and $D_{+,l}$ in Figure \ref{fig:FAMEShift:casestudy}(f)). 
Similarly, negative texts seem to contain more uncertainty than positive ones ($D_{+,h}$ and $D_{-,h}$ in Figure \ref{fig:FAMEShift:casestudy}(f)).
As shown in Figure \ref{fig:FAMEShift:casestudy} (g), \systemnameAPIShift{} indeed learns to utilize such imbalanced uncertainty: its sampling allocation for each partition is quite close to the  optimal allocation (dark star point). %
The sampling ratio among different partitions scales roughly linear as the total number of samples increases. 
Note that the optimal sample allocation also scales linearly with respect to the total number of samples. 
Thus, \systemnameAPIShift{} can approximate the optimal allocation for any sample numbers (unless it is too small).
Finally, it is worth noting that \systemnameAPIShift{} outperforms standard uniform sampling notably, 
as shown in Figure \ref{fig:FAMEShift:casestudy}(h). %
This is because uniform sampling does not exploit the uncertainty of each partition. %

\paragraph{Budget savings achieved by \systemnameAPIShift{}.}
In many applications, it suffices to obtain an estimated API shift close to the true shift, e.g., 
within  a 1\% Frobenius norm error.  
Thus, a natural question arises: \textit{to reach the same tolerable estimation error, how much sampling cost can \systemnameAPIShift{} reduce compared to standard sampling approaches?}

To answer this question, we compare \systemnameAPIShift{} with two natural approaches:  (i) uniform sampling and  (ii) stratified sampling by drawing same  number of samples for each true label. 
For each approach, we measure the number of samples needed to reach 1\% Frobenius norm error with probability 95\%, via an upper bound on the estimated Frobenius error. 
The details are left to Appendix \ref{sec:FAMEShift:experimentdetails}.  
As shown in Table \ref{tab:FAMEShift:costsaving}, \systemnameAPIShift{} usually requires more than 70\% fewer samples to reach such tolerable Frobenius norm error than the uniform and stratified sampling.  
In fact, the sample size reduced by \systemnameAPIShift{} can be as high as almost 94\%. 
This is primarily because \systemnameAPIShift{}'s shift estimation is more accurate.
Uniform and stratified sampling required the similar number of samples because the upper bounds on their estimated Frobenius error are similar.   
This demonstrates that  \systemnameAPIShift{} can significantly reduce the required sample size and thus the query cost to assess the ML API shifts in practice.

\paragraph{Trade-offs between estimation error and query budget .}
Next we examine  the trade-offs between API shift estimation error and sample size achieved by \systemnameAPIShift{}, shown in Figure \ref{fig:FAMEShift:tradeoffs}. 
We first note that, across all 12 observed API shifts, \systemnameAPIShift{} consistently outperforms standard uniform sampling for any fixed sample size. In fact, the achieved estimation error of \systemnameAPIShift{} is usually an order of magnitude smaller than that of uniform sampling.
This verifies  that \systemnameAPIShift{} can provide more accurate assessments of API shifts  in diverse applications.
Second, some API shifts are easier to estimate than others.
For example, for Google API shift on AMNIST, using 1000 samples already gives an expected squared Frobenius norm error lower than $10^{-4}$, while it usually requires 2000 samples for other shifts. 
This is probably because the skew in its uncertainties among different partitions is more severe than other shifts. 
Another observation is that, the relative error gap between \systemnameAPIShift{} and uniform sampling remains stable.
This is because \systemnameAPIShift{} becomes closer to the optimal sampling, whose estimation error is indeed a constant fraction of that of uniform sampling.

\section{Conclusion}\label{Sec:FAMEShift:Conclusion}
In this paper, we identify and formulate the problem of characterizing ML API shifts. Our systematic empirical study shows that API model updates are frequent, and that some updates can reduce performance substantially. 
Quantifying such shifts is an important but understudied problem that can greatly affect the reliability of applications using ML-as-a-service. 
To assess API shifts, we propose an algorithmic framework,  \systemnameAPIShift{}, which provides significant estimation error and sample size reduction both theoretically and empirically. 
Our work focuses on estimating changes in the confusion matrix because the confusion matrix is often what is used by practitioners to assess API performance. We acknowledge that confusion matrices are most applicable for classification tasks, and other measures need to be used for more complex APIs (e.g. OCR, NLP). While this is a limitation, classification with a small to moderate number of classes of interest is a common use case for ML APIs, and this is an important starting point since it has not been studied before.  

\newpage
{

}
\newpage
\appendix
\paragraph{Outline} The supplement is organized as follows. 
In Section \ref{sec:FAMEShift:Limit}, we provide additional discussions about potential limits and societal impacts of this work. 
Section \ref{sec:FAMEShift:techdetails} provides additional technical details. 
All proofs are presented in Section \ref{sec:FAMEShift:proof}. 
We give experimental setups, details of datasets and ML APIs,  and further empirical results in Section \ref{sec:FAMEShift:experimentdetails}.

\section{Additional Discussion of Limitation and Societal Impacts}\label{sec:FAMEShift:Limit}
In this paper we focus on estimating changes in confusion matrix. Confusion matrices are typically applicable for classification tasks, and other measures may be needed for more complex APIs (for example, OCR or NLP). While this is a limitation and a great direction of future work, we believe that classification tasks are an important starting point for analyzing API shifts since they are the most commonly used type of ML APIs. 
In this paper, we used data distributions that are constant over time in order to clearly isolate the changes in API performance due to model changes. This is also reasonable because in many applications the data distribution changes very slowly. It is another interesting future work to investigate settings where  both the data distribution and ML APIs have change over time.

As the ML as a service industry has attracted more users, it can have significant societal impacts by making ML more accessible.
In this paper, we study how the ML APIs change over time, which can impact the way users understand the performance of their applications empowered by those APIs. 
Furthermore, this also helps users determine which API to use. Overall our approach for monitoring API shifts contribute to making ML-as-a-service more reliable in practice. 
One increasingly important concern is the biases towards minority groups in the ML APIs.
Understanding the API shifts gives one way to examine if and by how much the biases are mitigated or magnified over time.
We will release the code and all of our datasets to stimulate more research to understand API shifts (the website is hidden for now to ensure author anonymity). 

\section{Technical Details}\label{sec:FAMEShift:techdetails}

\paragraph{Computation and space cost of \systemnameAPIShift{}.} One attractive property of \systemnameAPIShift{} is its low computation and space cost. In fact, it can be easily verified from Algorithm \ref{Alg:FAMEShift:MainAlg} that, the computation cost is only linear in the number of samples $N$.
The occupied space is only constant.  
Therefore, \systemnameAPIShift{} can be easily applied for large number of samples. 

\paragraph{Choice of parameter $a$.} The parameter $a$ is used to balance between exploiting and exploration in Algorithm \ref{Alg:FAMEShift:MainAlg}.
Throughout this paper, we set $a=1$ as the default value. 
While theoretically $a$ should depend on the partition size and sample number, in practice we found that $a=1$ works well. 
An in-depth analysis for this remains an interesting open problem.

\paragraph{Stopping rule under loss requirements.} For \systemnameAPIShift{}, we establish the upper bound on the loss by (i) computing the upper bound on the estimated uncertainty score for each partition,  and (ii) summing up all those upper bounds weighted by the partition size to form the upper bound on the loss.
For Uniform sampling or stratified sampling, we directly use the upper bound on the Frobenius loss. 
Here, we adopt the standard upper bound for Bernoulli variables.
That is to say, for any estimator using $n$ samples, we use $\sqrt{\frac{c}{n}}$ as its upper bound, where $c$ is a parameter to control the confidence. For both methods, we choose $c$ to ensure a 1\% error under 95\% confidence level.

\section{Proofs}\label{sec:FAMEShift:proof}
We present all missing proofs here. 
For ease of expositions, let us first introduce a few notations.
We let $x_n$ denote the $n$th sample drawn in Algorithm \ref{Alg:FAMEShift:MainAlg}, and use $I_n$ to indicate from which partition the sample $x_n$ is drawn.
For example, $I_n=(i,k)$ indicates that $x_n$ is drawn from the partition $D_{i,k}$.

Let $\sample_{\ell,k,t} \in [L]$ denote the ML API's predicted label for the $t$th sample drawn from the partition $D_{\ell,k}$.
Abusing the notation a little bit, let $\SN_{\ell, k,n}$ denote the value of $\SN_{\ell, k}$ after the $n-1$th iteration and before the $n$th iteration in Algorithm \ref{Alg:FAMEShift:MainAlg}.
Similarly, let $\hatusc_{\ell,k,n}$ be the value of $\hatusc_{\ell,k}$,  $\Smu_{\ell,k,j,n}$ be the value of $\Smu_{\ell,k,j}$, and $\hash_{\ell,k,,j,n}$ be the value of $\hash_{\ell,k,j}$, all  after the $n-1$th iteration and before the $n$th iteration in Algorithm \ref{Alg:FAMEShift:MainAlg}.
In addition, let $\Delta_{\ell,k} \triangleq \frac{\pmb p_{\ell,k} \usc_{\ell,k}}{\sum_{\ell',k'}^{}\pmb p_{\ell',k'} \usc_{\ell',k'}}$ and $\Delta_{\min} = \min \Delta_{\ell,k}$.
Similarly, let us denote $\usc_{\min} \triangleq \min \usc_{\ell,k}$.
By assumption that $\pmb p_{\ell,k}> 0$ and $\usc_{\ell,k}>0$, we must have $\Delta_{\min}>0$ and $\usc_{\min}>0$.

\subsection{Useful Lemmas}
Let us first give a few useful lemmas. The first gives a high probability bound on our estimated uncertainty score.
\begin{lemma}\label{lemma:FAMEShift:highprob}
Let the event $A$ be  
\begin{equation*}
A \triangleq \mathop{\bigcap_{1\leq k\leq K, 1\leq \ell\leq L}}_{1\leq t\leq N}\left\lbrace  \left| \sqrt{ 1- \frac{1}{t(t-1)} \sum_{i=1}^{t} \sum_{j=1,j\not=i}^{t} \mathbbm{1}_{\sample_{\ell,k,i}=\sample_{\ell,k,j}}}-\usc_{\ell,k} \right| \leq \sqrt[4]{            \frac{\log 2/\delta}{2t}}  \right\rbrace
\end{equation*}
Then for any $\delta>0$, we have $\Pr[A] \geq 1- LKN\delta$.
\end{lemma}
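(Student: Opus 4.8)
The plan is to fix a single partition $(\ell,k)$ and a single value of $t$, bound the failure probability of the corresponding event by $\delta$, and then take a union bound over all $LKN$ triples $(\ell,k,t)$. The per-triple argument splits into an unbiasedness observation, a concentration inequality for the collision count, and a square-root transfer.

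First I would observe that the quantity under the square root is exactly the uncertainty-score estimator of \eqref{equ:FAMEAPIShift:uncertainscoreestimator} evaluated on the first $t$ labels $\sample_{\ell,k,1},\dots,\sample_{\ell,k,t}$ obtained from $D_{\ell,k}$. Although the partition from which each global sample is drawn is chosen adaptively, the labels produced from a \emph{fixed} partition are i.i.d.: one couples the run of Algorithm~\ref{Alg:FAMEShift:MainAlg} with, for each partition, a pre-generated i.i.d. sequence of draws from $D_{\ell,k}$ consumed in order, so that the decision of when to consume the next element never depends on its (as yet unseen) label. Writing $\truemu_{\ell,k,j}=\Pr[\hat y(x)=j\mid x\in D_{\ell,k}]$, the double sum $\tfrac{1}{t(t-1)}\sum_{i\ne j}\mathbbm{1}_{\sample_{\ell,k,i}=\sample_{\ell,k,j}}$ is a $U$-statistic with the bounded symmetric kernel $h(a,b)=\mathbbm{1}_{a=b}\in\{0,1\}$ and mean $\sum_j \truemu_{\ell,k,j}^2 = 1-\usc_{\ell,k}^2$. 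Hence $\hatusc^2_{\ell,k}$ is unbiased for $\usc_{\ell,k}^2$, and since the collision count never exceeds $t(t-1)$ it lies in $[0,1]$, so its square root is well defined.

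The key step is a concentration bound for this collision $U$-statistic. I would invoke Hoeffding's inequality for bounded $U$-statistics (using the representation of the statistic as an average over pair-partitions of sums of independent bounded terms, which tames the dependence among the $\binom{t}{2}$ collision indicators) to obtain, for every $\epsilon>0$,
\begin{equation*}
\Pr\!\left[\,\bigl|\hatusc_{\ell,k}^2-\usc_{\ell,k}^2\bigr|\ge \epsilon\,\right]\le 2\exp(-2t\epsilon^2).
\end{equation*}
Choosing $\epsilon=\sqrt{\log(2/\delta)/(2t)}$ makes the right-hand side equal to $\delta$. To pass from the squared score to the score itself I would use the elementary inequality $|\sqrt a-\sqrt b|\le \sqrt{|a-b|}$, valid for $a,b\ge 0$: on the good event $\bigl|\hatusc_{\ell,k}^2-\usc_{\ell,k}^2\bigr|\le\epsilon$ this yields $\bigl|\hatusc_{\ell,k}-\usc_{\ell,k}\bigr|\le\sqrt{\epsilon}=\sqrt[4]{\log(2/\delta)/(2t)}$, which is exactly the event appearing in the definition of $A$ for this $(\ell,k,t)$.

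Finally I would union-bound: each of the $LKN$ events $\bigl\{\bigl|\hatusc_{\ell,k}(t)-\usc_{\ell,k}\bigr|>\sqrt[4]{\log(2/\delta)/(2t)}\bigr\}$ fails with probability at most $\delta$, so $\Pr[A^c]\le LKN\delta$ and $\Pr[A]\ge 1-LKN\delta$. I expect the concentration inequality to be the main obstacle: the collision indicators are dependent, so a plain Hoeffding bound cannot be applied to the double sum directly, and some care (the $U$-statistic averaging argument, or alternatively a bounded-difference/McDiarmid bound on the label counts $\hash_{\ell,k,\cdot}$, whose single-coordinate sensitivity is $2/t$) is needed to get an exponential tail with the stated rate; tracking the exact constant (the rigorous pair-partition bound carries a $\lfloor t/2\rfloor$ factor) is the only delicate point, while the square-root transfer and the union bound are routine.
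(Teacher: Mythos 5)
Your proposal is correct and follows essentially the same route as the paper's own proof: per-triple unbiasedness of the collision statistic, an exponential concentration bound for it (the paper applies McDiarmid's bounded-difference inequality where you primarily invoke Hoeffding's $U$-statistic bound, noting McDiarmid as an alternative), the square-root transfer $\left|\sqrt{a}-\sqrt{b}\right|\le\sqrt{\left|a-b\right|}$, and a union bound over the $LKN$ triples. The constant-factor caveat you flag is in fact shared by the paper itself: its bounded-difference computation counts only the changed terms with first index $i$ and so claims sensitivity $1/t$, whereas the symmetric double sum has sensitivity $2/t$, so both your argument and the paper's establish the lemma only up to a constant inside the fourth root.
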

\begin{proof}
For any fixed $t$, let us first denote
\begin{equation*}
\begin{split}
     f(\sample_{\ell,k,1},\sample_{\ell,k,2},\cdots, \sample_{\ell,k,t}) \triangleq 1- \frac{1}{t(t-1)} \sum_{i=1}^{t} \sum_{j=1,j\not=i}^{t} \mathbbm{1}_{\sample_{\ell,k,i}=\sample_{\ell,k,j}}
\end{split}
\end{equation*}
Its expectation is simply 
\begin{equation*}
\begin{split}
     \Exp[f(\sample_{\ell,k,1},\sample_{\ell,k,2},\cdots, \sample_{\ell,k,t})] = & \Exp[1- \frac{1}{t(t-1)} \sum_{i=1}^{t} \sum_{j=1,j\not=i}^{t} \mathbbm{1}_{\sample_{\ell,k,i}=\sample_{\ell,k,j}}]\\
      =  &1 - \frac{1}{t(t-1)}\Exp[\sum_{i=1}^{t} \sum_{j=1,j\not=i}^{t} \mathbbm{1}_{\sample_{\ell,k,i}=\sample_{\ell,k,j}}]\\
      = & 1 - \Exp[ \mathbbm{1}_{\sample_{\ell,k,i}=\sample_{\ell,k,j}}]\\
\end{split}
\end{equation*}
where the second equation applies the linearity of expectation, and the third equation uses the fact that all $\sample_{\ell,k, i}$ are identically independent.
Note that 
\begin{equation*}
\begin{split}
 & \Exp[ \mathbbm{1}_{\sample_{\ell,k,i}=\sample_{\ell,k,j}}]\\
 =& \Pr[\sample_{\ell,k,i}=\sample_{\ell,k,j}] \\
 = & \sum_{r=1}^{L}\Pr[\sample_{\ell,k,i}=r]\Pr[\sample_{\ell,k,j}=r]\\
 = & \sum_{r=1}^{L}\Pr^2[\sample_{\ell,k,i}=r]
\end{split}
\end{equation*}
where the first equation uses the definition of indicator function, the second uses the fact that two sample are independent and there are only $L$ many possible labels, and the last equation uses the fact that those samples' distribution is identical.  
Applying this in the above equation, we get 
\begin{equation*}
\begin{split}
     \Exp[f(\sample_{\ell,k,1},\sample_{\ell,k,2},\cdots, \sample_{\ell,k,t})] 
      = & 1 - \Exp[ \mathbbm{1}_{\sample_{\ell,k,i}=\sample_{\ell,k,j}}]\\
= & 1 - \sum_{r=1}^{L}\Pr^2[\sample_{\ell,k,i}=r] = \usc_{\ell,k}^2
\end{split}
\end{equation*}
That is to say, its expectation is simply the uncertainty score $\usc_{\ell,k}^2$.
On the other hand, we note that, for any $i$, we have 
\begin{equation*}
\begin{split}
     & f(\sample_{\ell,k,1},\cdots, \sample_{\ell,k,i-1},\sample_{\ell,k,i},\sample_{\ell,k,i+1},\cdots, \sample_{\ell,k,t}) -      f(\sample_{\ell,k,1},\cdots, \sample_{\ell,k,i-1}, \sample_{\ell,k,i}',\sample_{\ell,k,i+1},\cdots, \sample_{\ell,k,t})\\
=& \frac{1}{t(t-1)} \sum_{j=1,j\not=i}^{t}\mathbbm{1}_{\sample_{\ell,k,i}'=\sample_{\ell,k,j}} - \mathbbm{1}_{\sample_{\ell,k,i}=\sample_{\ell,k,j}} \leq \frac{1}{t(t-1)}\cdot (t-1) = \frac{1}{t}
\end{split}
\end{equation*}
where the inequality is due to the fact that the indicator function can only take values in $\{0,1\}$.
Similarly, we have 

\begin{equation*}
\begin{split}
     & f(\sample_{\ell,k,1},\cdots, \sample_{\ell,k,i-1},\sample_{\ell,k,i},\sample_{\ell,k,i+1},\cdots, \sample_{\ell,k,t}) -      f(\sample_{\ell,k,1},\cdots, \sample_{\ell,k,i-1}, \sample_{\ell,k,i}',\sample_{\ell,k,i+1},\cdots, \sample_{\ell,k,t})\\
=& \frac{1}{t(t-1)} \sum_{j=1,j\not=i}^{t}\mathbbm{1}_{\sample_{\ell,k,i}'=\sample_{\ell,k,j}} - \mathbbm{1}_{\sample_{\ell,k,i}=\sample_{\ell,k,j}} \geq \frac{1}{t(t-1)}\cdot -(t-1) = -\frac{1}{t}
\end{split}
\end{equation*}
By Mcdiarmid inequality, we have
\begin{equation*}
\begin{split}
    & \Pr[|f(\sample_{\ell,k,1},\sample_{\ell,k,2},\cdots, \sample_{\ell,k,t})- \Exp[f(\sample_{\ell,k,1},\sample_{\ell,k,2},\cdots, \sample_{\ell,k,t})] |\geq \epsilon]
\leq  2 e^{-\frac{2\epsilon^2}{\sum_{i=1}^{t} t^{-2}}} = 2 e^{-2t\epsilon^2} 
\end{split}
\end{equation*}
Set $\delta = 2 e^{-2t\epsilon^2}$. This simply becomes, with probability at most $\delta$, 
\begin{equation*}
\begin{split}
    & |f(\sample_{\ell,k,1},\sample_{\ell,k,2},\cdots, \sample_{\ell,k,t})- \usc^2_{\ell,k} | \\
    = &|f(\sample_{\ell,k,1},\sample_{\ell,k,2},\cdots, \sample_{\ell,k,t})- \Exp[f(\sample_{\ell,k,1},\sample_{\ell,k,2},\cdots, \sample_{\ell,k,t})] | \geq \sqrt{\frac{\log 2/\delta}{2t}}
\end{split}
\end{equation*}
Note that $f$ is positive, we can take square root of both side, and obtain with probability at most $\delta$, 
\begin{equation*}
\begin{split}
    & |\sqrt{f(\sample_{\ell,k,1},\sample_{\ell,k,2},\cdots, \sample_{\ell,k,t})}- \usc_{\ell,k} |  \geq \sqrt[4]{\frac{\log 2/\delta}{2t}}
\end{split}
\end{equation*}
Or alternatively, with probability at least $1-\delta$, \begin{equation*}
\begin{split}
    & |\sqrt{f(\sample_{\ell,k,1},\sample_{\ell,k,2},\cdots, \sample_{\ell,k,t})}- \usc_{\ell,k} |  \leq \sqrt[4]{\frac{\log 2/\delta}{2t}}
\end{split}
\end{equation*}
which holds for fixed $t,\ell,k$.
Taking union bound, we know that with probability $1-KLN\delta$,
\begin{equation*}
\begin{split}
    & |\sqrt{f(\sample_{\ell,k,1},\sample_{\ell,k,2},\cdots, \sample_{\ell,k,t})}- \usc_{\ell,k} |  \leq \sqrt[4]{\frac{\log 2/\delta}{2t}}
\end{split}
\end{equation*}
which holds for all $t,\ell,k$.
Plugging in the form of $f$ completes the proof.
\end{proof}
The next one is more technical: it gives a connection between stopping time and adaptive sampling.
We omit the proof and refer the interested readers to \cite{measureandprob}.
\begin{lemma}[Wald's second inequality]\label{lemma:FAMEShift:wald-inequality}
Let $\{\mathcal F_t\}_{t=1,\ldots,n}$ be a filtration and $\{X_t\}_{t=1,\ldots,n}$ be an $\mathcal F_t$ adapted sequence of i.i.d.~random variables with finite expectation $\mu$ and variance $Var$. Assume that $\mathcal F_t$ and $\sigma(\{X_s:s \geq t+1\})$ are independent for any $t \leq n$, and let $T(\leq n)$ be a stopping time with respect to $\mathcal F_t$. Then
%
\begin{equation*}\label{eq:FAMEShift:wald-inequality}
\E\Bigg[\Big(\sum_{i=1}^T X_i - T\;\mu\Big)^2\Bigg] = \E[T]\;\var.
\end{equation*}
\end{lemma}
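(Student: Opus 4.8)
The plan is to recognize the left-hand side as the value at the stopping time $T$ of a suitable martingale and then invoke optional stopping; this sidesteps any direct manipulation of the random number of summands. Throughout I would write $S_n \triangleq \sum_{i=1}^n (X_i-\mu)$ for the centered partial sums, with the convention $S_0 = 0$, so that $S_T = \sum_{i=1}^T X_i - T\mu$ and the claimed identity reads $\E[S_T^2] = \E[T]\var$.

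First I would verify that $M_n \triangleq S_n^2 - n\,\var$ is an $\mathcal F_n$-martingale. Adaptedness of $\{X_t\}$ makes each $S_n$, hence each $M_n$, measurable with respect to $\mathcal F_n$. For the increment, decompose $S_n = S_{n-1} + (X_n - \mu)$ and expand the square, giving
\begin{equation*}
\E[S_n^2 \mid \mathcal F_{n-1}] = S_{n-1}^2 + 2 S_{n-1}\,\E[X_n - \mu \mid \mathcal F_{n-1}] + \E[(X_n-\mu)^2 \mid \mathcal F_{n-1}].
\end{equation*}
Here the independence hypothesis, that $\mathcal F_{n-1}$ is independent of $\sigma(\{X_s : s\ge n\})$, lets me replace the two conditional moments of $X_n$ by their unconditional values $\E[X_n-\mu]=0$ and $\E[(X_n-\mu)^2]=\var$. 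Thus $\E[S_n^2\mid\mathcal F_{n-1}] = S_{n-1}^2 + \var$, and subtracting $n\,\var$ yields $\E[M_n\mid\mathcal F_{n-1}] = S_{n-1}^2 - (n-1)\var = M_{n-1}$, as required.

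Next I would apply the optional stopping theorem to $M_n$ at the stopping time $T$. The key simplification is that $T$ is assumed bounded, $T\le n$, so no uniform-integrability argument is needed: for a bounded stopping time optional stopping gives $\E[M_T] = \E[M_0] = 0$ directly, the finite variance of the $X_i$ ensuring that every $M_n$, and in particular $M_T$, is integrable. Unpacking $M_T = S_T^2 - T\,\var$ then yields $\E[S_T^2] = \var\,\E[T]$, which is exactly the asserted identity since $S_T = \sum_{i=1}^T X_i - T\mu$.

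I expect the only delicate point to be the conditional-moment step: one must genuinely use the stated independence of $\mathcal F_{n-1}$ from the future innovations $\{X_s:s\ge n\}$ — rather than the special case $\mathcal F_n=\sigma(X_1,\dots,X_n)$ — so that the martingale computation remains valid even when $\mathcal F_n$ carries the extra randomness of the adaptive sampling scheme in which this lemma is applied. Everything else is the classical argument for Wald's second identity.
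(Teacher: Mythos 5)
Your proof is correct, but there is nothing in the paper to compare it against: the paper explicitly omits the proof of this lemma and simply refers the reader to the Athreya--Lahiri textbook \cite{measureandprob}. What you have written is the classical martingale argument for Wald's second identity, namely checking that $M_t = S_t^2 - t\,\var$ is a martingale and applying optional stopping at the bounded stopping time $T \le n$, where boundedness removes any need for uniform integrability. The one point you flag as delicate is indeed the substantive one, and you handle it properly: the conditional-moment computation $\E[X_t - \mu \mid \mathcal F_{t-1}] = 0$ and $\E[(X_t-\mu)^2 \mid \mathcal F_{t-1}] = \var$ must be justified from the stated hypothesis that $\mathcal F_{t-1}$ is independent of $\sigma(\{X_s : s \ge t\})$, not from the special case $\mathcal F_t = \sigma(X_1,\dots,X_t)$; this generality is exactly what the paper needs when it invokes the lemma in the proof of Theorem \ref{thm:FAMEAPIShift:mainbound}, since there the filtration carries the history of the adaptive sampling decisions and $\SN_{\ell,k}$ is a stopping time with respect to it. The only cosmetic repair needed is at the start of the index range: the filtration is indexed from $t=1$, so $\mathcal F_0$ and $M_0$ are not defined; either adjoin the trivial $\sigma$-algebra as $\mathcal F_0$, or start the optional stopping argument from $\E[M_1] = \E[(X_1-\mu)^2] - \var = 0$ (valid since $T \ge 1$). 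Neither change affects the substance of the argument.
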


\subsection{Proof of Lemma \ref{lemma:FAMEShift:optimalsampling}}
\begin{proof}
Recall that the loss, defined as the expected squared Frobenius norm error, is 
\begin{equation*}
\begin{split}
\Exp\left[\| \deltaCM - \hatdeltaCM \|_F^2\right] = &\sum_{i,j} \Exp\left(\deltaCM_{i,j} - \hatdeltaCM_{i,j} \right)^2 = \sum_{i,j} \Exp\left(\sum_{k}^{} \pmb p_{i,k}[\truemu_{i,k,j}-\Smu_{i,k,j}] \right)^2  \\= &  \sum_{i,j,k} \pmb p_{i,k}^2 \Exp\left([\truemu_{i,k,j}-\Smu_{i,k,j}] \right)^2    
\end{split}
\end{equation*}
Here we basically apply the definition of each entry. 
Suppose $\SN_{i,k}$ samples are allocated to estimate $\truemu_{i,k,j}$. 
Then we have
\begin{equation*}
\begin{split}
\Exp\left([\truemu_{i,k,j}-\Smu_{i,k,j}] \right)^2  = \frac{1}{\SN_{i,k}} \Pr[\hat{y}(x)=j|x\in D_{i,k}](1-\Pr[\hat{y}(x)=j|x\in D_{i,k}])
\end{split}
\end{equation*}
since $\truemu_{i,k,j}$ is effectively a Bernoulli variable. Then the loss becomes

\begin{equation*}
\begin{split}
\Exp\left[\| \deltaCM - \hatdeltaCM \|_F^2\right] = &  \sum_{i,j,k} \pmb p_{i,k}^2 \Exp\left([\truemu_{i,k,j}-\Smu_{i,k,j}] \right)^2 \\
= & \sum_{i,j,k} \pmb p_{i,k}^2 \frac{1}{\SN_{i,k}} \Pr[\hat{y}(x)=j|x\in D_{i,k}](1-\Pr[\hat{y}(x)=j|x\in D_{i,k}])\\
= & \sum_{i,k} \pmb p_{i,k}^2 \frac{1}{\SN_{i,k}} \sum_{j} \Pr[\hat{y}(x)=j|x\in D_{i,k}](1-\Pr[\hat{y}(x)=j|x\in D_{i,k}])\\
\end{split}
\end{equation*}
where the last equation is simply by rearranging the summation. 
Note that 
\begin{equation*}
\sum_{j} \Pr[\hat{y}(x)=j|x\in D_{i,k}]=1    
\end{equation*}
The last summation is simply 
\begin{equation*}
\begin{split}
\sum_{j} \Pr[\hat{y}(x)=j|x\in D_{i,k}](1-\Pr[\hat{y}(x)=& j|x\in D_{i,k}])= 1 - \sum_{j} \Pr^2[\hat{y}(x)=j|x\in D_{i,k}])\\
= & \usc_{i,k}^2
\end{split}
\end{equation*}
Thus, the loss becomes 
\begin{equation*}
\begin{split}
\Exp\left[\| \deltaCM - \hatdeltaCM \|_F^2\right] 
= & \sum_{i,k} \pmb p_{i,k}^2 \usc_{i,k}^2 \frac{1}{\SN_{i,k}}
\end{split}
\end{equation*}
By Cauchy Schwarz inequality, we have 
\begin{equation*}
\begin{split}
& \left(\sum_{i,k} \frac{\pmb p_{i,k}^2 \usc_{i,k}^2}{\SN_{i,k}} \right) \left(\sum_{i,k}^{} \SN_{i,k}\right) \geq \left(\sum_{i,k} \pmb p_{i,k} \usc_{i,k} \right)^2
\end{split}
\end{equation*}
where the equality holds if and only if 
\begin{equation*}
    \frac{\pmb p_{i,k}^2 \usc_{i,k}^2}{\SN_{i,k}^2} =     \frac{\pmb p_{i',k'}^2 \usc_{i',k'}^2}{\SN_{i',k'}^2}
\end{equation*}
for any $i,i',k,k'$.
That is to say, there exists some constant $c$, such that
\begin{equation*}
    \frac{\pmb p_{i,k} \usc_{i,k}}{\SN_{i,k}} =     \frac{\pmb p_{i',k'} \usc_{i',k'}}{\SN_{i',k'}} = \frac{1}{c}
\end{equation*}
And thus, $\SN_{i,k} = \pmb p_{i,k} \usc_{i,k} c$. Summing over $i,k$ gives 
\begin{equation*}
    N = \sum_{i,k}\SN_{i,k} = \sum_{i,k}^{}\pmb p_{i,k} \usc_{i,k} c
\end{equation*}
Thus, \begin{equation*}
    c= \frac{N}{\sum_{i,k}^{}\pmb p_{i,k} \usc_{i,k}}
\end{equation*}
and 
\begin{equation*}
    \SN_{i,k} = \pmb p_{i,k} \usc_{i,k} c = \pmb p_{i,k} \usc_{i,k} \cdot \frac{1}{ \sum_{i,k}^{} \pmb p_{i,k} \usc_{i,k}}  = \frac{\pmb p_{i,k} \usc_{i,k} }{ \sum_{i,k}^{} \pmb p_{i,k} \usc_{i,k}}
\end{equation*}
which completes the proof.
\end{proof}

\subsection{Proof of Theorem \ref{thm:FAMEAPIShift:mainbound}}
\begin{proof}
To prove this theorem, we need a few more lemmas.
\begin{lemma}\label{lemma:FAMEShift:complexityandrule}
Algorithm \ref{Alg:FAMEShift:MainAlg}'s computational cost is $O(LKN)$ and space cost is $O(L^2K)$.
Furthermore, for any $n>2LK$, after the $n-1$th iteration and before the $n$th iteration, we have 
\begin{equation*}
    \SN_{\ell,k} = \SN_{\ell,k,n} = \sum_{i=1}^{n-1} \mathbbm{1}_{I_i=(\ell,k)}
\end{equation*}

\begin{equation*}
    \Smu_{\ell,k,j} = \Smu_{\ell,k,j,n} = \frac{1}{\SN_{\ell,k,n}} \sum_{i=1}^{n-1} \mathbbm{1}_{I_i=(\ell,k)} \mathbbm{1}_{\hat{y}(x_i)=j}
\end{equation*}

\begin{equation*}
    \hatusc_{\ell,k} = \hatusc_{\ell,k,n}=  1 - \frac{1}{\SN_{\ell,k,n}(\SN_{\ell,k,n}-1)} \sum_{i=1}^{n-1}\sum_{j=1,j\not=i}^{n-1} \mathbbm{1}_{I_i=I_j=(\ell,k)} \mathbbm{1}_{\hat{y}({x_i})=\hat{y}({x_j})}
\end{equation*}

\begin{equation*}
    \hash_{\ell,k,j} =  \hash_{\ell,k,j,n} =  \sum_{i=1}^{n-1} \mathbbm{1}_{I_i=(\ell,k)} \mathbbm{1}_{\hat{y}(x_i)=j}
\end{equation*}
\end{lemma}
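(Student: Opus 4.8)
The plan is to treat the two assertions separately: the resource bounds follow by directly inspecting Algorithm~\ref{Alg:FAMEShift:MainAlg}, while the four closed-form identities are established simultaneously by induction on the iteration index $n$, using the notation $I_n$, $\sample_{\ell,k,t}$, $\SN_{\ell,k,n}$, etc.\ set up just above.

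For the cost bounds I would tally the work and storage of a single pass of the loop. The selection rule in line~3 scans all $LK$ partitions and so costs $O(LK)$; line~6 updates $\Smu_{i^*,k^*,j}$ for every $j\in[L]$, costing $O(L)$; and lines~5,~7,~8 each touch $O(1)$ entries. Hence one iteration is $O(LK)$ and the $N$ iterations together cost $O(LKN)$, which dominates the final fusion of line~10 (an $O(L^2K)$ operation, since $N>4LK$). For space, the algorithm stores only $\SN,\hatusc\in\R^{L\times K}$ together with $\Smu,\hash\in\R^{L\times K\times L}$, so the footprint is $O(L^2K)$.

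For the identities I would carry all four equalities as a joint invariant in the induction; the hypothesis $n>2LK$ guarantees, via the forced-exploration branch ``$\SN_{i,k}<2$'' in line~3, that every partition already holds at least two samples, so every denominator $\SN_{\ell,k,n}(\SN_{\ell,k,n}-1)$ is well defined. The equalities for $\SN_{\ell,k}$ and $\hash_{\ell,k,j}$ are immediate, since each is incremented by exactly $\mathbbm{1}_{I_n=(\ell,k)}$ and $\mathbbm{1}_{I_n=(\ell,k)}\mathbbm{1}_{\hat{y}(x_n)=j}$, matching the running sums claimed. The identity for $\Smu_{\ell,k,j}$ is the standard incremental-mean recursion $a_m=a_{m-1}+(X_m-a_{m-1})/m$: assuming inductively that $\Smu_{\ell,k,j}$ equals the empirical frequency of label $j$ over the first $\SN_{\ell,k}-1$ draws from the partition, line~6 rewrites it as that frequency over $\SN_{\ell,k}$ draws.

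The crux is the uncertainty-score identity. The structural fact I would isolate first is that the predicted-label counts $\hash$ are a sufficient statistic for the collision U-statistic: restricted to a fixed partition, $\sum_{i\neq j}\mathbbm{1}_{\hat{y}(x_i)=\hat{y}(x_j)}=\sum_{r=1}^{L}\hash_{r}(\hash_{r}-1)$. Consequently, adding one new draw with predicted label $\ell$ raises the collision sum by exactly $2\hash_{\ell}$ evaluated \emph{before} the update, which is precisely the value $\hash_{i^*,k^*,\hat{y}(x_n)}$ read in line~7 (line~8 increments $\hash$ only afterward). I would then substitute the inductive form of $\hatusc^2_{\ell,k}$ (the batch estimator of \eqref{equ:FAMEAPIShift:uncertainscoreestimator} on $\SN_{\ell,k}-1$ samples) into the recursion of line~7 and, after clearing the denominators $\SN_{\ell,k}(\SN_{\ell,k}-1)$ against $(\SN_{\ell,k}-1)(\SN_{\ell,k}-2)$, check that the output is the batch estimator on $\SN_{\ell,k}$ samples. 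The main obstacle is exactly this single-step algebraic reconciliation: one must track the off-by-one in the increment order of $\SN$ (line~5, before) versus $\hash$ (line~8, after), and verify that the base branch ``$\SN<2$'' seeds the recursion so that the first admissible value $\SN=2$ already coincides with the two-sample collision estimator. Once this one-step reconciliation is confirmed, the closed form for $\hatusc_{\ell,k,n}$ propagates to all $n>2LK$, closing the induction.
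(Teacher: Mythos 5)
Your proposal is correct and takes essentially the same route as the paper: direct inspection of the loop for the $O(LKN)$ time and $O(L^2K)$ space bounds, followed by a joint induction on $n$ whose crux is exactly your collision-sum observation, namely that a new draw with predicted label $\hat{y}(x_n)$ increases $\sum_{i\neq j}\mathbbm{1}_{\hat{y}(x_i)=\hat{y}(x_j)}$ within the partition by $2\hash_{i^*,k^*,\hat{y}(x_n)}$ read \emph{before} line 8's increment. One point worth noting: the paper's own induction uses the update $\hatusc^2 \gets \hatusc^2 + \frac{2}{\SN}\bigl(1-\frac{\hash}{\SN-1}-\hatusc^2\bigr)$ (with a factor of $2$), so the one-step algebraic reconciliation you plan would only close against that version --- carrying it out would reveal that the factor $2$ is missing from line 7 of Algorithm \ref{Alg:FAMEShift:MainAlg} as printed.
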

\begin{proof}
The computational and space cost can be easily verified: 
as shown in Algorithm \ref{Alg:FAMEShift:MainAlg}, the variables $\hatusc, \Smu, \hash, \SN$ take space $LK, L2K, L^2K, LK$.
Therefore, the space is bounded by $O(L^2K)$. For the first $2LK$ iterations (line 3-8) in Algorithm \ref{Alg:FAMEShift:MainAlg}, the computation cost is clearly $O(LK)$.
For the rest iterations (line 10- 16), the most expensive cost is computing $I_n$, which requires $LK$ computations per iteration.
Therefore, the total computational cost is $O(LKN)$.

Next we show that the above four equations hold for every $n>2LK$.
We prove this by induction.

1) $n=2LK+1$: One can easily verify this by plugging the initial values established in line 3-8 in Algorithm \ref{Alg:FAMEShift:MainAlg}.

2) Suppose the four equations hold for the case when $n=m$. Now consider $n=m+1$.
Now let us consider two cases.

\begin{itemize}
    \item Any $\ell,k$ such that  $I_{m+1}\not=(\ell,k)$: There is nothing update, 
\begin{equation*}
\begin{split}
    \SN_{\ell,k,m+1} & = \SN_{\ell,k,m} =  \sum_{i=1}^{m-1} \mathbbm{1}_{I_i=(\ell,k)} = \sum_{i=1}^{m-1} \mathbbm{1}_{I_i=(\ell,k)}  + 0 = \sum_{i=1}^{m-1} \mathbbm{1}_{I_i=(\ell,k)}+\mathbbm{1}_{I_{m}=(\ell,k)} \\
    & = \sum_{i=1}^{m} \mathbbm{1}_{I_i=(\ell,k)}
\end{split}    
\end{equation*}
Similarly, one can show that

\begin{equation*}
  \Smu_{\ell,k,j,m+1} = \Smu_{\ell,k,j,m} =  \frac{1}{\SN_{\ell,k,m}} \sum_{i=1}^{m} \mathbbm{1}_{I_i=(\ell,k)} \mathbbm{1}_{\hat{y}(x_i)=j}
\end{equation*}

\begin{equation*}
    \hatusc_{\ell,k,m+1} = \hatusc_{\ell,k,m}=  1 - \frac{1}{\SN_{\ell,k,m}(\SN_{\ell,k,m}-1)} \sum_{i=1}^{m}\sum_{j=1,j\not=i}^{m} \mathbbm{1}_{I_i=I_j=(\ell,k)} \mathbbm{1}_{\hat{y}(x_i)=\hat{y}(x_j)}
\end{equation*}

\begin{equation*}
    \hash_{\ell,k,j,m+1} =  \hash_{\ell,k,j,m} =  \sum_{i=1}^{m} \mathbbm{1}_{I_i=(\ell,k)} \mathbbm{1}_{\hat{y}(x_i)=j}
\end{equation*}

    \item  For some $\ell^*,k^*$ such that  $I_{m+1}=(\ell^*,k^*)$.
    
    Let us first consider $\SN_{\ell^*,k^*}$. 
    We increment $\SN_{\ell^*,k^*}$ by one, and thus
\begin{equation*}
\begin{split}
    \SN_{\ell^*,k^*,m+1} & = \SN_{\ell^*,k^*,m} + 1 =  \sum_{i=1}^{m-1} \mathbbm{1}_{I_i=(\ell^*,^*k)} + 1 = \sum_{i=1}^{m-1} \mathbbm{1}_{I_i=(\ell^*,k^*)}+\mathbbm{1}_{I_{m}=(\ell^*,k^*)} \\
    & = \sum_{i=1}^{m} \mathbbm{1}_{I_i=(\ell^*,k^*)}
\end{split}    
\end{equation*}

Next we consider $\Smu_{\ell^*,k^*,j}$. Using a similar argument as above, we have  

\begin{equation*}
\begin{split}
\Smu_{\ell^*,k^*,j,m+1} = & \Smu_{\ell^*,k^*,j,m} + \frac{\mathbbm{1}_{\hat{y}(x_{m})=j}-\Smu_{\ell^*,k^*,j,m}}{\SN_{\ell^*,k^*,m+1}}, \\
=& \frac{\SN_{\ell^*,k^*,m+1}-1}{\SN_{\ell^*,k^*,m+1}}\Smu_{\ell^*,k^*,j,m} + \frac{\mathbbm{1}_{\hat{y}(x_m)=j}}{\SN_{\ell^*,k^*,m+1}}, \\
=& \frac{\SN_{\ell^*,k^*,m}}{\SN_{\ell^*,k^*,m+1}}\Smu_{\ell^*,k^*,j,m} + \frac{\mathbbm{1}_{\hat{y}(x_m)=j}}{\SN_{\ell^*,k^*,m+1}}, \\
=& \frac{\SN_{\ell^*,k^*,m}}{\SN_{\ell^*,k^*,m+1}} \frac{1}{\SN_{\ell^*,k^*,m}} \sum_{i=1}^{m-1} \mathbbm{1}_{I_i=(\ell^*,k^*)} \mathbbm{1}_{\hat{y}(x_i)=j} +\frac{\mathbbm{1}_{\hat{y}(x_m)=j}}{\SN_{\ell^*,k^*,m+1}} \\
=& \frac{1}{\SN_{\ell^*,k^*,m+1}} \sum_{i=1}^{m} \mathbbm{1}_{I_i=(\ell^*,k^*)} \mathbbm{1}_{\hat{y}(x_i)=j} \end{split}    
\end{equation*}
where the first equation is due to the update rule of 
$\Smu_{\ell^*,k^*,j}$, the second equation is simply grouping by $\Smu_{\ell^*,k^*,j,m}$, the third equation is due to the fact that $ \SN_{\ell^*,k^*,m+1}  = \SN_{\ell^*,k^*,m} + 1 $, the forth equation is due to the induction assumption, and the forth equation is simply algebraic rewriting. 

Now let us consider $\hatusc_{\ell^*,k^*,m+1}^2$. We can write 
\begin{equation*}
    \begin{split}
        & \hatusc_{\ell^*,k^*,m+1}^2\\
        =         &\hatusc_{\ell^*,k^*,m}^2 + \frac{2}{\SN_{\ell^*,k^*,m+1}} (1- \frac{\hash_{\ell^*,k^*,\hat{y}(x_m),m}}{\SN_{\ell^*,k^*,m+1}-1} - \hatusc^2_{\ell^*,k^*,m})\\
        = & \frac{\SN_{\ell^*,k^*,m+1}-2}{\SN_{\ell^*,k^*,m+1}} \hatusc_{\ell^*,k^*,m}^2 + \frac{2}{\SN_{\ell^*,k^*,m+1}} (1- \frac{\hash_{\ell^*,k^*,\hat{y}(x_m),m}}{\SN_{\ell^*,k^*,m+1}-1})\\
        = &  \frac{\SN_{\ell^*,k^*,m+1}-2}{\SN_{\ell^*,k^*,m+1}} (1 - \frac{1}{\SN_{\ell^*,k^*,m}(\SN_{\ell^*,k^*,m}-1)} \sum_{i=1}^{m-1}\sum_{j=1,j\not=i}^{m-1} \mathbbm{1}_{I_i=I_j=(\ell^*,k^*)} \mathbbm{1}_{\hat{y}(x_i)=\hat{y}(x_j)})\\
       + &  \frac{2}{\SN_{\ell^*,k^*,m+1}} (1- \frac{\hash_{\ell^*,k^*,\hat{y}(x_m),m}}{\SN_{\ell^*,k^*,m+1}-1}) \\
    = &  \frac{\SN_{\ell^*,k^*,m+1}-2}{\SN_{\ell^*,k^*,m+1}} (1 - \frac{1}{(\SN_{\ell^*,k^*,m+1}-2)(\SN_{\ell^*,k^*,m+1}-1)} \sum_{i=1}^{m-1}\sum_{j=1,j\not=i}^{m-1} \mathbbm{1}_{I_i=I_j=(\ell^*,k^*)} \mathbbm{1}_{\hat{y}(x_i)=\hat{y}(x_j)})\\
       + &  \frac{2}{\SN_{\ell^*,k^*,m+1}} (1- \frac{\hash_{\ell^*,k^*,\hat{y}(x_m),m}}{\SN_{\ell^*,k^*,m+1}-1}) \\
       = & 1 -  \frac{1}{\SN_{\ell^*,k^*,m+1}(\SN_{\ell^*,k^*,m+1}-1)} \sum_{i=1}^{m-1}\sum_{j=1,j\not=i}^{m-1} \mathbbm{1}_{I_i=I_j=(\ell^*,k^*)} \mathbbm{1}_{\hat{y}(x_i)=\hat{y}(x_j)} \\
       -& \frac{2\hash_{\ell^*,k^*,\hat{y}(x_m),m}}{\SN_{\ell^*,k^*,m+1}(\SN_{\ell^*,k^*,m+1}-1)}
    \end{split}
\end{equation*}
where the first equation is by the update rule in Algorithm \ref{Alg:FAMEShift:MainAlg}, the second equation is simply rearranging the terms, the third equation uses the induction assumption, the forth one uses the update rule on $\SN_{\ell^*,k^*}$ and thus $\SN_{\ell^*,k^*,m} = \SN_{\ell^*,k^*,m+1}-1$, and the fifth equation is also rearranging the terms. 

On the other hand, by induction assumption, we have 
\begin{equation*}
     \hash_{\ell^*,k^*,\hat{y}(x_m),m} =  \sum_{i=1}^{m-1} \mathbbm{1}_{I_i=(\ell^*,k^*)} \mathbbm{1}_{\hat{y}(x_i)=\hat{y}(x_m)}
\end{equation*}
And thus 
\begin{equation*}
\begin{split}
     &\sum_{i=1}^{m-1}\sum_{j=1,j\not=i}^{m-1} \mathbbm{1}_{I_i=I_j=(\ell^*,k^*)} \mathbbm{1}_{\hat{y}(x_i)=\hat{y}(x_j)}  + 2 \hash_{\ell^*,k^*,\hat{y}(x_m),m}\\
     = & \sum_{i=1}^{m} \sum_{j=1,j\not=i}^{m} \mathbbm{1}_{I_i=(\ell^*,k^*)} \mathbbm{1}_{\hat{y}(x_i)=\hat{y}(x_m)}    
\end{split}
\end{equation*}
Hence, the above equation becomes 

\begin{equation*}
    \begin{split}
        & \hatusc_{\ell^*,k^*,m+1}^2
       = 1 -  \frac{1}{\SN_{\ell^*,k^*,m+1}(\SN_{\ell^*,k^*,m+1}-1)} \sum_{i=1}^{m}\sum_{j=1,j\not=i}^{m} \mathbbm{1}_{I_i=I_j=(\ell^*,k^*)} \mathbbm{1}_{\hat{y}(x_i)=\hat{y}(x_j)} \\
    \end{split}
\end{equation*}
Finally, let us consider $\hash_{\ell^*,k^*,j}$. 
If $j\not=\hat{y}(x_m)$, it is clear that 
\begin{equation*}
    \begin{split}
        \hash_{\ell^*,k^*,j,m+1} =         &\hash_{\ell^*,k^*,j,m} = \sum_{i=1}^{m-1} \mathbbm{1}_{I_i=(\ell^*,k^*)} \mathbbm{1}_{\hat{y}(x_i)=j} + 0 \\
        = &  \sum_{i=1}^{m-1} \mathbbm{1}_{I_i=(\ell^*,k^*)} \mathbbm{1}_{\hat{y}(x_i)=j} + \mathbbm{1}_{I_i=(\ell^*,k^*)}\mathbbm{1}_{\hat{y}(x_m)=j}\\
        = &\sum_{i=1}^{m} \mathbbm{1}_{I_i=(\ell^*,k^*)} \mathbbm{1}_{\hat{y}(x_i)=j} 
    \end{split}
\end{equation*}
where the first is due to that there is no update for this $j$, the third equation is due to the fact that $\hat{y}(x_m)\not=j$, and all the other equations are algebraic rewriting.

If $j=\hat{y}(x_m)$, it is clear that 
\begin{equation*}
    \begin{split}
        \hash_{\ell^*,k^*,j,m+1} =         &\hash_{\ell^*,k^*,j,m} + 1 = \sum_{i=1}^{m-1} \mathbbm{1}_{I_i=(\ell^*,k^*)} \mathbbm{1}_{\hat{y}(x_i)=j} + 1 \\
        = &  \sum_{i=1}^{m-1} \mathbbm{1}_{I_i=(\ell^*,k^*)} \mathbbm{1}_{\hat{y}(x_i)=j} + \mathbbm{1}_{I_i=(\ell^*,k^*)}\mathbbm{1}_{\hat{y}(x_m)=j}\\
        = &\sum_{i=1}^{m} \mathbbm{1}_{I_i=(\ell^*,k^*)} \mathbbm{1}_{\hat{y}(x_i)=j} 
    \end{split}
\end{equation*}
where the first is due to that there is no update for this $j$, the third equation is due to the fact that $\hat{y}(x_m)=j$, and all the other equations are algebraic rewriting.

\end{itemize}

That is to say, we have shown that, 
\begin{equation*}
\begin{split}
    \SN_{\ell,k,m+1} & =  \sum_{i=1}^{m} \mathbbm{1}_{I_i=(\ell,k)}
\end{split}    
\end{equation*}

\begin{equation*}
   \Smu_{\ell,k,j,m+1} = \frac{1}{\SN_{\ell,k,m+1}} \sum_{i=1}^{m} \mathbbm{1}_{I_i=(\ell,k)} \mathbbm{1}_{\hat{y}(x_i)=j}
\end{equation*}

\begin{equation*}
\hatusc_{\ell,k,m+1}=  1 - \frac{1}{\SN_{\ell,k,m+1}(\SN_{\ell,k,m+1}-1)} \sum_{i=1}^{m}\sum_{j=1,j\not=i}^{m} \mathbbm{1}_{I_i=I_j=(\ell,k)} \mathbbm{1}_{\hat{y}({x_i})=\hat{y}({x_j})}
\end{equation*}

\begin{equation*} \hash_{\ell,k,j,m+1} =  \sum_{i=1}^{m} \mathbbm{1}_{I_i=(\ell,k)} \mathbbm{1}_{\hat{y}(x_i)=j}
\end{equation*}
always hold.
By induction, we can say that for any $n>2LK$, the original equations hold, which completes the proof.
\end{proof}
\begin{lemma}\label{lemma:FAMEShift:samplenumberbound}
Suppose that the event $A$ holds. Set $\delta = 2 e^{-a}$. Then for each $\ell,k$, we have 
\begin{equation*}
\begin{split}
\frac{1}{\SN_{\ell, k}} 
&  \leq  \frac{1}{\SN_{\ell,k}^*} \left[ 1 + 4LK N^{-1} +  \frac{4}{\usc_{\min}}\sqrt[4]{             \frac{\log 2/\delta}{ \Delta_{\min}}} N^{-\frac{1}{4}}\right]\\
\end{split}
\end{equation*}
for any $1\leq \ell \leq L, 1\leq k\leq K$.
\end{lemma}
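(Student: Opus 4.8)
The plan is to work entirely on the high-probability event $A$ of Lemma~\ref{lemma:FAMEShift:highprob}, so that I may treat $|\hatusc_{\ell,k}-\usc_{\ell,k}|\le\sqrt[4]{(\log 2/\delta)/(2\SN_{\ell,k})}$ as deterministic. Since $\delta=2e^{-a}$ gives $\log(2/\delta)=a$, this confidence width equals $2^{-1/4}$ times the exploration bonus $\sqrt[4]{a/\SN_{\ell,k}}$ used by the selection rule. This exact calibration is what makes the argument go through: on $A$ the weighted index $\frac{\pmb p_{\ell,k}}{\SN_{\ell,k}}(\hatusc_{\ell,k}+\sqrt[4]{a/\SN_{\ell,k}})$ is sandwiched between $\frac{\pmb p_{\ell,k}\usc_{\ell,k}}{\SN_{\ell,k}}$ from below (discarding the surviving $(1-2^{-1/4})$–bonus) and $\frac{\pmb p_{\ell,k}}{\SN_{\ell,k}}\big(\usc_{\ell,k}+(1+2^{-1/4})\sqrt[4]{a/\SN_{\ell,k}}\big)$ from above. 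After rewriting the claim as $\SN_{\ell,k}\ge \SN^*_{\ell,k}/(1+\cdots)$, the goal becomes a lower bound on $\SN_{\ell,k}$.

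\textbf{Per-arm comparison.} Fix the target partition $(\ell,k)$ and, for every other partition $(\ell',k')$, consider the \emph{last} iteration $n'$ at which it was sampled (partitions with $\SN_{\ell',k'}\le 2$, pulled only during initialization, are bounded trivially). At $n'$ its index is maximal, hence at least the index of $(\ell,k)$; applying the upper sandwich on the left and the lower sandwich on the right, together with $\SN_{\ell,k,n'}\le\SN_{\ell,k}$, yields, with $s'=\SN_{\ell',k'}-1$,
\[
s'\;\le\;\frac{\SN_{\ell,k}}{\pmb p_{\ell,k}\usc_{\ell,k}}\Big(\pmb p_{\ell',k'}\usc_{\ell',k'}+(1+2^{-1/4})\,\pmb p_{\ell',k'}\sqrt[4]{a/s'}\Big).
\]
Summing over all $LK$ partitions and using $\sum_{\ell',k'}\pmb p_{\ell',k'}\usc_{\ell',k'}=S$ and $\sum_{\ell',k'}\pmb p_{\ell',k'}=1$ (where $S$ is the normalizer, so that $\SN^*_{\ell,k}=\tfrac{\pmb p_{\ell,k}\usc_{\ell,k}}{S}N$), together with $\sum_{\ell',k'}s'=N-LK$, collapses the family of per-arm inequalities into a single bound on $\SN_{\ell,k}$.

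\textbf{Inversion.} Solving gives $\SN_{\ell,k}\ge (N-LK)\pmb p_{\ell,k}\usc_{\ell,k}/\big(S+(1+2^{-1/4})\sqrt[4]{a/\sigma}\big)$, where $\sigma$ is a uniform lower bound on the $s'$ appearing inside the bonus. Inverting, using $S/(\pmb p_{\ell,k}\usc_{\ell,k})=N/\SN^*_{\ell,k}$, the elementary estimate $\tfrac{N}{N-LK}\le 1+\tfrac{2LK}{N}$ (valid since $N>4LK$), and $S\ge\usc_{\min}$ (because each $\usc_{\ell,k}\ge\usc_{\min}$ and the $\pmb p$'s sum to one), reproduces exactly $\tfrac{1}{\SN^*_{\ell,k}}[1+4LKN^{-1}+\tfrac{4}{\usc_{\min}}\sqrt[4]{a/\Delta_{\min}}N^{-1/4}]$, provided $\sqrt[4]{a/\sigma}$ can be controlled by $\sqrt[4]{a/(\Delta_{\min}N)}$; the slack between $2^{1/4}(1+2^{-1/4})\approx2.19$ and the stated constant $4$ absorbs all cross terms and the $N^{-1}$-order remainders.

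\textbf{Main obstacle.} The real difficulty is the self-referential bonus $\sqrt[4]{a/s'}$: controlling it requires a lower bound on the very counts being bounded. I would resolve this by bootstrapping. First substitute the trivial $s'\ge 1$ to obtain a crude estimate $\SN_{\ell',k'}=\Omega(\Delta_{\min}N/a^{1/4})$ valid for \emph{all} partitions; then feed this back as $\sigma$ to upgrade it to $\SN_{\ell',k'}\ge\tfrac12\Delta_{\min}N$, which holds once the resulting correction is $o(1)$ — guaranteed by the hypotheses $a>2\log L+\log K+\tfrac94\log N$ (so $a=\Theta(\log N)$) and $N>4LK$. With $\sigma\ge\tfrac12\Delta_{\min}N$ secured, $\sqrt[4]{a/\sigma}\le 2^{1/4}\sqrt[4]{a/(\Delta_{\min}N)}$ delivers the claimed $N^{-1/4}$ rate. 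The remaining steps — checking that the sandwich inequalities are invoked only after each arm has left the two-sample initialization regime, and verifying that the trivially-bounded arms contribute negligibly to the sum — are routine bookkeeping rather than genuine obstacles.
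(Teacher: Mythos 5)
Your core mechanism --- restricting to the event $A$ of Lemma~\ref{lemma:FAMEShift:highprob}, sandwiching the index between $\frac{\pmb p_{\ell,k}\usc_{\ell,k}}{\SN_{\ell,k}}$ and $\frac{\pmb p_{\ell,k}}{\SN_{\ell,k}}\bigl(\usc_{\ell,k}+O(1)\sqrt[4]{a/\SN_{\ell,k}}\bigr)$, and comparing indices at the last iteration an arm was pulled --- is exactly the paper's per-arm comparison (Lemma~\ref{lemma:FAMEShift:algproperty}), and your tracking of the $2^{-1/4}$ mismatch between the bonus $\sqrt[4]{a/\SN}$ and the width $\sqrt[4]{\log(2/\delta)/(2\SN)}$ is actually more careful than the paper's, which silently identifies the two. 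The genuine gap is in the aggregation. By summing the comparison over all $LK$ arms you are forced to lower-bound $\sigma=\min_{\ell',k'}(\SN_{\ell',k'}-1)$, and your claim that the bootstrap upgrade $\SN_{\ell',k'}\geq\frac{1}{2}\Delta_{\min}N$ is ``guaranteed by the hypotheses'' is false: the hypotheses $a>2\log L+\log K+\frac{9}{4}\log N$ and $N>4LK$ constrain only $L,K,N,a$ and place no lower bound on $\Delta_{\min}$ or $\usc_{\min}$, which are what govern whether the bootstrap closes. Writing $S=\sum_{\ell,k}\pmb p_{\ell,k}\usc_{\ell,k}$ and $M=(N-LK)\Delta_{\min}S$, your feedback iteration stalls at the fixed point $\sigma^\ast$ solving $S\sigma^\ast+\Theta(1)\,a^{1/4}(\sigma^\ast)^{3/4}=M$, and $\sigma^\ast\geq\frac{1}{2}\Delta_{\min}N$ requires $a\lesssim\Delta_{\min}N S^4$. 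Concretely: $N=10^6$, $L=2$, $K=1$, $a\approx 33$, $\usc_{\ell,k}\approx\frac{1}{2}$ for both arms, $\Delta_{\min}=10^{-4}$ (an admissible instance) gives $M\approx 50$ and $\sigma^\ast\approx 19$, permanently short of the target $\frac{1}{2}\Delta_{\min}N=50$; no number of re-feedings gets past the fixed point. So the intermediate statement your proof rests on cannot be established by your method, and patching it (e.g.\ using the fixed point itself, or the trivial $\sigma\geq1$ when $\Delta_{\min}N=O(1)$) would require a delicate case analysis on $\Delta_{\min}N$ versus $a/S^4$ that the proposal does not contain.

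The missing idea is the paper's pigeonhole step, which makes the self-reference disappear instead of having to be bootstrapped away. Since $\sum_{\ell',k'}(\SN_{\ell',k'}-2)=N-2LK=\sum_{\ell',k'}\SN_{\ell',k'}^{*}(N-2LK)/N$, either the target arm already satisfies $\SN_{\ell,k}-2\geq\SN_{\ell,k}^{*}(N-2LK)/N$, in which case pure algebra yields the claimed bound with no bonus term at all, or there exists a single other arm $(\ell_0,k_0)$ with $\SN_{\ell_0,k_0}-2>\SN_{\ell_0,k_0}^{*}(N-2LK)/N\geq\frac{1}{2}\Delta_{\min}N$ (using $N>4LK$). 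Instantiating your per-arm inequality once, against that one over-sampled arm, replaces the unknown count inside the bonus by a quantity already known to exceed $\frac{1}{2}\Delta_{\min}N$ --- no minimum over arms ever appears --- and monotonicity of $t\mapsto t/\bigl(\usc_{\ell_0,k_0}+2\sqrt[4]{c/t}\bigr)$ finishes the inversion. With that substitution, the rest of your write-up (the sandwich, the inversion, the estimate $N/(N-2LK)<2$, and the slack absorbed by the constant $4$) goes through essentially as you wrote it.
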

\begin{proof}
To show this, let us first establish the following useful lemma.
\begin{lemma}\label{lemma:FAMEShift:algproperty}
Suppose that the event $A$ holds. If Algorithm \ref{Alg:FAMEShift:MainAlg} draws at least one sample from $D_{\ell_0,k_0}$ after the first $2LK$ iterations, we must have, for every $\ell,k$,
\begin{equation*}
\SN_{\ell, k} \geq \left(\SN_{\ell_0, k_0}-1 \right) \usc_{\ell,k} \frac{\pmb p_{\ell, k}}{\pmb p_{\ell_0, k_0}}
\left(\usc_{\ell_0,k_0} + 2  \sqrt[4]{            \frac{\log 2/\delta}{2( \SN_{\ell_0,k_0}-`1)}} \right)^{-1}
\end{equation*}

\end{lemma}
\begin{proof}

Since the event $A$ holds, we have
\begin{equation*}
 \left\lbrace  \left| \sqrt{ 1- \frac{1}{t(t-1)} \sum_{i=1}^{t} \sum_{j=1,j\not=i}^{t} \mathbbm{1}_{\sample_{\ell,k,i}=\sample_{\ell,k,j}}}-\usc_{\ell,k} \right| \leq \sqrt[4]{            \frac{\log 2/\delta}{2t}}  \right\rbrace
\end{equation*}
for every $\ell,k,t$.
Since this holds for every fixed $t$, it should also holds for any random variable $t$. 
Specifically, we must have 
\begin{equation*}
 \left| \sqrt{ 1- \frac{1}{\SN_{\ell,k,n}(\SN_{\ell,k,n}-1)} \sum_{i=1}^{\SN_{\ell,k,n}} \sum_{j=1,j\not=i}^{\SN_{\ell,k,n}} \mathbbm{1}_{\sample_{\ell,k,i}=\sample_{\ell,k,j}}}-\usc_{\ell,k} \right| \leq \sqrt[4]{            \frac{\log 2/\delta}{2 \SN_{\ell,k,n}}}  
\end{equation*}
Note that, by definition,
\begin{equation*}
 \hatusc_{\ell,k,n} =  \sqrt{ 1- \frac{1}{\SN_{\ell,k,n}(\SN_{\ell,k,n}-1)} \sum_{i=1}^{\SN_{\ell,k,n}} \sum_{j=1,j\not=i}^{\SN_{\ell,k,n}} \mathbbm{1}_{\sample_{\ell,k,i}=\sample_{\ell,k,j}}}
\end{equation*}
We can then rewrite the above inequality as 
\begin{equation*}
  \left| \hatusc_{\ell,k,n} -\usc_{\ell,k} \right| \leq \sqrt[4]{            \frac{\log 2/\delta}{2 \SN_{\ell,k,n}}} 
\end{equation*}
That is to say,
\begin{equation*}
 \usc_{\ell,k} - \sqrt[4]{            \frac{\log 2/\delta}{2 \SN_{\ell,k,n}}} \leq  \hatusc_{\ell,k,n}  \leq \usc_{\ell,k} + \sqrt[4]{            \frac{\log 2/\delta}{2 \SN_{\ell,k,n}}} 
\end{equation*}
Adding  $\sqrt[4]{            \frac{\log 2/\delta}{2 \SN_{\ell,k,n}}}$ to both sides, this becomes
\begin{equation*}
 \usc_{\ell,k}  \leq  \hatusc_{\ell,k,n} + \sqrt[4]{            \frac{\log 2/\delta}{2 \SN_{\ell,k,n}}}  \leq \usc_{\ell,k} + 2 \sqrt[4]{            \frac{\log 2/\delta}{2 \SN_{\ell,k,n}}} 
\end{equation*}

Multiplying both sides by $\frac{\pmb p_{\ell, k}}{\SN_{\ell, k,n}}$, we have
\begin{equation}\label{eq:FAMEAPIShift:uscorebound}
\frac{\pmb p_{\ell, k}}{\SN_{\ell, k,n}} \usc_{\ell,k} \leq \frac{\pmb p_{\ell, k}}{\SN_{\ell, k,n}} \left(\hatusc_{\ell,k,n} +  \sqrt[4]{            \frac{\log 2/\delta}{2 \SN_{\ell,k,n}}} \right) \leq  \frac{\pmb p_{\ell, k}}{\SN_{\ell, k,n}} \left(\usc_{\ell,k} + 2 \sqrt[4]{            \frac{\log 2/\delta}{2 \SN_{\ell,k,n}}} \right)
\end{equation}

which holds for any $\ell,k,n$.
Note that  $N>2LK$,  there must exist some $\ell_0,k_0$, such that  Algorithm \ref{Alg:FAMEShift:MainAlg} draws a sample from the data partition $D_{\ell_0,k_0}$ after the first $2LK$ iterations.
Suppose the last time a sample is drawn from  $D_{\ell_0,k_0}$ is  $n_0>2LK$.
That is to say, $\SN_{\ell_0,k_0,n_0} = \SN_{\ell_0,k_0,n}-1, \forall n=n_0+1,\cdots, N$.
Since Algorithm \ref{Alg:FAMEShift:MainAlg} chooses $\ell_0,k_0$ at iteration $n_0$, by line 11 in Algorithm \ref{Alg:FAMEShift:MainAlg}, we have 
\begin{equation*}
\ell_0, k_0 = \arg \max  \frac{\pmb p_{\ell, k}}{\SN_{\ell, k,n_0}} (\hatusc_{\ell,k,n_0} +  \sqrt[4]{            \frac{\log 2/\delta}{2 \SN_{\ell,k,n_0}}} )    
\end{equation*}
By definition of $\arg\max$, we have 
\begin{equation*}
\frac{\pmb p_{\ell_0, k_0}}{\SN_{\ell_0, k_0,n_0}} (\hatusc_{\ell_0,k_0,n_0} +  \sqrt[4]{            \frac{\log 2/\delta}{2 \SN_{\ell_0,k_0,n_0}}} ) \geq  \frac{\pmb p_{\ell, k}}{\SN_{\ell, k,n_0}} (\hatusc_{\ell,k,n_0} +  \sqrt[4]{            \frac{\log 2/\delta}{2 \SN_{\ell,k,n_0}}} )    
\end{equation*}
Setting $n=n_0$ in the first half of inequality \ref{eq:FAMEAPIShift:uscorebound}, we have 
\begin{equation*}
\frac{\pmb p_{\ell, k}}{\SN_{\ell, k,n_0}} \left(\hatusc_{\ell,k,n_0} +  \sqrt[4]{            \frac{\log 2/\delta}{2 \SN_{\ell,k,n_0}}} \right) \geq \frac{\pmb p_{\ell, k}}{\SN_{\ell, k,n_0}} \usc_{\ell,k,n_0}
\end{equation*}
Combining the above two inequalities gives
\begin{equation*}
\frac{\pmb p_{\ell_0, k_0}}{\SN_{\ell_0, k_0,n_0}} (\hatusc_{\ell_0,k_0,n_0} +  \sqrt[4]{            \frac{\log 2/\delta}{2 \SN_{\ell_0,k_0,n_0}}} ) \geq \frac{\pmb p_{\ell, k}}{\SN_{\ell, k,n_0}} \usc_{\ell,k}
\end{equation*}
Noting that by definition, $\SN_{\ell,k,n_0} \leq \SN_{\ell,k,N} = \SN_{\ell,k}$, we can lower bound $1/\SN_{\ell,k,n_0}$ by $1/\SN_{\ell,k}$, and the above inequality becomes
\begin{equation*}
\frac{\pmb p_{\ell_0, k_0}}{\SN_{\ell_0, k_0,n_0}} (\hatusc_{\ell_0,k_0,n_0} +  \sqrt[4]{            \frac{\log 2/\delta}{2 \SN_{\ell_0,k_0,n_0}}} ) \geq \frac{\pmb p_{\ell, k}}{\SN_{\ell, k}} \usc_{\ell,k}
\end{equation*}
Now setting $n=n_0,\ell=\ell_0,k=k_0$ in the second half of inequality \ref{eq:FAMEAPIShift:uscorebound}, we have 
\begin{equation*}
\frac{\pmb p_{\ell_0, k_0}}{\SN_{\ell_0, k_0,n}} \left(\hatusc_{\ell_0,k_0,n} +  \sqrt[4]{            \frac{\log 2/\delta}{2 \SN_{\ell_0,k_0,n_0}}} \right) \leq  \frac{\pmb p_{\ell_0, k_0}}{\SN_{\ell_0, k_0,n_0}} \left(\usc_{\ell_0,k_0} + 2 \sqrt[4]{            \frac{\log 2/\delta}{2 \SN_{\ell_0,k_0,n_0}}} \right)
\end{equation*}
Combining the above two inequalities, we have 
\begin{equation*}
\frac{\pmb p_{\ell_0, k_0}}{\SN_{\ell_0, k_0,n_0}} (\usc_{\ell_0,k_0} + 2  \sqrt[4]{            \frac{\log 2/\delta}{2 \SN_{\ell_0,k_0,n_0}}} ) \geq \frac{\pmb p_{\ell, k}}{\SN_{\ell, k}} \usc_{\ell,k}
\end{equation*}
Observe that $n_0$ is the last time a sample is drawn from partition $D_{\ell_0,k_0}$, we have $\SN_{\ell_0,k_0,n_0} = \SN_{\ell_0,k_0,n}-1, \forall n=n_0+1,\cdots, N$.
Specifically,  $\SN_{\ell_0,k_0,n_0} = \SN_{\ell_0,k_0,N}-1=\SN_{\ell_0,k_0}-1$. 
Replacing $\SN_{\ell_0,k_0,n_0}$ by 
$\SN_{\ell_0,k_0}-1$ in the above inequality, we get
\begin{equation*}
\frac{\pmb p_{\ell_0, k_0}}{\SN_{\ell_0, k_0}-1} (\usc_{\ell_0,k_0} + 2  \sqrt[4]{            \frac{\log 2/\delta}{2( \SN_{\ell_0,k_0}-`1)}} ) \geq \frac{\pmb p_{\ell, k}}{\SN_{\ell, k}} \usc_{\ell,k}
\end{equation*}
which holds for every $\ell,k$. Rearranging the terms completes the proof.
\end{proof}

Now we are ready to prove the bound on $\SN_{\ell,k}-\SN_{\ell,k}^*$.

Let us first consider the lower bound. 
By definition, we have 
\begin{equation*}
    \sum_{\ell=1}^{L}\sum_{k=1}^{K} \SN_{\ell,k} = N
\end{equation*}
Subtracting 2 from each element, we have
\begin{equation*}
    \sum_{\ell=1}^{L}\sum_{k=1}^{K} (\SN_{\ell,k}-2) = N-2LK = \frac{N-2LK }{N} N
\end{equation*}
Note that by definition, $N = \sum_{\ell=1}^{L}\sum_{k=1}^{K} \SN_{\ell,k}^*$. We can now replace the second $N$ in the above equality, and obtain 
\begin{equation*}
    \sum_{\ell=1}^{L}\sum_{k=1}^{K} (\SN_{\ell,k}-2) = \frac{N-2LK }{N} N= \frac{N-2LK }{N}  \sum_{\ell=1}^{L}\sum_{k=1}^{K} \SN_{\ell,k}^* = \sum_{\ell=1}^{L}\sum_{k=1}^{K}  \frac{\SN_{\ell,k}^*(N-2LK )}{N} 
\end{equation*}

Now let us consider two cases.

(i) Assume $\SN_{\ell,k} -2 \geq \frac{\SN_{\ell,k}^*(N-2LK )}{N} $.
That is to say, $\SN_{\ell,k} \geq \frac{\SN_{\ell,k}^*(N-2LK )}{N}+2$. Then we have 
\begin{equation*}
    \frac{1}{\SN_{\ell,k}} \leq \frac{1}{\frac{\SN_{\ell,k}^*(N-2LK )}{N}+2}
\end{equation*}
subtracting $\frac{1}{\SN_{\ell,k}^*}$ from both sides, we get 
\begin{equation*}
\begin{split}
    \frac{1}{\SN_{\ell,k}}-    \frac{1}{\SN_{\ell,k}^*} & \leq \frac{1}{\frac{\SN_{\ell,k}^*(N-2LK )}{N}+2} -     \frac{1}{\SN_{\ell,k}^*}\\ 
    & = \frac{\SN_{\ell,k}^*-\frac{\SN_{\ell,k}^*(N-2LK )}{N}-2}{\SN_{\ell,k}^* \cdot(\frac{\SN_{\ell,k}^*(N-2LK )}{N}+2)}\\
    &\leq   \frac{\SN_{\ell,k}^*-\frac{\SN_{\ell,k}^*(N-2LK )}{N}}{\SN_{\ell,k}^* \cdot(\frac{\SN_{\ell,k}^*(N-2LK )}{N})}\\
    &= \frac{\frac{2LK\SN_{\ell,k}^* }{N}}{\SN_{\ell,k}^* \cdot(\frac{\SN_{\ell,k}^*(N-2LK )}{N})}\\
    &= \frac{2LK }{ \SN_{\ell,k}^*(N-2LK )}\\
\end{split}
\end{equation*}
where the last inequality is simply by removing the constant $2$.
Now by assumption, $N>4LK$, we have $N-2LK<\frac{1}{2}N$.
The above inequality can be further simplified as

\begin{equation*}
\begin{split}
    \frac{1}{\SN_{\ell,k}}-    \frac{1}{\SN_{\ell,k}^*}  
    &\leq \frac{2LK }{ \SN_{\ell,k}^*(N-2LK )} \leq  \frac{4LK }{ \SN_{\ell,k}^*N}\\
\end{split}
\end{equation*}
By definition, we have $\SN_{\ell,k}^*=N \Delta_{\ell,k} \leq N \Delta_{\min}$.
Therefore, we have 
\begin{equation*}
\begin{split}
    \frac{1}{\SN_{\ell,k}}-    \frac{1}{\SN_{\ell,k}^*}  
    &\leq \frac{2LK }{ \SN_{\ell,k}^*(N-2LK )} \leq  \frac{4LK }{ \SN_{\ell,k}^*N}\\
\end{split}
\end{equation*}
That is to say, 

\begin{equation*}
\begin{split}
    \frac{1}{\SN_{\ell,k}} \leq     \frac{1}{\SN_{\ell,k}^*}\left[  1
    + \frac{4LK}{N}\right] 
\end{split}
\end{equation*}
And thus, apparently, 
\begin{equation*}
\begin{split}
\frac{1}{\SN_{\ell, k}} 
&  \leq  \frac{1}{\SN_{\ell,k}^*} \left[ 1 + 4LK N^{-1} +  \frac{4}{\usc_{\min}}\sqrt[4]{             \frac{\log 2/\delta}{ \Delta_{\min}}} N^{-\frac{1}{4}}\right]\\
\end{split}
\end{equation*}

(ii) Assume $\SN_{\ell,k} -2 < \frac{\SN_{\ell,k}^*(N-2LK )}{N} $.
Then there must exists some $\ell_0,k_0$ such that $\SN_{\ell_0,k_0} -2 > \frac{\SN_{\ell_0,k_0}^*(N-2LK )}{N}>0$.
That is to say, Algorithm \ref{Alg:FAMEShift:MainAlg} draws at least one sample from $D_{\ell_0,k_0}$ after the first $2LK$ iterations.
By Lemma \ref{lemma:FAMEShift:algproperty}, we must have 
\begin{equation*}
\SN_{\ell, k} \geq \left(\SN_{\ell_0, k_0}-1 \right) \usc_{\ell,k} \frac{\pmb p_{\ell, k}}{\pmb p_{\ell_0, k_0}}
\left(\usc_{\ell_0,k_0} + 2  \sqrt[4]{            \frac{\log 2/\delta}{2( \SN_{\ell_0,k_0}-`1)}} \right)^{-1}
\end{equation*}

$\SN_{\ell_0,k_0} -2 > \frac{\SN_{\ell_0,k_0}^*(N-2LK )}{N}$ implies 

\begin{equation*}
    \SN_{\ell_0,k_0} -1 > \SN_{\ell_0,k_0 -2 } > \frac{\SN_{\ell_0,k_0}^*(N-2LK )}{N}
\end{equation*}
Therefore, we can use this lower bound on ${\SN_{\ell_0,k_0} -1 }$ in the above inequality and obtain 

\begin{equation*}
\begin{split}
\SN_{\ell, k} & \geq  \frac{\SN_{\ell_0,k_0}^*(N-2LK )}{N} \usc_{\ell,k} \frac{\pmb p_{\ell, k}}{\pmb p_{\ell_0, k_0}}
\left(\usc_{\ell_0,k_0} + 2  \sqrt[4]{            \frac{\log 2/\delta}{2\frac{\SN_{\ell_0,k_0}^*(N-2LK )}{N}}} \right)^{-1}\\
=& \frac{\SN_{\ell_0,k_0}^*(N-2LK )}{N}  \frac{\usc_{\ell,k} \pmb p_{\ell, k}}{\usc_{\ell_0,k_0} \pmb p_{\ell_0, k_0}}
\left(1 + \frac{2}{\usc_{\ell_0,k_0}}\sqrt[4]{            \frac{\log 2/\delta}{2\frac{\SN_{\ell_0,k_0}^*(N-2LK )}{N}}} \right)^{-1}\\
=& \frac{\SN_{\ell,k}^*(N-2LK )}{N} 
\left(1 + \frac{2}{\usc_{\ell_0,k_0}}\sqrt[4]{            \frac{\log 2/\delta}{2\frac{\SN_{\ell_0,k_0}^*(N-2LK )}{N}}} \right)^{-1}\\
\end{split}
\end{equation*}
where the first equality is by dividing $\usc_{\ell_0,k_0}$ at both denominator and numerator, and the second equality uses the fact that $\SN_{\ell,k}^*$ is proportional to  
$p_{\ell, k}{\usc_{\ell,k}}$.
Taking inverse of the above inequality gives 

\begin{equation*}
\begin{split}
\frac{1}{\SN_{\ell, k}} & \leq   \frac{N}{\SN_{\ell,k}^*(N-2LK )} 
\left(1 + \frac{2}{\usc_{\ell_0,k_0}}\sqrt[4]{            \frac{\log 2/\delta}{2\frac{\SN_{\ell_0,k_0}^*(N-2LK )}{N}}} \right)\\
\end{split}
\end{equation*}
Now let us simplify this inequality. Let us first expand all terms and obtain
\begin{equation*}
\begin{split}
\frac{1}{\SN_{\ell, k}} & \leq   \frac{N}{\SN_{\ell,k}^*(N-2LK )} 
\left(1 + \frac{2}{\usc_{\ell_0,k_0}}\sqrt[4]{            \frac{\log 2/\delta}{2\frac{\SN_{\ell_0,k_0}^*(N-2LK )}{N}}} \right)\\
& = \frac{1}{\SN_{\ell,k}^*}  + \frac{2LK}{\SN_{\ell,k}^* (N-2LK)} +  \frac{N}{\SN_{\ell,k}^*(N-2LK )} 
 \cdot \frac{2}{\usc_{\ell_0,k_0}}\sqrt[4]{            \frac{\log 2/\delta}{2\frac{\SN_{\ell_0,k_0}^*(N-2LK )}{N}}} \\
 & = \frac{1}{\SN_{\ell,k}^*}  + \frac{2LK}{\SN_{\ell,k}^* (N-2LK)} +  \frac{2}{\usc_{\ell_0,k_0}}\sqrt[4]{        \left(\frac{N}{N-2LK}\right)^5     \frac{\log 2/\delta}{2\SN_{\ell,k}^{*4} \SN_{\ell_0,k_0}^*}} \\
\end{split}
\end{equation*}
For the second term, by assumption, $N>4LK$ and thus $N-2LK> 1/2N $, we have
\begin{equation*}
    \frac{2LK}{N-2LK} \leq \frac{4LK}{N}
\end{equation*}
Thus the above equation becomes
\begin{equation*}
\begin{split}
\frac{1}{\SN_{\ell, k}} & \leq    \frac{1}{\SN_{\ell,k}^*}  + \frac{4LK}{\SN_{\ell,k}^* N} +  \frac{2}{\usc_{\ell_0,k_0}}\sqrt[4]{        \left(\frac{N}{N-2LK}\right)^5     \frac{\log 2/\delta}{2\SN_{\ell,k}^{*4} \SN_{\ell_0,k_0}^*}} \\
\end{split}
\end{equation*}
For the third term, $N>4LK$ also implies 
\begin{equation*}
    \frac{N}{N-2LK} =  1 +     \frac{2LK}{N-2LK} < 1 +\frac{2LK}{4LK-2LK} = 2 
\end{equation*}
Thus the above inequality can be further simplified as 
\begin{equation*}
\begin{split}
\frac{1}{\SN_{\ell, k}} & \leq    \frac{1}{\SN_{\ell,k}^*}  + \frac{4LK}{\SN_{\ell,k}^* N} +  \frac{4}{\usc_{\ell_0,k_0}}\sqrt[4]{             \frac{\log 2/\delta}{\SN_{\ell,k}^{*4} \SN_{\ell_0,k_0}^*}} \\
& \leq  \frac{1}{\SN_{\ell,k}^*} \left[ 1 + \frac{4LK}{ N} +  \frac{4}{\usc_{\ell_0,k_0}}\sqrt[4]{             \frac{\log 2/\delta}{ \SN_{\ell_0,k_0}^*}}\right] \\
\end{split}
\end{equation*}
Now by definition, $\usc_{\ell_0,k_0}\geq \usc_{\min}$, and $\SN_{\ell_0,k_0}^* = N \Delta_{\ell_0,k_0} \geq N \Delta_{\min}$, we can further simplify the above inequality 
\begin{equation*}
\begin{split}
\frac{1}{\SN_{\ell, k}}& \leq  \frac{1}{\SN_{\ell,k}^*} \left[ 1 + \frac{4LK}{ N} +  \frac{4}{\usc_{\ell_0,k_0}}\sqrt[4]{             \frac{\log 2/\delta}{ \SN_{\ell_0,k_0}^*}}\right] \\
 & \leq  \frac{1}{\SN_{\ell,k}^*} \left[ 1 + \frac{4LK}{ N} +  \frac{4}{\usc_{\ell_0,k_0}}\sqrt[4]{             \frac{\log 2/\delta}{ N \Delta_{\min}}} \right] \\
 & \leq  \frac{1}{\SN_{\ell,k}^*} \left[ 1 + \frac{4LK}{ N} +  \frac{4}{\usc_{\min}}\sqrt[4]{             \frac{\log 2/\delta}{ N \Delta_{\min}}} \right]\\
\end{split}
\end{equation*}
That is to say,  
\begin{equation*}
\begin{split}
\frac{1}{\SN_{\ell, k}} 
&  \leq  \frac{1}{\SN_{\ell,k}^*} \left[ 1 + 4LK N^{-1} +  \frac{4}{\usc_{\min}}\sqrt[4]{             \frac{\log 2/\delta}{ \Delta_{\min}}} N^{-\frac{1}{4}}\right]\\
\end{split}
\end{equation*}
That is to say, no matter $\SN_{\ell,k} -2 < \frac{\SN_{\ell,k}^*(N-2LK )}{N} $ or not, this inequality always holds, which completes the proof.
\end{proof}

Now we are ready to prove Theorem \ref{thm:FAMEAPIShift:mainbound}.
Let us first note that the loss can be written as 
\begin{equation}\label{eq:FAMEShift:temp00}
\begin{split}
    \loss_N &= \sum_{\ell=1}^{L} \sum_{k=1}^{K} \sum_{j=1}^{L} \pmb p_{\ell,k}^2 \Exp[\truemu_{\ell,k,j} -\Smu_{\ell,k,j}  ]^2\\
& = \sum_{\ell=1}^{L} \sum_{k=1}^{K} \sum_{j=1}^{L} \pmb p_{\ell,k}^2 \Exp[(\truemu_{\ell,k,j} - \frac{1}{\SN_{\ell,k}}\sum_{t=1}^{\SN_{\ell,k}} \mathbbm{1}_{\sample_{\ell,k,t}=j} )^2 \mathbbm{1}_{ A}] \\
&+  \sum_{\ell=1}^{L} \sum_{k=1}^{K} \sum_{j=1}^{L} \pmb p_{\ell,k}^2 \Exp[(\truemu_{\ell,k,j} - \frac{1}{\SN_{\ell,k}}\sum_{t=1}^{\SN_{\ell,k}} \mathbbm{1}_{\sample_{\ell,k,t}=j} )^2\mathbbm{1}_{A^C}]\\   
\end{split}
\end{equation}
Let us first consider the first term. 
\begin{equation}\label{eq:FAMEShift:temp2}
\begin{split}
     & \sum_{\ell=1}^{L} \sum_{k=1}^{K} \sum_{j=1}^{L} \pmb p_{\ell,k}^2 \Exp[(\truemu_{\ell,k,j} -\Smu_{\ell,k,j})^2  \mathbbm{1}_{A}]\\
= & \sum_{\ell=1}^{L} \sum_{k=1}^{K} \sum_{j=1}^{L} \pmb p_{\ell,k}^2 \Exp[(\truemu_{\ell,k,j} - \frac{1}{\SN_{\ell,k}}\sum_{t=1}^{\SN_{\ell,k}} \mathbbm{1}_{\sample_{\ell,k,t}=j})^2 \mathbbm{1}_{A}]\\   
= & \sum_{\ell=1}^{L} \sum_{k=1}^{K} \sum_{j=1}^{L} \pmb p_{\ell,k}^2 \Exp\left[ \frac{1}{\SN_{\ell,k}^2} \left( \SN_{\ell,k} \truemu_{\ell,k,j} - \sum_{t=1}^{\SN_{\ell,k}} \mathbbm{1}_{\sample_{\ell,k,t}=j} \right)^2 \mathbbm{1}_{A}\right]\\
\end{split}
\end{equation}
where we plug in the definition of $\Smu$.
By Lemma \ref{lemma:FAMEShift:samplenumberbound}, we have the upper bound on $1/\SN_{\ell,k}$

\begin{equation*}
\begin{split}
\frac{1}{\SN_{\ell, k}} 
&  \leq  \frac{1}{\SN_{\ell,k}^*} \left[ 1 + 4LK N^{-1} +  \frac{4}{\usc_{\min}}\sqrt[4]{             \frac{\log 2/\delta}{ \Delta_{\min}}} N^{-\frac{1}{4}}\right]\\
\end{split}
\end{equation*}
Therefore, we can use this inequality to obtain 
\begin{equation}\label{eq:FAMEShift:temp1}
\begin{split}
& \Exp\left[ \frac{1}{\SN_{\ell,k}^2} \left( \SN_{\ell,k} \truemu_{\ell,k,j} - \sum_{t=1}^{\SN_{\ell,k}} \mathbbm{1}_{\sample_{\ell,k,t}=j} \right)^2 \mathbbm{1}_{A}\right]\\
\leq  & [\frac{1}{\SN_{\ell,k}^*}  + \frac{4LK}{ \Delta_{\min}}  N^{-2}+  \frac{4}{\usc_{\min}}\sqrt[4]{             \frac{\log 2/\delta}{ \Delta_{\min}^5}} N^{-\frac{5}{4}}]^{2} \Exp\left[  \left( \SN_{\ell,k} \truemu_{\ell,k,j} - \sum_{t=1}^{\SN_{\ell,k}} \mathbbm{1}_{\sample_{\ell,k,t}=j} \right)^2 \mathbbm{1}_{A}\right]\\
\end{split}
\end{equation}

It is not hard to see that $\SN_{\ell,k}$ is a stopping time.
In fact, for any $\ell,k$, and any time $n$, a new sample is drawn purely based on estimated uncertainty score $\hatusc$ and observed sample number $\SN_{\ell,k,n-1}$ up to the current iteration, which is part of the history.
As $\SN_{\ell,k}<N$ is bounded, 
 $\SN_{\ell,k}$ is a stopping time.
 Hence, we can apply Lemma \ref{lemma:FAMEShift:wald-inequality}, and obtain

\begin{equation*}
\begin{split}
 & \Exp\left[  \left( \SN_{\ell,k} \truemu_{\ell,k,j} - \sum_{t=1}^{\SN_{\ell,k}} \mathbbm{1}_{\sample_{\ell,k,t}=j} \right)^2 \mathbbm{1}_{A}\right] \leq \Exp\left[  \left( \SN_{\ell,k} \truemu_{\ell,k,j} - \sum_{t=1}^{\SN_{\ell,k}} \mathbbm{1}_{\sample_{\ell,k,t}=j} \right)^2 \right] \\
 \leq &\Exp[\SN_{\ell,k}]  \Pr[\sample_{\ell,k,1}=j](1-\Pr[\sample_{\ell,k,1}=j]) \\
\end{split}
\end{equation*}
where the first inequality uses the fact that square term must be non-negative, and the second inequality uses the fact that, for Bernoulli distribution with mean $a$, its variance is $a(1-a)$.
Applying this in inequality \ref{eq:FAMEShift:temp1}, we have 
\begin{equation*}
\begin{split}
& \Exp\left[ \frac{1}{\SN_{\ell,k}^2} \left( \SN_{\ell,k} \truemu_{\ell,k,j} - \sum_{t=1}^{\SN_{\ell,k}} \mathbbm{1}_{\sample_{\ell,k,t}=j} \right)^2 \mathbbm{1}_{A}\right]\\
\leq  & [\frac{1}{\SN_{\ell,k}^*}  + \frac{4LK}{ \Delta_{\min}}  N^{-2}+  \frac{4}{\usc_{\min}}\sqrt[4]{             \frac{\log 2/\delta}{ \Delta_{\min}^5}} N^{-\frac{5}{4}}]^{2} \Exp\left[  \left( \SN_{\ell,k} \truemu_{\ell,k,j} - \sum_{t=1}^{\SN_{\ell,k}} \mathbbm{1}_{\sample_{\ell,k,t}=j} \right)^2 \mathbbm{1}_{A}\right]\\
\leq & [\frac{1}{\SN_{\ell,k}^*}  + \frac{4LK}{ \Delta_{\min}}  N^{-2}+  \frac{4}{\usc_{\min}}\sqrt[4]{             \frac{\log 2/\delta}{ \Delta_{\min}^5}} N^{-\frac{5}{4}}]^{2} \Exp[\SN_{\ell,k}]  \Pr[\sample_{\ell,k,1}=j] (1-\Pr[\sample_{\ell,k,1}=j] )
\end{split}
\end{equation*}
Now applying this in equality \ref{eq:FAMEShift:temp2}, we get  

\begin{equation}\label{eq:FAMEShift:temp4}
\begin{split}
     & \sum_{\ell=1}^{L} \sum_{k=1}^{K} \sum_{j=1}^{L} \pmb p_{\ell,k}^2 \Exp[(\truemu_{\ell,k,j} -\Smu_{\ell,k,j})^2  \mathbbm{1}_{A} ]\\
= & \sum_{\ell=1}^{L} \sum_{k=1}^{K} \sum_{j=1}^{L} \pmb p_{\ell,k}^2 \Exp\left[ \frac{1}{\SN_{\ell,k}^2} \left( \SN_{\ell,k} \truemu_{\ell,k,j} - \sum_{t=1}^{\SN_{\ell,k}} \mathbbm{1}_{\sample_{\ell,k,t}=j} \right)^2 \mathbbm{1}_{A}\right]\\
\leq  & \sum_{\ell=1}^{L} \sum_{k=1}^{K} \sum_{j=1}^{L} \pmb p_{\ell,k}^2 [\frac{1}{\SN_{\ell,k}^*}  + \frac{4LK}{ \Delta_{\min}}  N^{-2}+  \frac{4}{\usc_{\min}}\sqrt[4]{             \frac{\log 2/\delta}{ \Delta_{\min}^5}} N^{-\frac{5}{4}}]^{2} \Exp[\SN_{\ell,k}]  \Pr[\sample_{\ell,k,1}=j](1-\Pr[\sample_{\ell,k,1}=j] ) \\
=  & \sum_{\ell=1}^{L} \sum_{k=1}^{K}  \pmb p_{\ell,k}^2 \usc^2_{\ell,k} [\frac{1}{\SN_{\ell,k}^*}  + \frac{4LK}{ \Delta_{\min}}  N^{-2}+  \frac{4}{\usc_{\min}}\sqrt[4]{             \frac{\log 2/\delta}{ \Delta_{\min}^5}} N^{-\frac{5}{4}}]^{2} \Exp[\SN_{\ell,k}] \\
\end{split}
\end{equation}
where the last equation uses the fact that $\usc_{\ell,k} = 1-\sum_{j=1}^{L}\Pr^2[\sample_{\ell,k,1}=j] = \sum_{j=1}^{L}\Pr[\sample_{\ell,k,1}=j] (1-\Pr[\sample_{\ell,k,1}=j] ) $.
Applying the inequality $1/(1+x)\leq 1-x$

\begin{equation*}
\begin{split}
\frac{1}{\SN_{\ell, k}} 
&  \leq  \frac{1}{\SN_{\ell,k}^*} \left[ 1 + 4LK N^{-1} +  \frac{4}{\usc_{\min}}\sqrt[4]{             \frac{\log 2/\delta}{ \Delta_{\min}}} N^{\frac{1}{4}}\right]\\
\end{split}
\end{equation*}

Note that 
\begin{equation*}\label{eq:FAMEShift:temp3}
\begin{split}
 &  \pmb p_{\ell,k}^2 \usc^2_{\ell,k} [\frac{1}{\SN_{\ell,k}^*} \left[ 1 + 4LK N^{-1} +  \frac{4}{\usc_{\min}}\sqrt[4]{             \frac{\log 2/\delta}{ \Delta_{\min}}} N^{-\frac{1}{4}}\right]]^{2} \Exp[\SN_{\ell,k}] \\
 =& (\frac{ \pmb p_{\ell,k}\usc_{\ell,k}}{\SN_{\ell,k}^*})^2  \left[ 1 + 4LK N^{-1} +  \frac{4}{\usc_{\min}}\sqrt[4]{             \frac{\log 2/\delta}{ \Delta_{\min}}} N^{-\frac{1}{4}}\right]^2 \Exp[\SN_{\ell,k}] \\
 =& N^{-2} (\sum_{\ell',k'} \pmb p_{\ell',k'} \usc_{\ell',k'})^2 \left[ 1 + 4LK N^{-1} +  \frac{4}{\usc_{\min}}\sqrt[4]{             \frac{\log 2/\delta}{ \Delta_{\min}}} N^{-\frac{1}{4}}\right]^2 \Exp[\SN_{\ell,k}]  \end{split}
\end{equation*}
where the last equation is by definition of $\SN_{\ell,k}$.
Now applying this in inequality \ref{eq:FAMEShift:temp4}, we have 
\begin{equation}\label{eq:FAMEShift:temp5}
\begin{split}
     & \sum_{\ell=1}^{L} \sum_{k=1}^{K} \sum_{j=1}^{L} \pmb p_{\ell,k}^2 \Exp[(\truemu_{\ell,k,j} -\Smu_{\ell,k,j})^2  \mathbbm{1}_{A} ]\\
\leq  & \sum_{\ell=1}^{L} \sum_{k=1}^{K}  \pmb p_{\ell,k}^2 \usc^2_{\ell,k} [\frac{1}{\SN_{\ell,k}^*}  + \frac{4LK}{ \Delta_{\min}}  N^{-2}+  \frac{4}{\usc_{\min}}\sqrt[4]{             \frac{\log 2/\delta}{ \Delta_{\min}^5}} N^{-\frac{5}{4}}]^{2} \Exp[\SN_{\ell,k}] \\
=& \sum_{\ell=1}^{L} \sum_{k=1}^{K}  N^{-2} (\sum_{\ell',k'} \pmb p_{\ell',k'} \usc_{\ell',k'})^2 \left[ 1 + 4LK N^{-1} +  \frac{4}{\usc_{\min}}\sqrt[4]{             \frac{\log 2/\delta}{ \Delta_{\min}}} N^{-\frac{1}{4}}\right]^2 \Exp[\SN_{\ell,k}] \\
=&N^{-2} (\sum_{\ell',k'} \pmb p_{\ell',k'} \usc_{\ell',k'})^2 \left[ 1 + 4LK N^{-1} +  \frac{4}{\usc_{\min}}\sqrt[4]{             \frac{\log 2/\delta}{ \Delta_{\min}}} N^{-\frac{1}{4}}\right]^2  \sum_{\ell=1}^{L} \sum_{k=1}^{K}  \Exp[\SN_{\ell,k}] \\
=&N^{-2} (\sum_{\ell',k'} \pmb p_{\ell',k'} \usc_{\ell',k'})^2 \left[ 1 + 4LK N^{-1} +  \frac{4}{\usc_{\min}}\sqrt[4]{             \frac{\log 2/\delta}{ \Delta_{\min}}} N^{-\frac{1}{4}}\right]^2  N \\
=&N^{-1} (\sum_{\ell,k} \pmb p_{\ell,k} \usc_{\ell,k})^2 \left[ 1 + 4LK N^{-1} +  \frac{4}{\usc_{\min}}\sqrt[4]{             \frac{\log 2/\delta}{ \Delta_{\min}}} N^{-\frac{1}{4}}\right]^2  \\
\end{split}
\end{equation}
where the second equation uses the fact that only $\Exp[\SN_{\ell,k}]$ depends on $\ell,k$,  the third equation uses the fact that $\sum_{\ell=1}^{L} \sum_{k=1}^{K}  \SN_{\ell,k}=N$ and thus $\sum_{\ell=1}^{L} \sum_{k=1}^{K}  \Exp[\SN_{\ell,k}]=N$.
Note that $\delta = L^{-1} K^{-1} N^{-\frac{5}{4}}$, we have 

\begin{equation*}
\begin{split}
&N^{-1} (\sum_{\ell,k} \pmb p_{\ell,k} \usc_{\ell,k})^2 \left[ 1 + 4LK N^{-1} +  \frac{4}{\usc_{\min}}\sqrt[4]{             \frac{\log 2/\delta}{ \Delta_{\min}}} N^{-\frac{1}{4}}\right]^2  \\
=& N^{-1} (\sum_{\ell,k} \pmb p_{\ell,k} \usc_{\ell,k})^2 \left[ 1 + O(N^{-\frac{1}{4}} \log^{\frac{1}{4}} N ) \right] \\
=& N^{-1} (\sum_{\ell,k} \pmb p_{\ell,k} \usc_{\ell,k})^2 + O(N^{-\frac{5}{4}} \log^{\frac{1}{4}} N )  \\
\end{split}
\end{equation*}
Applying this back to inequality \ref{eq:FAMEShift:temp5}, we have 
\begin{equation}\label{eq:FAMEShift:temp6}
\begin{split}
 & \sum_{\ell=1}^{L} \sum_{k=1}^{K} \sum_{j=1}^{L} \pmb p_{\ell,k}^2 \Exp[(\truemu_{\ell,k,j} -\Smu_{\ell,k,j})^2  \mathbbm{1}_{A} ] 
\leq  N^{-1} (\sum_{\ell,k} \pmb p_{\ell,k} \usc_{\ell,k})^2 + O(N^{-\frac{5}{4}} \log^{\frac{1}{4}} N ) 
\end{split}
\end{equation}
Now consider the second term in equation \ref{eq:FAMEShift:temp00}.
As $\truemu$ and $\Smu$ are within $\{0,1\}$, we have \begin{equation*}
    (\truemu_{\ell,k,j} - \frac{1}{\SN_{\ell,k}}\sum_{t=1}^{\SN_{\ell,k}} \mathbbm{1}_{\sample_{\ell,k,t}=j} )^2\in[0,1]
\end{equation*}
Therefore,
\begin{equation*}
\begin{split}
 & \sum_{\ell=1}^{L} \sum_{k=1}^{K} \sum_{j=1}^{L} \pmb p_{\ell,k}^2 \Exp[(\truemu_{\ell,k,j} - \frac{1}{\SN_{\ell,k}}\sum_{t=1}^{\SN_{\ell,k}} \mathbbm{1}_{\sample_{\ell,k,t}=j} )^2\mathbbm{1}_{A^C}]\leq \sum_{\ell=1}^{L} \sum_{k=1}^{K} \sum_{j=1}^{L} \pmb p_{\ell,k}^2 \Pr[{A^C}]\\
\end{split}
\end{equation*}
By Lemma \ref{lemma:FAMEShift:highprob}, the probability of $A$ is at least $1-KLN\delta$.
Hence, the probability of $A^C$ is at most $KLN\delta$.
Hence, 
\begin{equation*}
\begin{split}
 & \sum_{\ell=1}^{L} \sum_{k=1}^{K} \sum_{j=1}^{L} \pmb p_{\ell,k}^2 \Exp[(\truemu_{\ell,k,j} - \frac{1}{\SN_{\ell,k}}\sum_{t=1}^{\SN_{\ell,k}} \mathbbm{1}_{\sample_{\ell,k,t}=j} )^2\mathbbm{1}_{A^C}]\\  
 \leq & \sum_{\ell=1}^{L} \sum_{k=1}^{K} \sum_{j=1}^{L} \pmb p_{\ell,k}^2 \Pr[{A^C}]
 \leq  \sum_{\ell=1}^{L} \sum_{k=1}^{K} \sum_{j=1}^{L} \pmb p_{\ell,k}^2 LKN\delta\\ \leq & \sum_{j=1}^{L} \pmb  LKN\delta = L^2 KN\delta
\end{split}
\end{equation*}
where the last inequality uses the fact that $\sum_{\ell=1}^{L} \sum_{k=1}^{K}  \pmb p_{\ell,k}^2\leq 1$ since $\sum_{\ell=1}^{L} \sum_{k=1}^{K}  \pmb p_{\ell,k} =  1$ and $\pmb p_{\ell,k}\geq 0$. Since $\delta = L^{-2} K^{-1} N^{-\frac{9}{4}}$, we have 
\begin{equation*}
\begin{split}
 & \sum_{\ell=1}^{L} \sum_{k=1}^{K} \sum_{j=1}^{L} \pmb p_{\ell,k}^2 \Exp[(\truemu_{\ell,k,j} - \frac{1}{\SN_{\ell,k}}\sum_{t=1}^{\SN_{\ell,k}} \mathbbm{1}_{\sample_{\ell,k,t}=j} )^2\mathbbm{1}_{A^C}]\\  
 \leq &  L^2 KN\delta \leq N^{-\frac{5}{4}}
\end{split}
\end{equation*}
Applying this as well as inequality \ref{eq:FAMEShift:temp6} to the equation \ref{eq:FAMEShift:temp00}, we have 

\begin{equation*}
\begin{split}
    \loss_N &= \sum_{\ell=1}^{L} \sum_{k=1}^{K} \sum_{j=1}^{L} \pmb p_{\ell,k}^2 \Exp[\truemu_{\ell,k,j} -\Smu_{\ell,k,j}  ]^2\\
& = \sum_{\ell=1}^{L} \sum_{k=1}^{K} \sum_{j=1}^{L} \pmb p_{\ell,k}^2 \Exp[(\truemu_{\ell,k,j} - \frac{1}{\SN_{\ell,k}}\sum_{t=1}^{\SN_{\ell,k}} \mathbbm{1}_{\sample_{\ell,k,t}=j} )^2 \mathbbm{1}_{ A}] \\
&+  \sum_{\ell=1}^{L} \sum_{k=1}^{K} \sum_{j=1}^{L} \pmb p_{\ell,k}^2 \Exp[(\truemu_{\ell,k,j} - \frac{1}{\SN_{\ell,k}}\sum_{t=1}^{\SN_{\ell,k}} \mathbbm{1}_{\sample_{\ell,k,t}=j} )^2\mathbbm{1}_{A^C}]\\  
\leq & N^{-1} (\sum_{\ell,k} \pmb p_{\ell,k} \usc_{\ell,k})^2 + O(N^{-\frac{5}{4}} \log^{\frac{1}{4}} N )  + N^{-\frac{5}{4}}
\end{split}
\end{equation*}
Note that the loss of the optimal allocation is simply $\loss^*_N = N^{-1} (\sum_{\ell,k} \pmb p_{\ell,k} \usc_{\ell,k})^2$.
The above inequality is simply 
\begin{equation*}
\begin{split}
    \loss_N - \loss_N^* \leq  
 O(N^{-\frac{5}{4}} \log^{\frac{1}{4}} N )
\end{split}
\end{equation*}
which completes the proof.
\end{proof}

\eat{

\begin{algorithm}
\caption{\systemnameAPIShift{}'s ML API shift Assessment Algorithm.}
	\label{Alg:FAMEShift:}
	\SetKwInOut{Input}{Input}
	\SetKwInOut{Output}{Output}
	\Input{ML API $\hat{y}(\cdot)$, query budget $N$, data partitions $D_{\ell,k}$, $\SCM^o\in\R^{L\times L}$, and $a> 0$}
	\Output{Estimated ML API Shift $\hatdeltaCM \in \R^{L \times \hat{L}}$}

  \# Stage 1: Initial sampling for uncertainty estimation
  
  Set $ \SN=2\cdot\pmb 1_{L\times K}, \Smu= \pmb 0_{L\times K\times L}, \hatusc= \pmb 0_{L\times K}, \hash = \pmb 0_{L\times K
  \times L}$  \hfill\textit{//Stage 1: Initilize}
  
  \For{$n\gets 1$ \KwTo $LK$}
    {
        $\ell = \ceil*{n/K}, k=(n\bmod K)+1$
        
        Sample $x_n, x_{LK+n}$ from $D_{\ell, k }$

        Set   $ \Smu_{\ell,k,j}= \frac{1}{2} \left[\mathbbm{1}_{\hat{y}(x_n)=j}+\mathbbm{1}_{\hat{y}(x_{n+LK})=j}\right], \forall j \in [L], \hatusc^2_{\ell,k} = \frac{1}{2} \mathbbm{1}_{\hat{y}(x_n) = \hat{y}(x_{n+LK})}$

        Update   $ \hash_{\ell,k, {y}(x_n)} = \hash_{\ell,k, {y}(x_n)} + 1, \hash_{\ell,k, \hat{y}(x_{n+LK})} = \hash_{\ell,k, \hat{y}(x_{n+LK})} + 1$
                

    }

  \# Stage 2: Uncertainty aware adaptive sampling 
  
  \For{$n\gets 2LK+1$ \KwTo $N$}
  {
  Compute $I_n \triangleq (\ell^*,k^*) = \arg \max_{\ell,k} \frac{\pmb p_{\ell, k}}{\SN_{\ell, k}} (\hatusc_{\ell,k} +  \sqrt[4]{\frac{a}{\SN_{\ell,k}}})$

  Sample $x_n$ from $D_{\ell^*,k^*}$ and query the ML API to obtain $\hat{y}(x_n)$
  
  Update $ \SN_{\ell^*,k^*}=\SN_{\ell^*,k^*}+1$
  
Update $\Smu_{\ell^*,k^*,j} = \Smu_{\ell^*,k^*,j} + \frac{\mathbbm{1}_{\hat{y}(x_n)=j}-\Smu_{\ell^*,k^*,j}}{\SN_{\ell^*,k^*}}, \forall j \in [L] $
      
   Update $\hatusc_{\ell^*,k^*}^2 = \hatusc_{\ell^*,k^*}^2 + \frac{2}{\SN_{\ell^*,k^*}} (1 -  \frac{\hash_{\ell^*,k^*,\hat{y}(x_n)}}{\SN_{\ell^*,k^*}-1} - \hatusc^2_{\ell^*,k^*}), \hash_{\ell^*,k^*,\hat{y}(x_n)} = \hash_{\ell^*,k^*,\hat{y}(x_n)} +1$
  
  }

  \# Stage 3: Compute the estimated API Shift $\hatdeltaCM$
  
  Return $\hatdeltaCM \in \R^{L\times L}$ 
  where $\hatdeltaCM_{i,j} = \sum_{k=1}^{K} \pmb p_{i,k} \Smu_{i,k,j}-\SCM^o_{i,j}, \forall i,j$
\end{algorithm}
}
\section{Experimental Details}\label{sec:FAMEShift:experimentdetails}

\paragraph{Experimental Setups.} 
All experiments were run on a machine with 2 E5-2690 v4 CPUs, 160 GB RAM and 500 GB disk with Ubuntu 18.04 LTS as the OS. Our code is implemented and tested in python 3.7. 
All experimental results were averaged over 1500 runs, except the case study.
Overall the experiments took about two month, including debugging and evaluation on all datasets.
Running \systemnameAPIShift{} once to draw a few thousand samples typically only takes a few seconds. 
Our implementation is purely in Python for demonstration purposes, and more code optimization (e.g., using cython or multi-thread) can generate a much faster implementation.

\paragraph{ML APIs and Dataset Statistics.}
We focus on three common classification tasks, namely, sentiment analysis, facial emotion recognition, and spoken command recognition. 
For each of the tasks, we evaluated three APIs' performance in spring 2020 and spring 2021, respectively, for four datasets. 
The details of datasets and ML APIs are summarized in Table \ref{tab:APIShift:DatasetStats} and Table \ref{tab:MLservice} respectively. 
Now we give more context of the datasets.

\begin{table}[t]
  \centering
  \small
  \caption{\small Dataset statistics.}
    \begin{tabular}{|c||c|c|c||c|c||c|}
    \hline
    Dataset & Size & \# Classes & Dataset & Size & \# Classes & Tasks \bigstrut\\
    \hline
     \hline
    FER+   & 6358  & 7     & RAFDB \cite{Dataset_FAFDB_li2017reliable} & 15339 & 7     & \multirow{2}[4]{*}{\textit{FER}} \bigstrut\\
\cline{1-6}    EXPW  & 31510 & 7     & AFFECTNET & 87401 & 7     & \bigstrut \\
    \hline
    YELP  & 20000  & 2     & SHOP  & 62774 & 2     & \multirow{2}[4]{*}{\textit{SA}} \bigstrut \\
\cline{1-6}    IMDB & 25000 & 2     & WAIMAI & 11987 & 2     &  \bigstrut\\
    \hline
    DIGIT  & 2000  & 10    & AUDIOMNIST  & 30000 & 10      & \multirow{2}[4]{*}{\textit{STT}} \bigstrut \\
\cline{1-6}  FLUENT  & 30043  & 31   &   COMMAND  & 64727 & 31   & \bigstrut\\
    \hline
    \end{tabular}%
  \label{tab:APIShift:DatasetStats}%
\end{table}%

\begin{table}[t]
	\centering
	\small
	\caption{\small ML services used for each task. Price unit: USD/10,000 queries. We consider three tasks, sentiment analysis (SA), facial emotion recognition (FER), } an spoken command recognition (SCR).
	\begin{tabular}{|c||c|c|c|c|c|c|}
		\hline
		Tasks  & ML service & Price & ML service & Price & ML service & Price \bigstrut \\
		\hline
		\hline
		\textit{SA} & Google NLP \cite{GoNLPAPI}& 2.5     & AMZN Comp \cite{AmazonAPI} & 0.75& Baidu NLP \cite{BaiduAPI} & 3.5  \bigstrut\\
		\hline
		\textit{FER} & Google Vision \cite{GoogleAPI} & 15    & MS  Face \cite{MicrosoftAPI}& 10    & Face++ \cite{FacePPAPI}& 5 \bigstrut\\
		\hline
		\textit{SCR} & Google Speech \cite{GoogleSpeechAPI}& 60    & MS Speech  \cite{MicrosoftSpeechAPI}& 41    & IBM Speech \cite{IBMAPI}& 25 \bigstrut\\
		\hline
	\end{tabular}%
	\label{tab:MLservice}%
\end{table}%

For sentiment analysis, we use four datasets, YELP, IMDB, SHOP, and WAIMAI. YELP and IMDB are both English text datasets. YELP \cite{Dataset_SEntiment_YELP} is generated by drawn twenty thousand samples from the large YELP review challenge dataset. Each original review is labeled by rating
in  \{1,2,3,4,5\}. We generate the binary label by transforming rating 1 and 2 into negative, and rating 4 and 5 into positive. Ten thousand positive reviews and ten thousand negative reviews are then randomly drawn, respectively. 
IMDB \cite{Dataset_SEntiment_IMDB_ACL_HLT2011} is a polarized sentiment analysis dataset with provided training and testing partitions.
We use its testing partition which has twenty-five thousand text paragraphs. SHOP~\cite{Dataset_SENTIMENT_SHOP} and WAIMAI~\cite{Dataset_SENTIMENT_WAIMAI} are two Chinese text datasets. SHOP contains polarized labels for reviews for various purchases including fruits,
hotels, computers. 
WAIMAI is a dataset for polarized delivery reviews. 
Both SHOP and WAIMAI are publicly available without licence requirements.
There is a dataset user agreement for YELP dataset, which disallows commercial usage of the datasets but encourages academic study. 
Same thing applies to the IMDB dataset.

For facial emotion recognition, we use four datasets: FER+, RAFDB, EXPW, and AFNET. 
All the datasets are annotated by the standard seven basic emotions, i.e., \{anger, disgust, fear, happy, sad, surprise, neutral\}. The images in FER+ \cite{Dataset_FER2013} are from the ICML 2013 Workshop on Challenges in Representation.
We use the provided testing portion in FER+. RAFDB \cite{Dataset_FAFDB_li2017reliable} and AFFECTNET \cite{Dataset_AFFECTNET_MollahosseiniHM19} were annotated with both basic emotions and fine-grained labels. In this paper, we only use  basic emotions since commercial APIs cannot
work for compound emotions. EXPW \cite{Dataset_EXPW_SOCIALRELATION_ICCV2015} contains raw images and bound boxes pointing out the face locations. Here we use the true bounding box associated with the dataset to create
aligned faces first, and only pick the images that are faces with confidence larger than 0.6.
We cotnacted the creators of RAFDB and AFNET to obtain the data access for academic purposes. 
FER+ and EXPW are both publicly available online without consent or licence requirements.

For spoken command recognition, we use DIGIT, AMNIST, CMD, 
and FLUENT. DIGIT \cite{Dataset_Speech_DIGIT} and AMNIST~\cite{Dataset_Speech_AudioMNIST_becker2018interpreting} are spoken digit datasets, where the label is is a spoken digit (i.e., 0-9). The sampling rate is 8 kHz for DIGIT and 48 kHz for AMNIST. Each sample in CMD~\cite{Dataset_Speech_GoogleCommand} is a spoken command such as “go”, “left”, “right”, “up”, and “down”, with a sampling rate of 16 kHz. 
In total, there are 30 commands and a few white noise utterances. FLUENT~\cite{Dataset_Speech_Fluent_LugoschRITB19} is another recently developed dataset for speech command. 
The commands in FLUENT are typically a phrase (e.g., “turn on the light” or “turn down the music”). There are in total 248 possible phrases, which are mapped to 31 unique labels. 
The sampling rate is also 16 kHz.
All those datasets are freely available online for academic purposes. 

Some of the datasets may contain personal information. For example, the human faces contained in the facial emotion recognition dataset may be deemed as personal information.
On the other hand, our study focuses on whether there is a performance change on the dataset, and does not use or disclose any personal information. 
\begin{figure}[t]
	\centering
	\begin{subfigure}[Amazon IMDB 2020]{\includegraphics[width=0.32\linewidth]{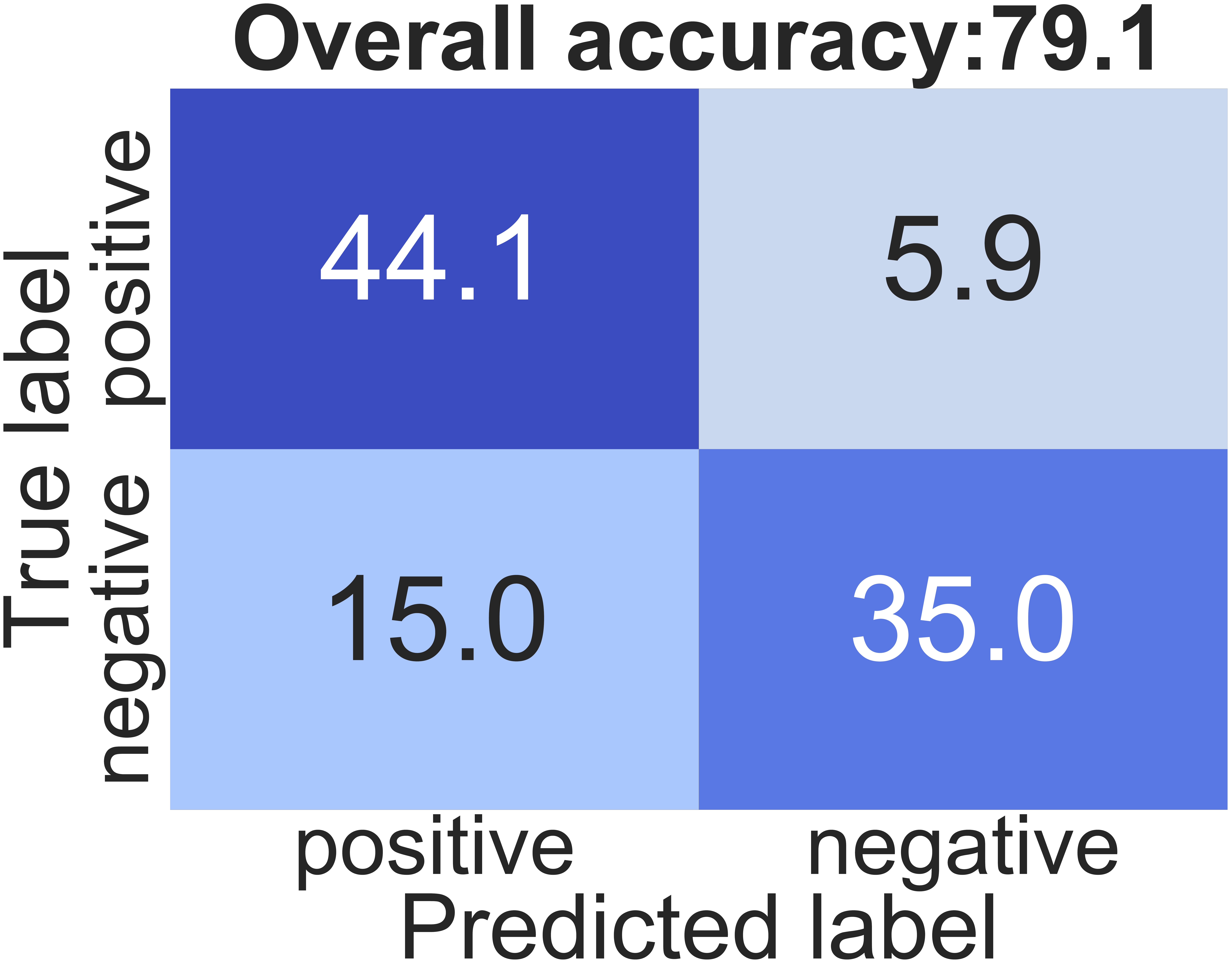}}
    \end{subfigure}
	\begin{subfigure}[Amazon IMDB 2021]{\includegraphics[width=0.32\linewidth]{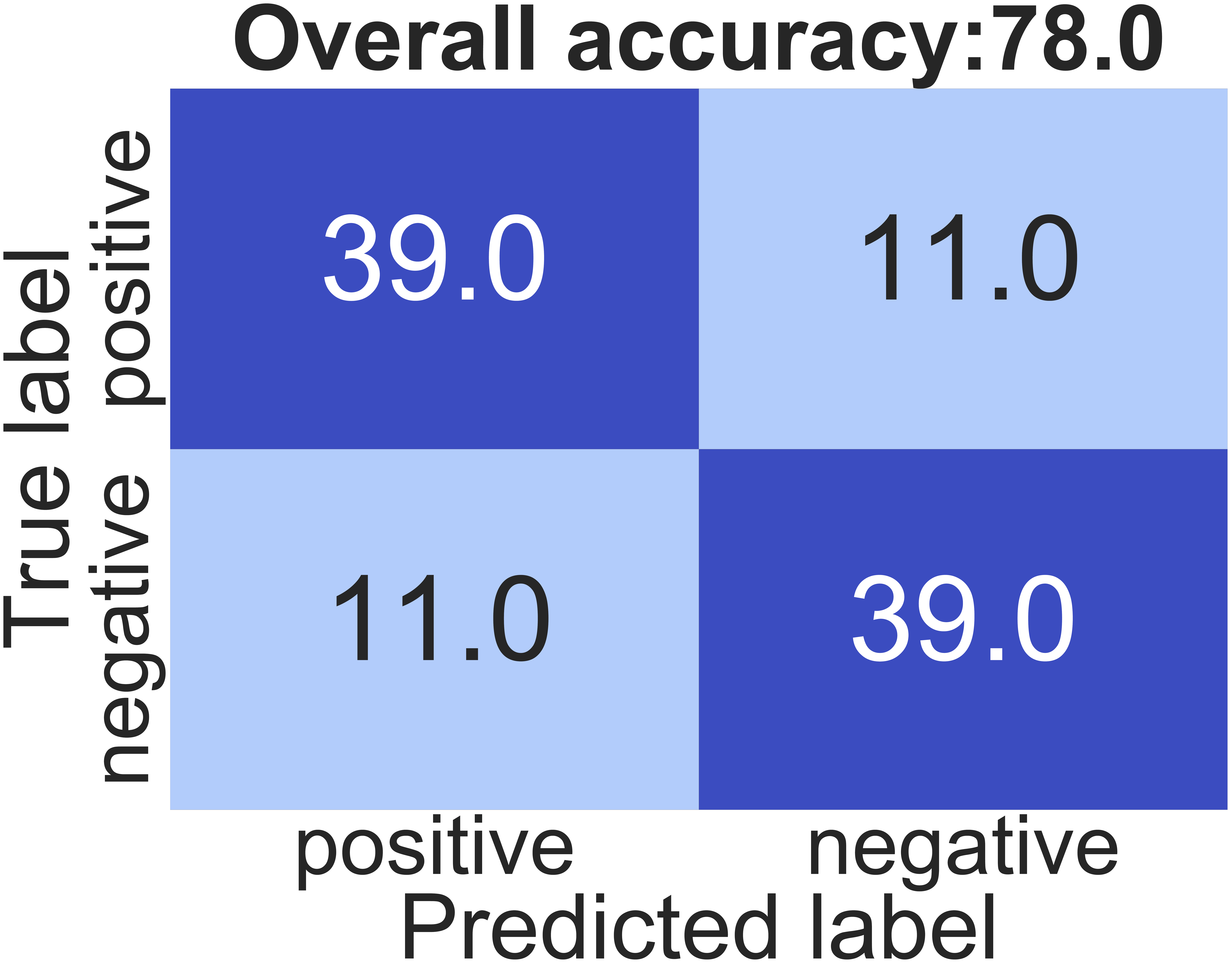}}
    \end{subfigure}
	\begin{subfigure}[Amazon IMDB 2021]{\includegraphics[width=0.32\linewidth]{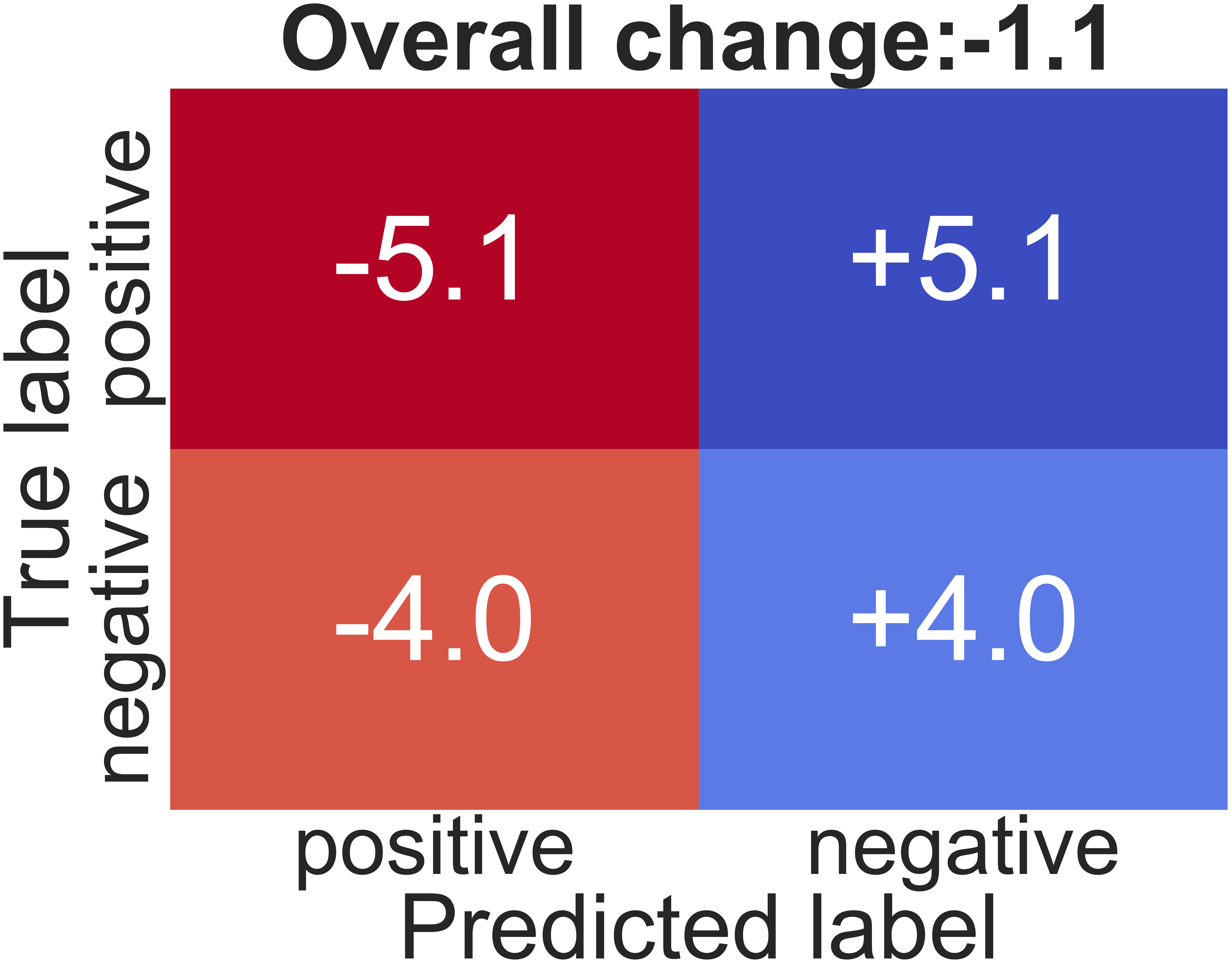}}
    \end{subfigure}
    
	\begin{subfigure}[Microsoft FER+ 2020]{\includegraphics[width=0.32\linewidth]{Figure/figures/Shifts/FERPLUSMicrosoft2020_TrueCM.pdf}}
    \end{subfigure}
	\begin{subfigure}[Microsoft FER+ 2021]{\includegraphics[width=0.32\linewidth]{Figure/figures/Shifts/FERPLUSMicrosoft2021_TrueCM.pdf}}
    \end{subfigure}
	\begin{subfigure}[Microsoft FER+ API shift]{\includegraphics[width=0.32\linewidth]{Figure/figures/Shifts/FERPLUSMicrosoft2020_APIShift.pdf}}
    \end{subfigure}
    
	\begin{subfigure}[Google DIGIT 2020]{\includegraphics[width=0.32\linewidth]{Figure/figures/Shifts/DIGITGoogle2020_TrueCM.pdf}}
    \end{subfigure}
	\begin{subfigure}[Google DIGIT 2021]{\includegraphics[width=0.32\linewidth]{Figure/figures/Shifts/DIGITGoogle2021_TrueCM.pdf}}
    \end{subfigure}
	\begin{subfigure}[Google DIGIT 2021]{\includegraphics[width=0.32\linewidth]{Figure/figures/Shifts/DIGITGoogle2020_APIShift.pdf}}
    \end{subfigure}

  \caption{Confusion matrices of a few APIs in spring 2020/2021, along with their API shifts.}\label{fig:FAMEShift:APIShiftDetail}
\end{figure}

For sentiment analysis, we use the Google NLP API~\cite{GoNLPAPI}, Amazon Comprehend API~\cite{AmazonAPI}, and the Baidu NLP API~\cite{BaiduAPI}.
For facial emotion recognition, we use Google Vision API~\cite{GoogleAPI}, Microsoft Face API~\cite{MicrosoftAPI}, and the Face++ API~\cite{FacePPAPI}.
For spoken command recognition, we adopt Google speech API~\cite{GoogleSpeechAPI}, Microsoft Speech API~\cite{MicrosoftSpeechAPI}, and IBM speech API~\cite{IBMAPI}.

\paragraph{Details of observed ML API Shifts. }

Now we present a few more observed ML API shifts, as shown in Figure \ref{fig:FAMEShift:APIShiftDetail}. 
One observation is that individual entry's change in the API shift can be larger than the overall accuracy's. For example, as shown in Figure \ref{fig:FAMEShift:APIShiftDetail} (c), the overall accuracy change is about -1.1\% for Amazon on IDMB, but the performance drop for positive texts is as large as 5\%. 
This indicates the importance of using fine-grained confusion matrix difference to measure API shifts.
In addition, when the overall accuracy increases, it is possible that the accuracy for each label has been improved. This can be easily verified by Figure \ref{fig:FAMEShift:APIShiftDetail} (d-f).
On the other hand, as shown in Figure \ref{fig:FAMEShift:APIShiftDetail} (g-i), Google API's large accuracy improvement (24\%) is mostly because it is able to correctly predict many samples that were previously deemed as empty. One possible explanation is that Google API internally uses a higher threshold to generate a recognition.  
When the number of label increases, it might become hard to manually check the API shifts. 
For those cases, an anomaly detector can be applied to quickly identify the most surprising components in the API shifts.
\paragraph{Partition size's effects on \systemnameAPIShift{}.}

\begin{figure}[t]
	\centering
	\begin{subfigure}[Amazon YELP]{\includegraphics[width=0.24\linewidth]{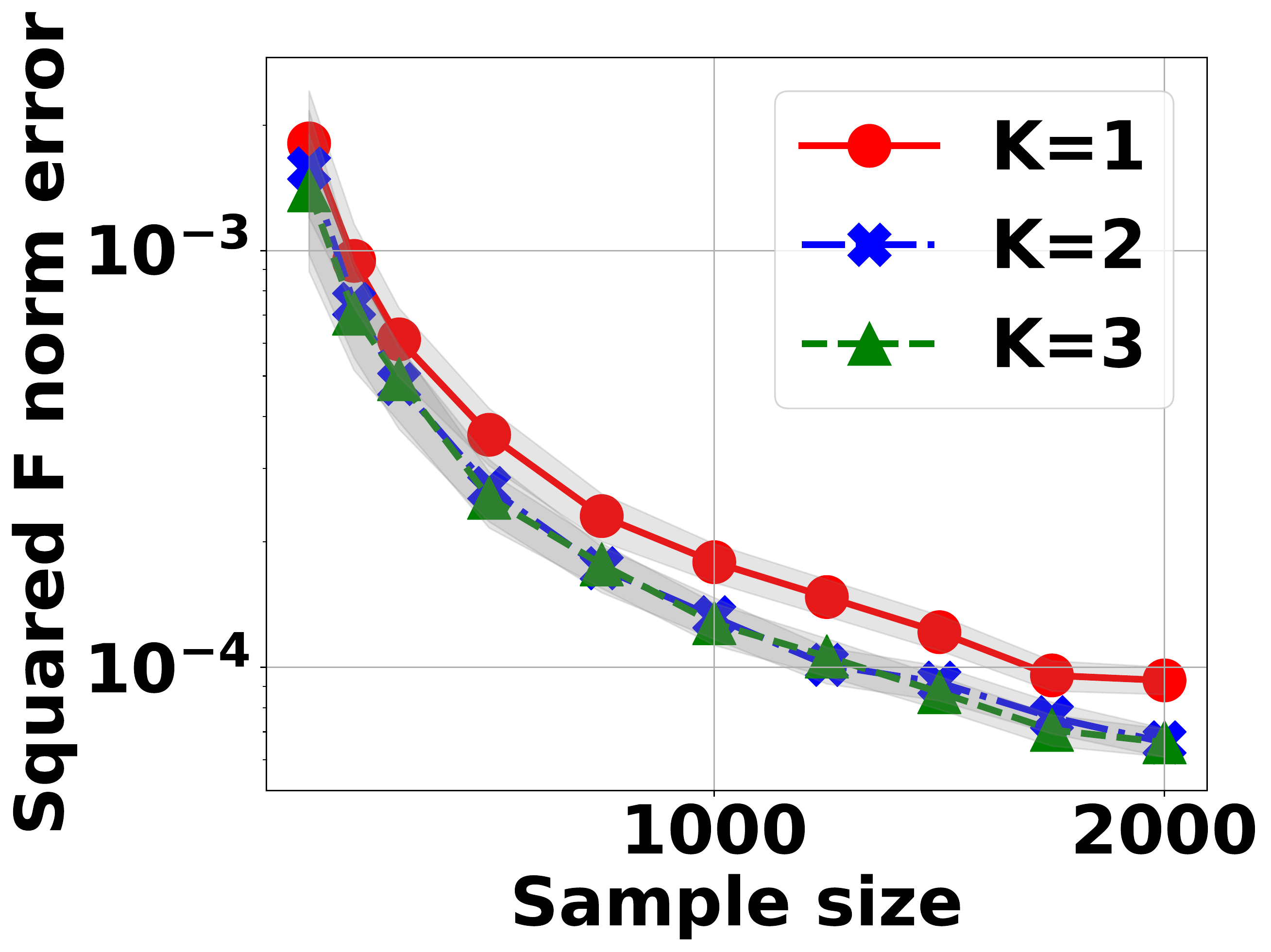}}
    \end{subfigure}
	\begin{subfigure}[Amazon SHOP]{\includegraphics[width=0.23\linewidth]{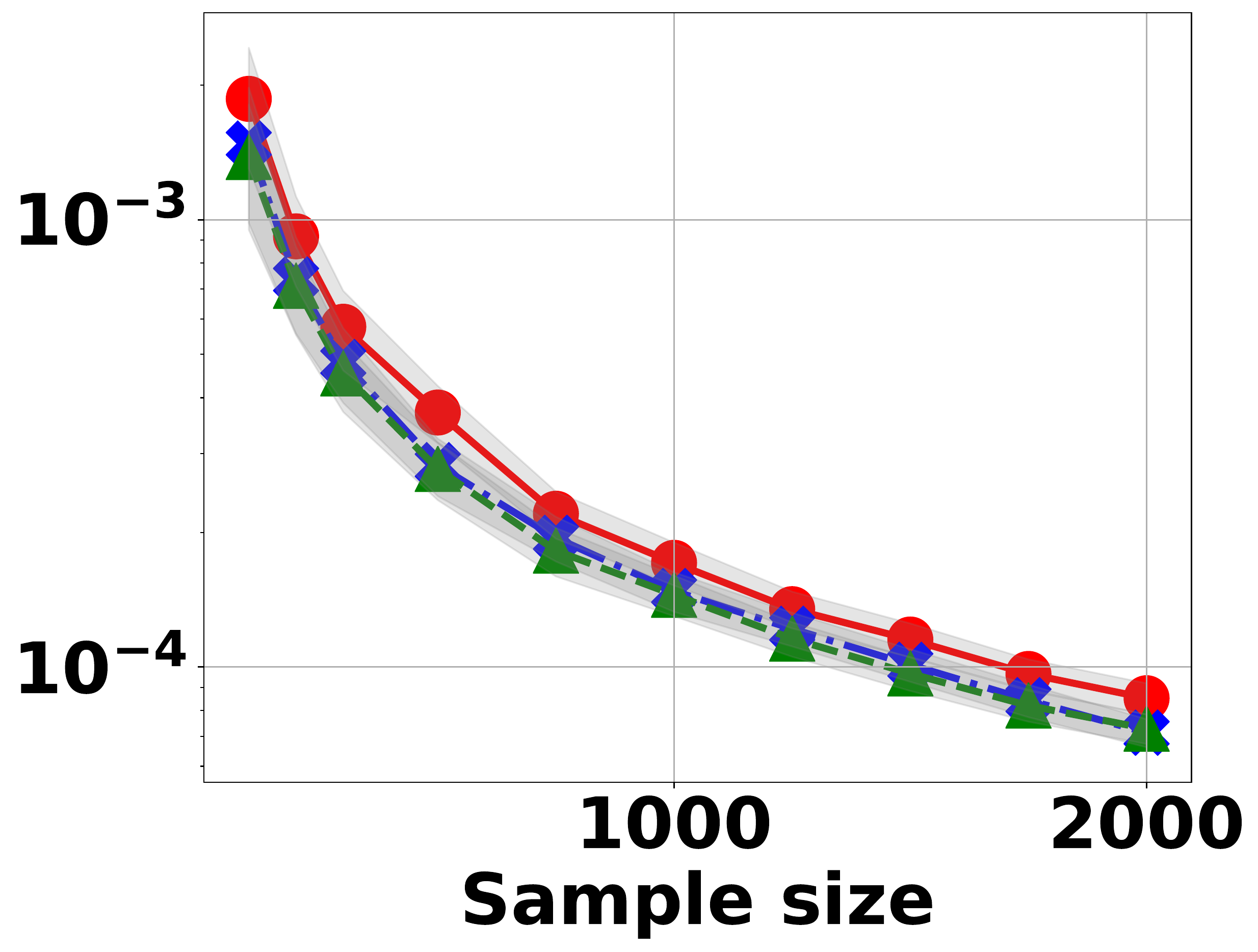}}
    \end{subfigure}
	\begin{subfigure}[Amazon IMDB]{\includegraphics[width=0.23\linewidth]{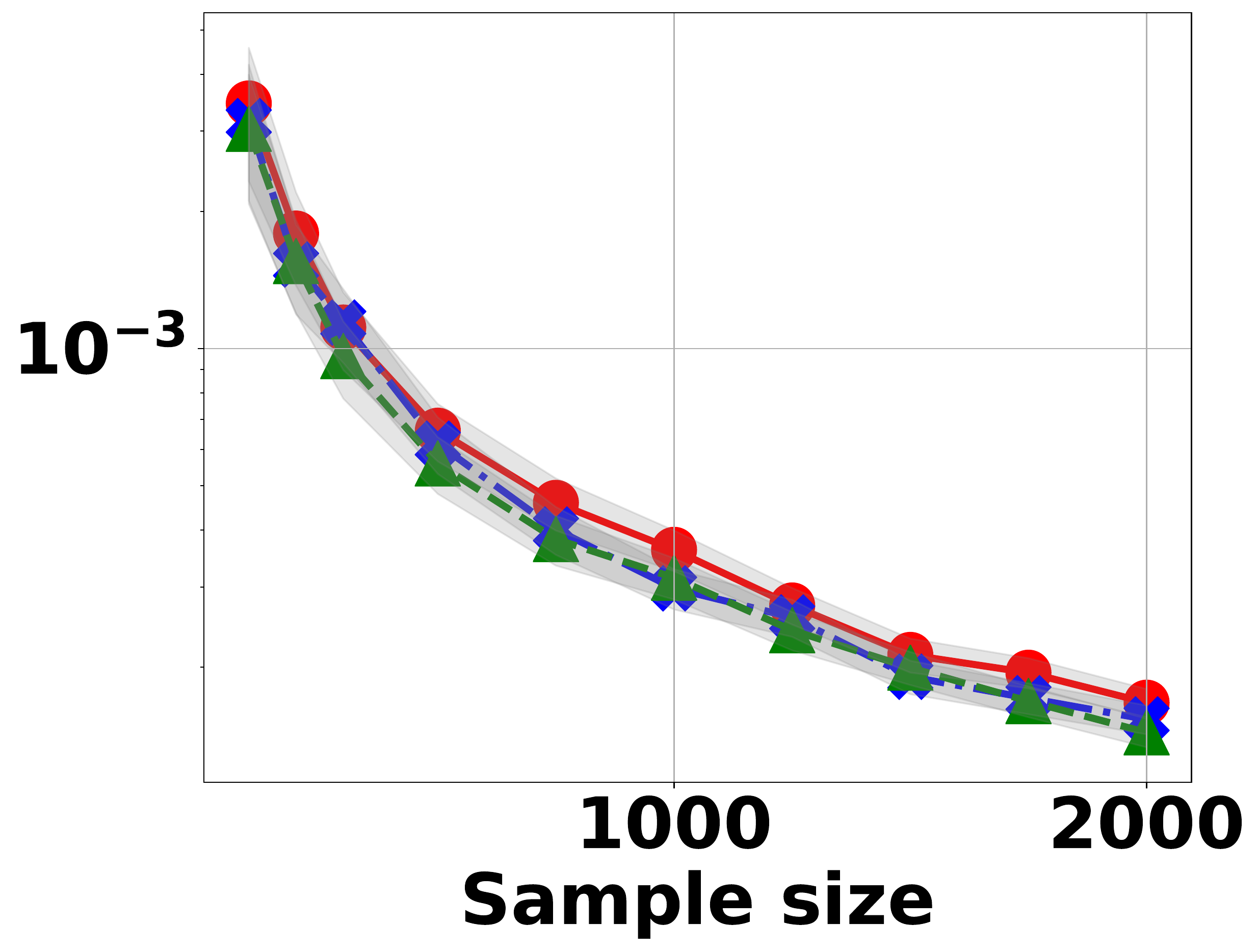}}
    \end{subfigure}
	\begin{subfigure}[Amazon WAIMAI]{\includegraphics[width=0.23\linewidth]{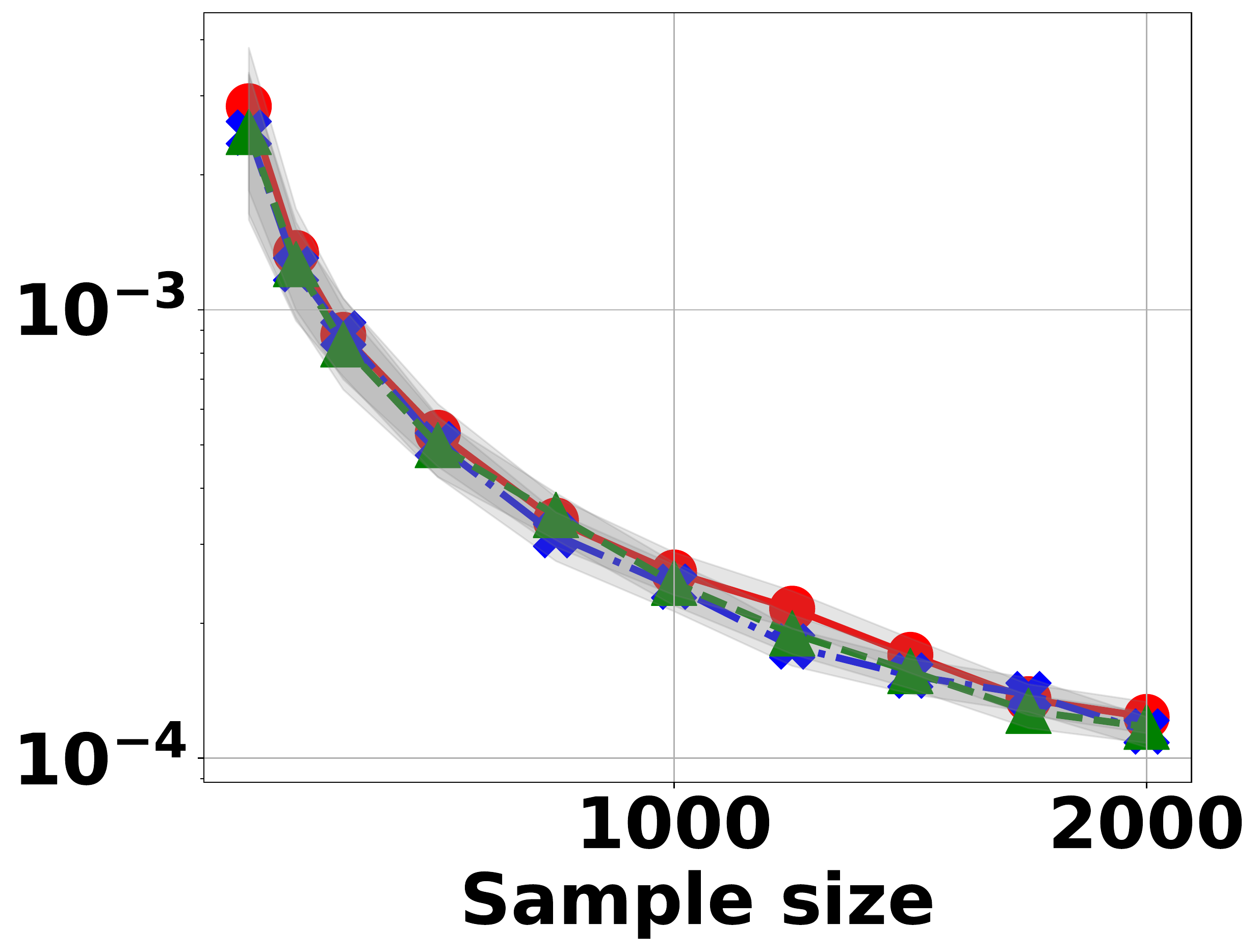}}
    \end{subfigure}

	\begin{subfigure}[Microsoft FER+]{\includegraphics[width=0.23\linewidth]{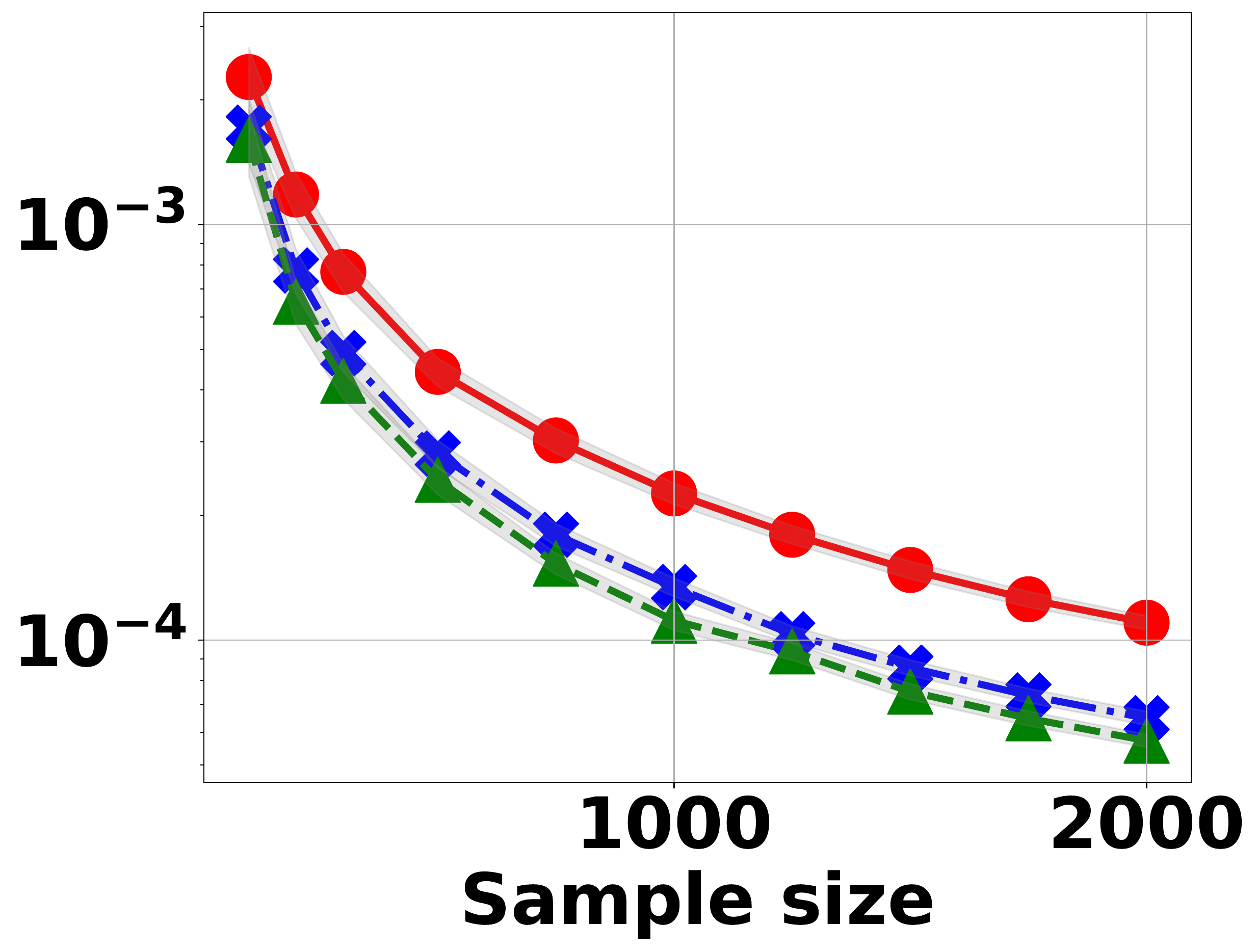}}
    \end{subfigure}
	\begin{subfigure}[Google EXPW+]{\includegraphics[width=0.23\linewidth]{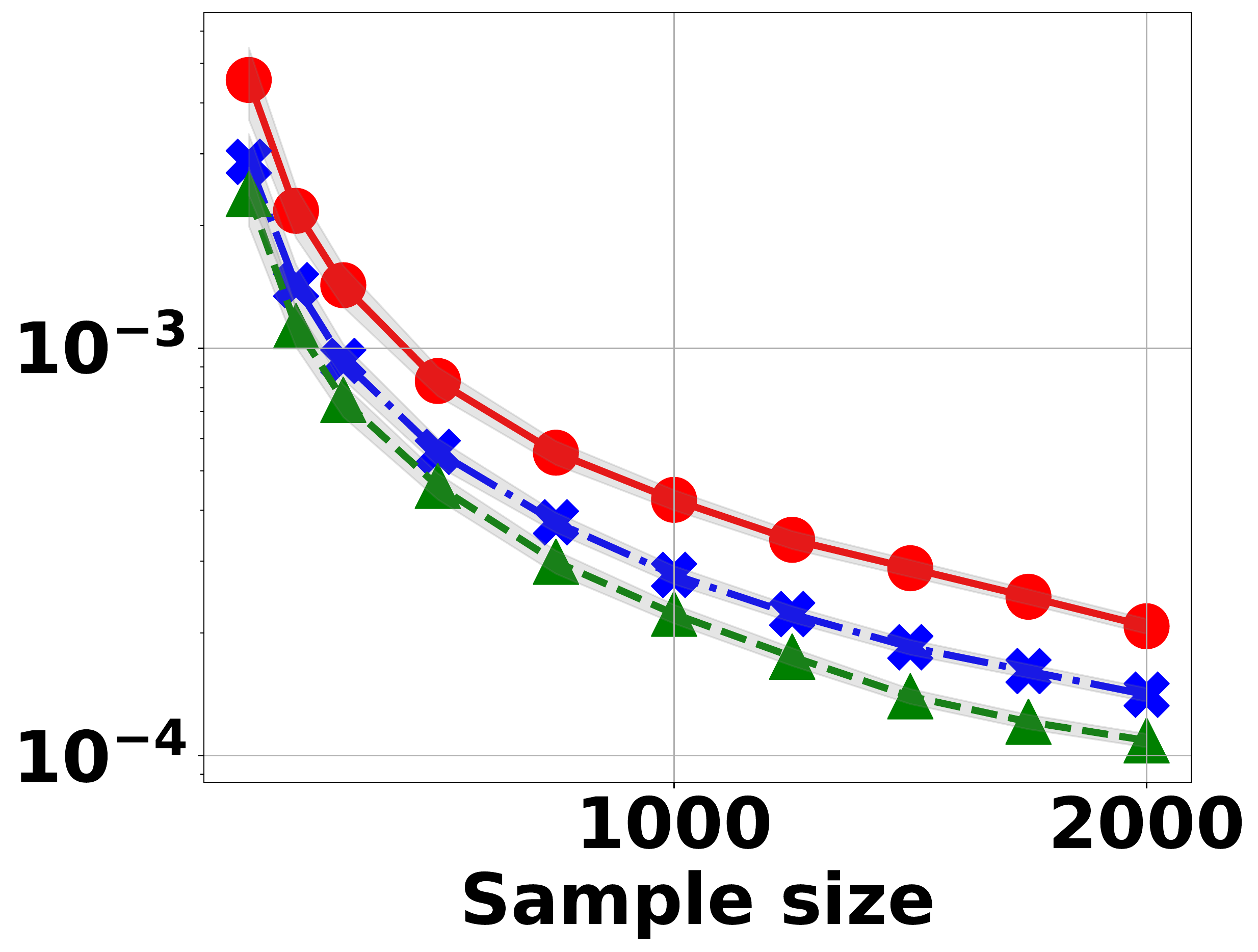}}
    \end{subfigure}
	\begin{subfigure}[IBM DIGIT ]{\includegraphics[width=0.23\linewidth]{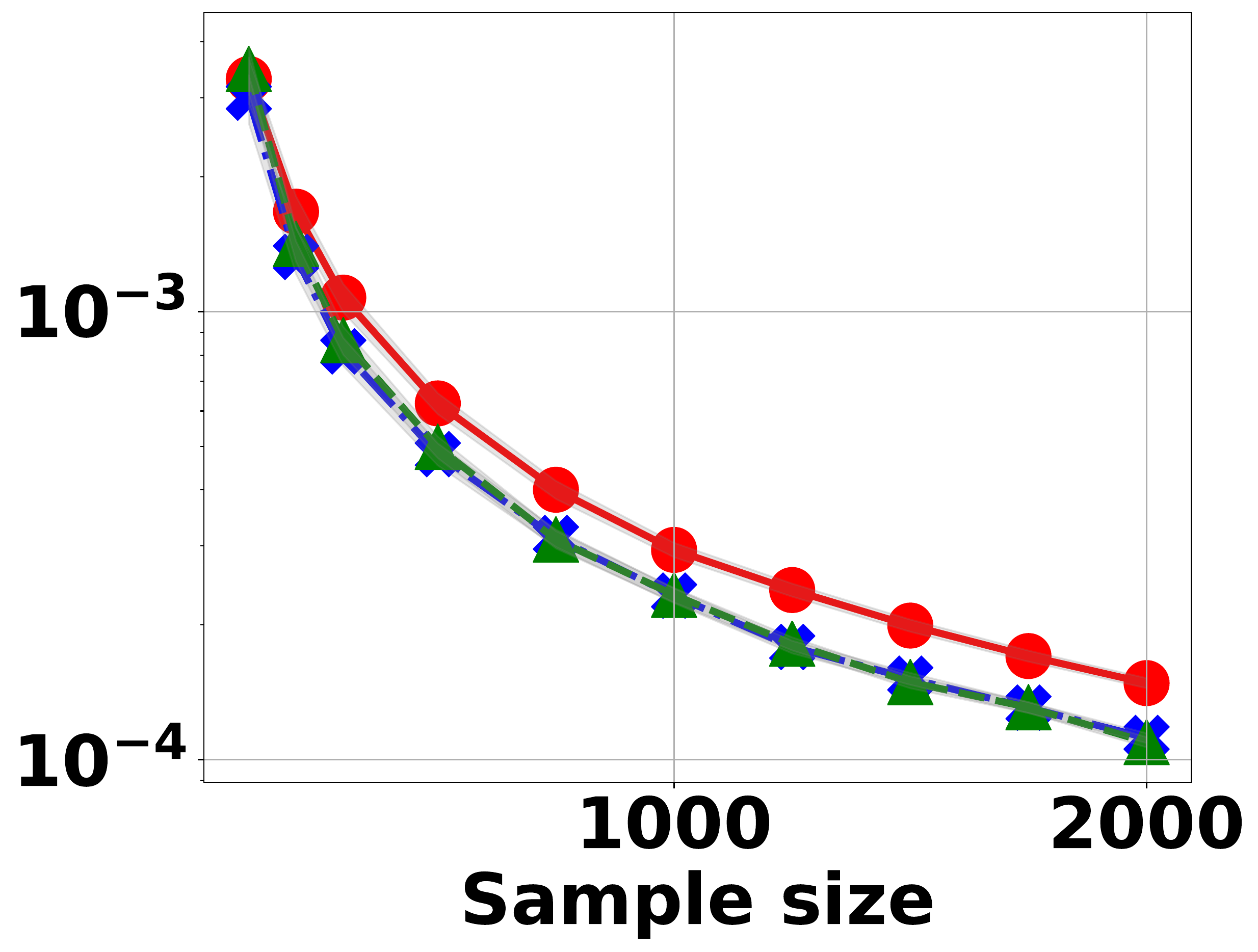}}
    \end{subfigure}
	\begin{subfigure}[IBM AMNIST]{\includegraphics[width=0.23\linewidth]{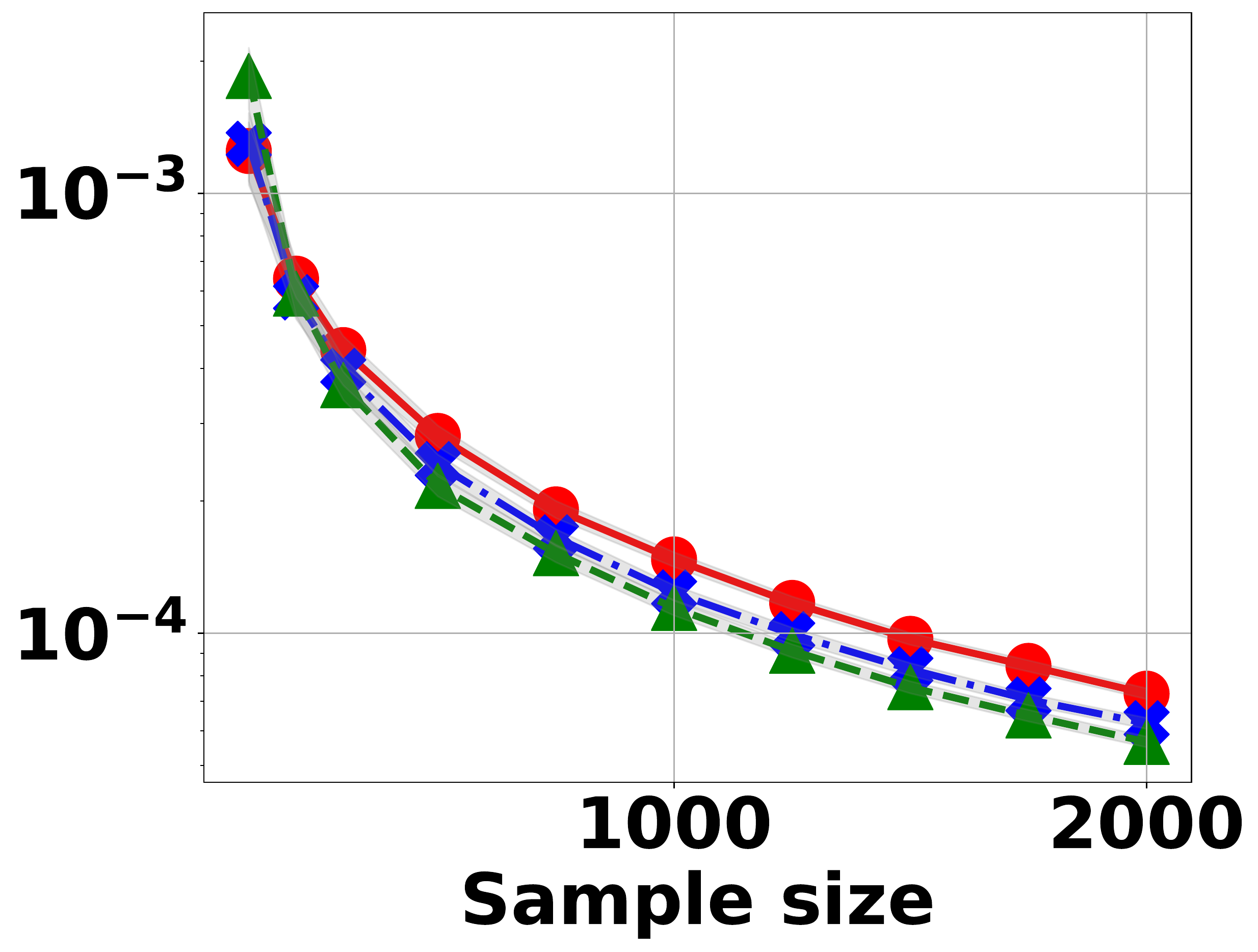}}
    \end{subfigure}    
  
	\begin{subfigure}[Google DIGIT ]{\includegraphics[width=0.23\linewidth]{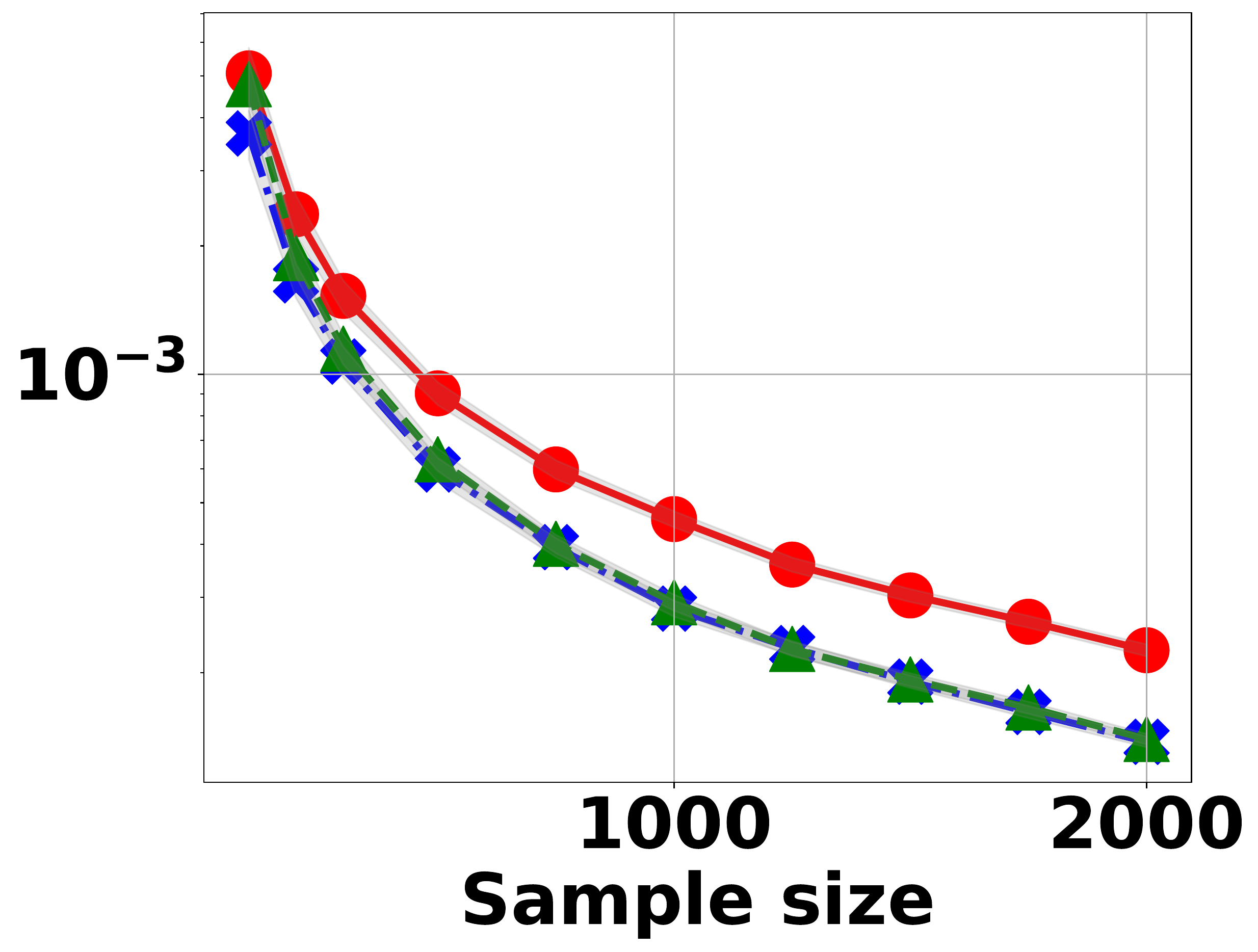}}
    \end{subfigure}
	\begin{subfigure}[Google AMINST ]{\includegraphics[width=0.23\linewidth]{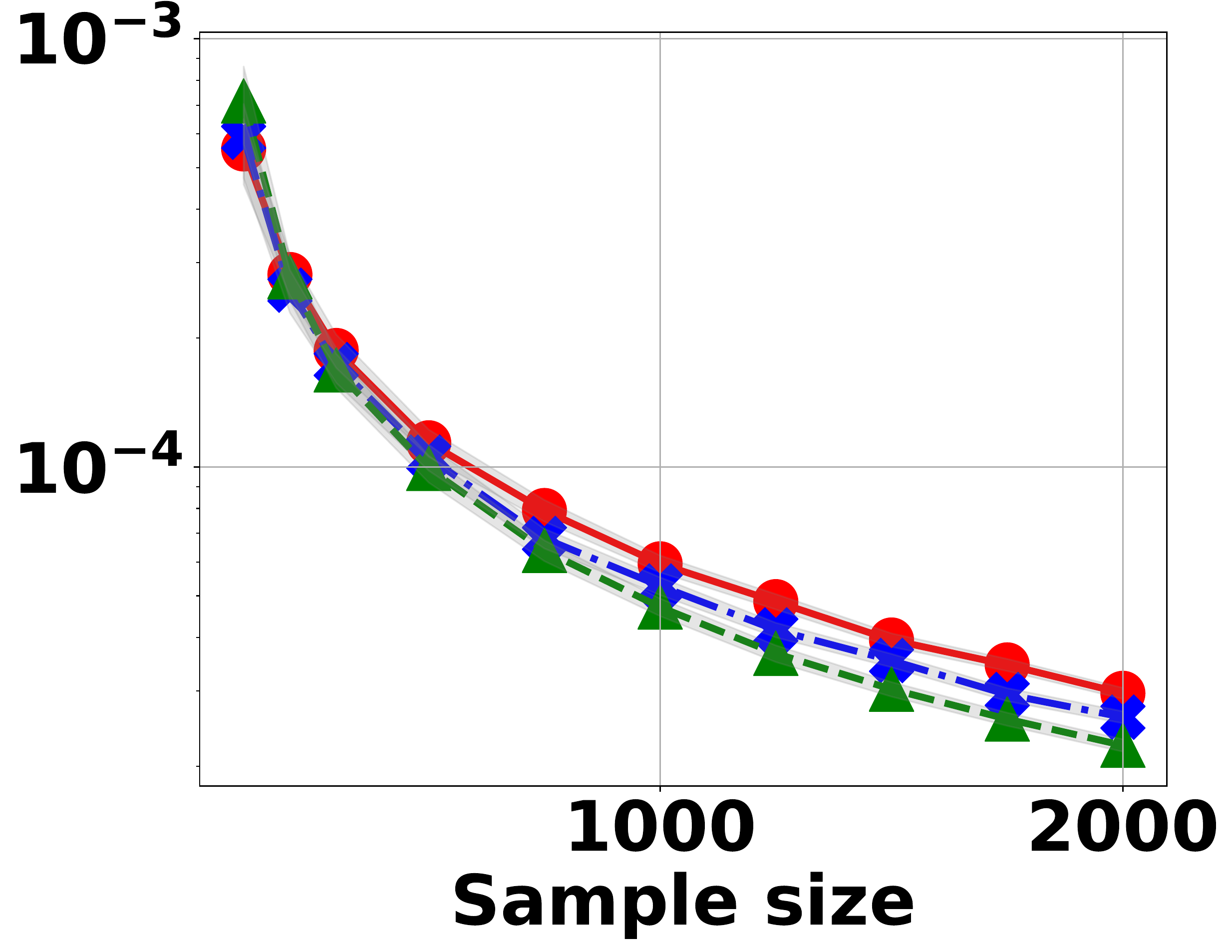}}
    \end{subfigure}
	\begin{subfigure}[Google CMD ]{\includegraphics[width=0.23\linewidth]{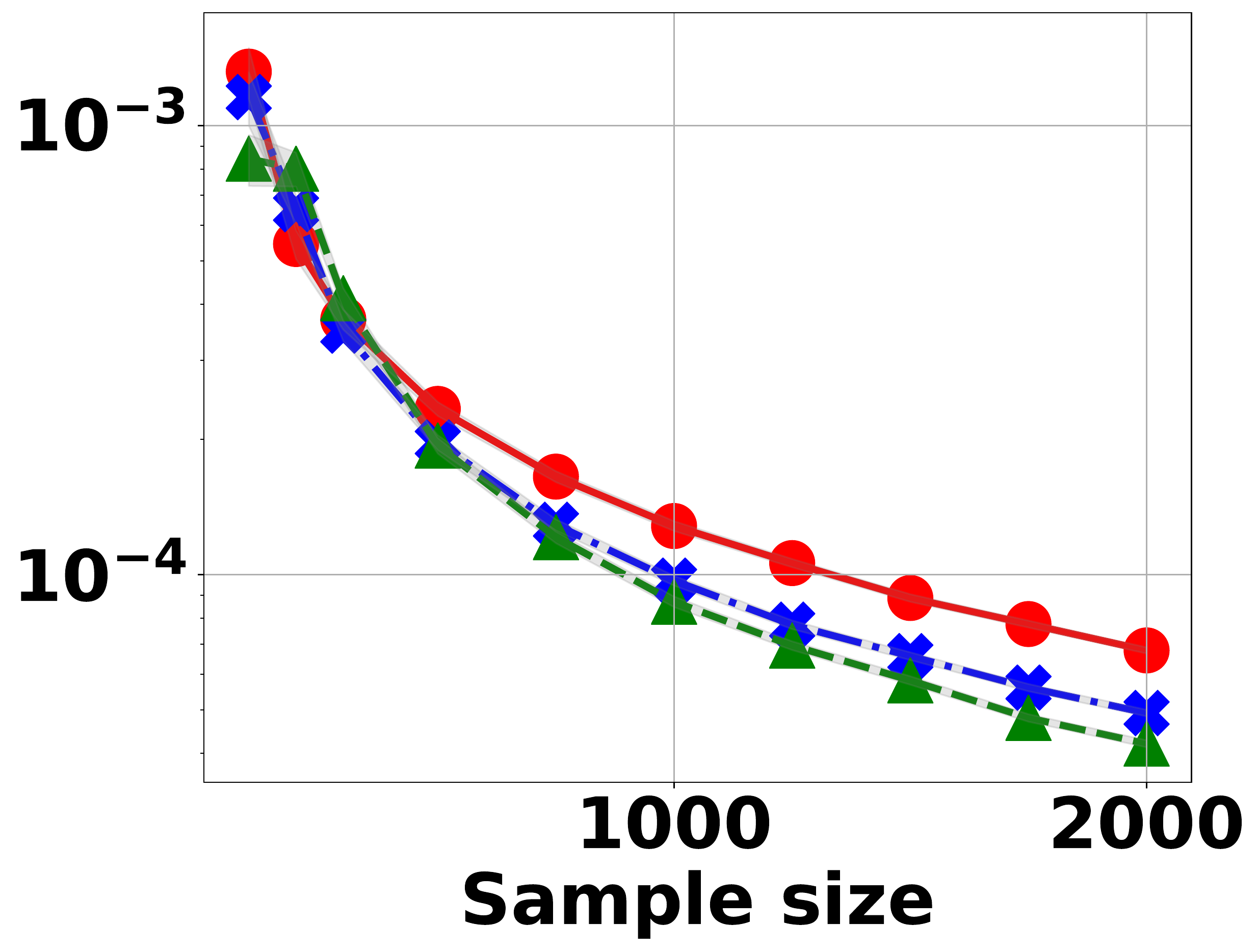}}
    \end{subfigure}
	\begin{subfigure}[Microsoft DIGIT ]{\includegraphics[width=0.23\linewidth]{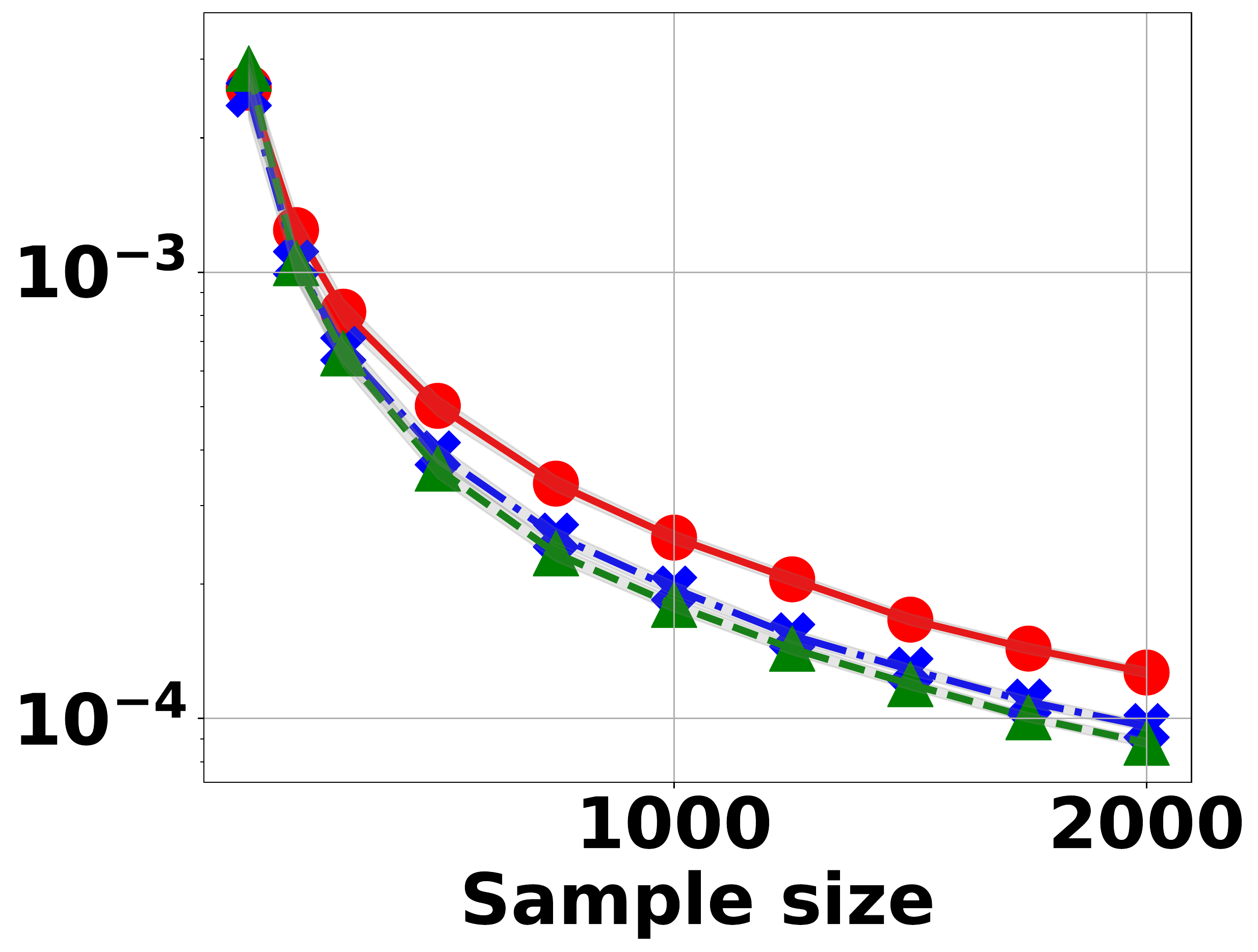}}
    \end{subfigure}
    
  \caption{Effects of partition parameter $K$. The total number of partitions is $LK$, and thus Larger $K$ implies more partitions. Generally, across 12 cases where API shifts are identified, larger number of partitions usually leads to smaller estimation error for large samples. In practice, we observe that $K=3$ is enough to reach good error rate.}\label{fig:FAMEShift:keffects}
\end{figure}

Finally we study how the partition number affects the performance of \systemnameAPIShift{}, as shown in Figure \ref{fig:FAMEShift:keffects}.
Across all API shifts we estimated, we note that larger number of partitions leads to a smaller overall Frobenious norm in general. This is expected, as larger $K$ effectively introduces more parameters to estimate and thus is more powerful. 
The trade-off is that the computational cost increases, and more samples are needed for initial estimation. 
Interestingly, as $K$ becomes large, the relative error reduction improvement becomes small. 
This is probably because there is no strong uncertainty difference within small partitions.
In practice, we found that $K=3$ already gives a small enough error reduction.

\end{document}